\newcommand{\BEAS}{\begin{eqnarray*}}
\newcommand{\EEAS}{\end{eqnarray*}}
\newcommand{\BEA}{\begin{eqnarray}}
\newcommand{\EEA}{\end{eqnarray}}
\newcommand{\BEQ}{\begin{equation}}
\newcommand{\EEQ}{\end{equation}}
\newcommand{\BIT}{\begin{itemize}}
\newcommand{\EIT}{\end{itemize}}
\newcommand{\BNUM}{\begin{enumerate}}
\newcommand{\ENUM}{\end{enumerate}}
\newcommand{\BA}{\begin{array}}
\newcommand{\EA}{\end{array}}
\newcommand{\rb}{\mathbb{R}}
\newcommand{\BlackBox}{\rule{1.5ex}{1.5ex}}  
\newenvironment{proof}{\par\noindent{\bf Proof\ }}{\hfill\BlackBox\\[2mm]}
\newtheorem{lemma}{Lemma}
\newtheorem{theorem}{Theorem}
\newtheorem{proposition}{Proposition}
\newtheorem{definition}{Definition}
\newcommand{\mysec}[1]{Section~\ref{sec:#1}}
\newcommand{\eq}[1]{Eq.~(\ref{eq:#1})}
\newcommand{\myfig}[1]{Figure~\ref{fig:#1}}
\def \E{{\mathbb E}}
\def \E{{\mathbb E}}
\def \X{{\mathcal X}}
\title{Submodular Functions: from Discrete to Continous Domains}
\author{Francis Bach  \\
INRIA - Sierra project-team\\
D\'epartement d'Informatique de l'Ecole Normale Sup\'erieure \\
Paris, France \\
\texttt{francis.bach@ens.fr}}
\date{\today}
\begin{document}

\maketitle

\begin{abstract}
Submodular set-functions have many applications in combinatorial optimization, as they can be minimized  and approximately maximized in polynomial time. A key element in many of the algorithms and analyses is the possibility of extending the submodular set-function to a convex function, which opens up tools from convex optimization. Submodularity goes beyond set-functions and has naturally been considered for problems with multiple labels or for functions defined on continuous domains, where it corresponds essentially to cross second-derivatives being nonpositive. In this paper, we show that most results relating submodularity and convexity for set-functions can be extended to all submodular functions. In particular, (a) we naturally define a continuous extension in a set of probability measures, (b) show that the extension is convex if and only if the original function is submodular, (c) prove that the problem of minimizing a submodular function is equivalent to a typically non-smooth convex optimization problem, and (d) propose another convex optimization problem with better computational properties (e.g., a smooth dual problem). Most of these extensions from the set-function situation are obtained by drawing links with the theory of multi-marginal optimal transport, which provides also a new interpretation of existing results for set-functions.
We then provide practical algorithms to minimize generic submodular functions on discrete domains, with associated convergence rates.
\end{abstract}

\section{Introduction}

Submodularity has emerged as an important concept in combinatorial optimization, akin to convexity in continuous optimization, with many applications in machine learning, computer vision or signal processing~\cite{boykov2001fast,krause11submodularity,lin2011-class-submod-sum,fot_submod}. Most of the literature on submodular functions focuses on \emph{set-functions}, i.e., functions defined on the set of subsets of a given base set. Such functions are classically equivalently obtained as functions defined on the vertices of the hypercube $\{0,1\}^n$, if $n$ is the cardinality of the base set. Throughout the paper, we will make this identification and refer to \emph{set-functions} as functions defined on $\{0,1\}^n$.

Like convex functions, submodular set-functions can be minimized exactly in polynomial time, either through combinatorial algorithms akin to max-flow algorithms~\cite{schrijver2000combinatorial,iwata2001combinatorial,orlin2009faster}, or algorithms based on a convex optimization techniques~\cite[Section 10]{fot_submod}. Unlike convex functions, submodular set-functions can also be \emph{maximized} approximately in polynomial time with simple greedy algorithms that come with approximation guarantees~\cite{nemhauser1978analysis,feige2007maximizing}.

In this paper, we focus primarily on submodular function minimization and links with convexity. In the set-function situation, it is known that submodular function minimization is equivalent to a convex optimization problem, which is obtained by considering a \emph{continuous extension} of the submodular function, from vertices of the hypercube $\{0,1\}^n$ to the full hypercube $[0,1]^n$. This extension, usually referred to as the Choquet integral~\cite{choquet1953theory} or the Lov\'asz extension~\cite{lovasz1982submodular}, is convex if and only if the original set-function is submodular. Moreover, when the set-function is submodular, minimizing the original set-function or the convex extension is equivalent. Finally, simple and efficient algorithms based on generic convex optimization algorithms may be used for minimization~(see, e.g.,~\cite{fot_submod}).

The main goal of this paper is to show that all of these results naturally extend to all submodular functions defined more generally on subsets of $\rb^n$.
 In this paper, we focus on functions defined on subsets of $\rb^n$ of the form $\X = \prod_{i=1}^n \X_i$, where each $\X_i   $ is a \emph{compact} subset of $\rb$. A function $H: \X \to \rb$, is then submodular if and only if 
for all $(x,y) \in \X \times \X$,  
$$H(x)+H(y) \geqslant H( \max\{x,y\}) + H(\min \{x,y\}),
$$
where the $\min$ and $\max$ operations are applied component-wise. This extended notion of submodularity has been thoroughly studied~\cite{lorentz1953inequality,topkis1978minimizing}, in particular in economics~\cite{milgrom1994monotone}.

\paragraph{Finite sets.}
Some of the results on submodular set-functions (which correspond to $\X_i = \{0,1\}$ for all $i \in \{1,\dots,n\}$) have already been extended, such as the possibility of minimizing discrete functions (i.e., when all $\X_i$'s are finite) in polynomial time. This is done usually by a reduction to the problem of minimizing a submodular function on a ring family~\cite{schrijver2000combinatorial}. Moreover, particular examples, such as certain cuts with ordered labels~\cite{ishikawa2003exact,pock2009convex,bae2014fast} lead to min-cut/max-flow reformulations with efficient algorithms. Finally, another special case corresponds to functions defined as sums of local functions, where it is known that the usual linear programming relaxations are tight for  submodular functions (see~\cite{Werner-PAMI07,zwp14:mit} and references therein). Moreover, some of these results extend to continuous Markov random fields~\cite{wald2014tightness,ruozzi}, but those results depend primarily on the decomposable structure of graphical models, while the focus of our paper is independent of decomposability of functions in several factors.

\paragraph{Infinite sets.}
While finite sets already lead to interesting applications in computer vision~\cite{ishikawa2003exact,pock2009convex,bae2014fast}, functions defined on products of sub-intervals of $\rb$ are particularly interesting. Indeed, when twice-differentiable, a function is submodular if and only if all cross-second-order derivatives are non-positive, i.e., for all $x \in \X$:
$$
\frac{\partial^2 H}{\partial x_i \partial x_j}(x) \leqslant 0.
$$
In this paper, we provide simple algorithms based solely on function evaluations to minimize all of these functions. This thus opens up a new set of ``simple'' functions that can be efficiently minimized, which neither is included nor includes convex functions, with potentially many interesting theoretical or algorithmic developments.

In this paper, we make the following contributions, all of them are extensions of the set-function case:
\BIT
\item[--] We propose in \mysec{ext} a continuous extension in a set of probability measures and  show in \mysec{extension} that it is is convex if and only if the function is submodular and  that for submodular functions, minimizing the extension, which is a convex optimization problem, is equivalent to minimizing the original function. This is made by drawing links with the theory of optimal transport, which provides simple intuitive proofs (even for submodular set-functions).

\item[--] We show in \mysec{sep} that minimizing the extension plus a well-defined separable convex function is equivalent to minimizing a series of submodular functions. This may be useful algorithmically because the resulting optimization problem is strongly convex and thus may be easier to solve using duality with Frank-Wolfe methods~\cite{bach2015duality}.

\item[--] For finite sets, we show in  \mysec{discrete} a direct link with existing notions for submodular set-functions, such as the base polytope. In the general situation, two polyhedra naturally emerge (instead of one). Moreover, the greedy algorithm to maximize linear functions on these polyhedra are also extended.

\item[--] For finite sets, we provide in \mysec{algo} two sets of algorithms for minimizing submodular functions, one based on a non-smooth optimization problem on a set of measures (projected subgradient descent), and one based on smooth functions and ``Frank-Wolfe'' methods~(see, e.g., \cite{jaggi,bach2015duality} and references therein). They can be readily applied to all situations by discretizing the sets if continuous. We provide in \mysec{experiments} a simple experiment on a one-dimensional signal denoising problem. 

\EIT
 
 In this paper, we assume basic knowledge of convex analysis (see, e.g.,~\cite{boyd,borwein2006caa}), while the relevant notions of  submodular analysis (see, e.g.,~\cite{fujishige2005submodular,fot_submod}) and optimal transport (see, e.g.,~\cite{villani2008optimal,santambrogio2015optimal}) will be rederived as needed.

\section{Submodular functions}
Throughout this paper, we consider a \emph{continuous function} $H: \X = \prod_{i=1}^n \X_i \to \rb$, defined on the product of $n$ \emph{compact} subsets $\X_i$ of $\rb$, and thus equipped with a \emph{total order}. Typically, $\X_i$ will be a finite set such as $\{0,\dots,k_i-1\}$, where the notion of continuity is vacuous, or an interval (which we refer to as a \emph{continuous domain}).

\subsection{Definition}

The function $H$ is said to be \emph{submodular} if and only if~\cite{lorentz1953inequality,topkis1978minimizing}:
\BEQ
\label{eq:submoddef}
\forall (x,y) \in \X \times \X, \ H(x)+H(y) \geqslant H( \min\{x,y\}) + H(\max \{x,y\}),
\EEQ
where the $\min$ and $\max$ operations are applied component-wise. An important aspect of submodular functions is that the results we present in this paper only rely on considering sets $\X_i$ with a \emph{total order}, from $\{0,1\}$ (where this notion is not striking) to sub-intervals of $\rb$. For submodular functions defined on more general lattices, see~\cite{fujishige2005submodular,topkis2011supermodularity}.

 Like for set-functions, an equivalent definition is that for any $x \in \X$ and (different) basis vectors $e_i,e_j$ and $a_i,a_j \in \rb_+$ such that $x_i + a_i \in \X_i$ and $x_j + a_j \in \X_j$, then
\BEQ
\label{eq:diff}
H(x+a_i e_i) + H(x+a_j e_j)  \geqslant H(x) + H(x+a_i e_i + a_j e_j).
\EEQ
Moreover, we only need the statement above in the limit of $a_i$ and $a_j$ tending to zero (but different from zero and such that
$x_i + a_i \in \X_i$ and $x_j + a_j \in \X_j$): for discrete sets included in $\mathbb{Z}$, then we only need to consider $a_i=a_j=1$, while for continuous sets, this will lead to second-order derivatives.

\paragraph{Modular functions.} We define modular functions as functions $H$ such that both $H$ and $-H$ are submodular. This happens to be equivalent to $H$ being a \emph{separable} function, that is a function which is a sum of $n$ functions that depend arbitrarily on single variables~\cite[Theorem 3.3]{topkis1978minimizing}.
These will play the role that linear functions play for convex functions. Note that when  each $\X_i$ is a sub-interval of~$\rb$, this set of functions is much larger than the set of linear functions.

\paragraph{Submodularity-preserving operations.} Like for  set-functions,  the set of submodular functions is a cone, that is, the sum of two submodular functions is submodular and multiplication by a positive scalar preserves submodularity.
Moreover, restrictions also preserve submodularity: any function defined by restriction on a product of subsets of $\X_i$ is submodular. This will be useful when discretizing a continuous domain in \mysec{algo}.

Moreover, submodularity is invariant by \emph{separable strictly increasing reparameterizations}, that is, if for all $i \in \{1,\dots,n\}$, $\varphi_i: \X_i \to \X_i$ is a strictly increasing bijection, $H$ is submodular, if and only if, $x \mapsto H\big[ \varphi_1(x_1),\dots, \varphi_n(x_n)\big]$ is submodular. Note the difference with convex functions which are invariant by \emph{affine} reparameterizations.

Finally, partial minimization does preserve submodularity, that is, if $H: \prod_{i=1}^n \X_i \to \rb$ is submodular so is
$(x_1,\dots,x_k) \mapsto \inf_{x_{k+1}, \dots, x_n} H(x)$ (exact same proof as for set-functions), while maximization does not, that is the pointwise maxima of two submodular functions may not be submodular.

\paragraph{Set of minimizers of submodular functions.} Given a submodular function, the set $\mathcal{M}$ of minimizers of $H$ is a sublattice of $\X$, that is, if if $(x,y) \in \mathcal{M} \times \mathcal{M}$, then $\max\{x,y\}$ and $\min \{x,y\}$ are also in $\mathcal{M}$~\cite{topkis1978minimizing}.

\paragraph{Strict submodularity.} We define the notion of strict submodularity through a strict inequality in \eq{submoddef} for any two $x$ and $y$ which are not comparable~\cite{topkis1978minimizing}, where, like in the rest of this paper, we consider the partial order on $\rb^n$ such that $x \leqslant x'$ if and only if $x_i \leqslant x_i'$ for all $i \in \{1,\dots,n\}$.
Similarly, this corresponds to a strict inequality in \eq{diff} as soon as both $a_i$ and $a_j$ are strictly positive. The set of minimizers of a strictly submodular function is a chain, that is the set $\mathcal{M}$ of minimizers  is totally ordered~\cite{topkis1978minimizing}.

\subsection{Examples}
\label{sec:examples}
In this section, we provide simple examples of submodular functions. We will use as running examples throughout the paper: submodular set-functions defined on $\{0,1\}^n$ (to show that our new results directly extend the ones for set-functions), modular functions (because they provide a very simple example of the concepts we introduce), and functions that are sums of terms $\varphi_{ij}(x_i-x_j)$ where $\varphi_{ij}$ is convex (for the link with Wasserstein distances between probability measures~\cite{villani2008optimal,santambrogio2015optimal}).

\BIT
\item[--] \textbf{Set-functions}: When each $\X_i$ has exactly two elements, e.g., $\X_i  = \{0,1\}$, we recover exactly submodular set-functions defined on $\{1,\dots,n\}$, with the usual identification of $\{0,1\}^n$ with the set of subsets of $\{1,\dots,n\}$. Many examples may be found in~\cite{fot_submod,fujishige2005submodular}, namely cut functions, entropies, set covers, rank functions of matroids, network flows, etc.

\item[--] \textbf{Functions on intervals}: When each $\X_i$ is a interval of $\rb$ and $H$ is twice differentiable on $\X$, then $H$ is submodular if and only if all cross-second-derivatives are non-negative, i.e., 
$$
\forall i \neq j, \forall x \in \X,\  \frac{\partial^2 H}{\partial x_i \partial x_j}(x) \leqslant 0.
$$
This can be shown by letting $a_i$ and $a_j$ tend to zero in \eq{diff}. A sufficient condition for strict submodularity is that the cross-order derivatives are strictly negative. As shown in this paper, this class of functions can be minimized efficiently while having potentially many local minima and stationary points (see an example in \myfig{example2d}).

\begin{figure}

\begin{center}
\includegraphics[scale=.5]{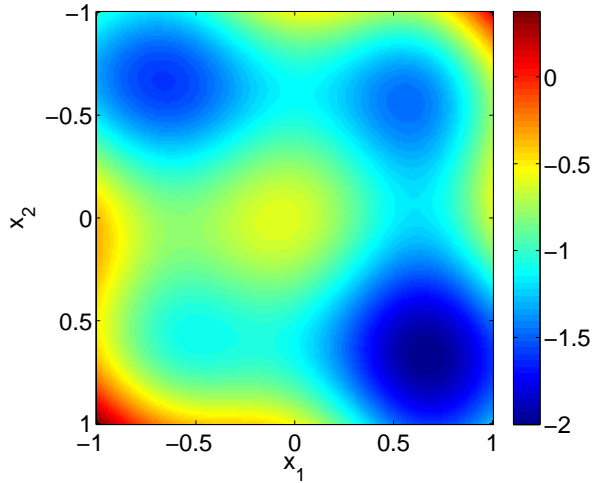}
\end{center}

\vspace*{-.6cm}

\caption{Level sets of the submodular function $(x_1,x_2) \mapsto \frac{7}{20}( x_1 - x_2)^2 - e^{-4(x_1-\frac{2}{3})^2} - \frac{3}{5}e^{ - 4 (x_1+\frac{2}{3})^2}
- e^{-4(x_2-\frac{2}{3})^2} -  e^{ - 4 (x_2+\frac{2}{3})^2}$, with  several local minima,  local maxima and   saddle points.
}
\label{fig:example2d}
\end{figure}

A quadratic function $x \mapsto x^\top  Q x$ is submodular if and only if all off-diagonal elements of $Q$ are non-positive, a class of quadratic functions with interesting behavior, e.g., tightness of semi-definite relaxations~\cite{kim2003exact}, which is another instance of the good behavior of such functions.

The class of submodular functions includes functions of the form $\varphi_{ij} (x_i -x _j)$ for $\varphi_{ij}: \rb \to \rb$  convex, and $x \mapsto g( \sum_{i=1}^n \lambda_i x_i)$ for $g$ concave and $(\lambda_i)_{i=1,\dots,n}$ non-negative weights; this gives examples of functions which are submodular, but convex or concave.

Other examples are $\varphi(\sum_{i=1}^n \lambda_i x_i) - \sum_{i=1}^n \lambda_i \varphi(x_i)$ for $\varphi$ strictly concave and $\lambda \in \rb^n$ in the interior of the simplex, which is non-negative and zero if and only if all $x_i$ are equal.

Moreover, functions of the form $x \mapsto \log \det \big( \sum_{i=1}^n x_i A_i \big)$, where $A_i$ are positive definite matrices and $x \geqslant 0$, are submodular---this extends to other spectral functions~\cite{friedland2013submodular}. Moreover, if $g$ is the Lov\'asz extension of a submodular set-function, then it is submodular (as a function defined on $\rb^n$)---see proof in Appendix~\ref{app:subsub}. These give examples of functions which are both convex and submodular. Similarly, the multi-linear extension of a submodular set-function~\cite{vondrak2008optimal}, defined on $[0,1]^n$, is submodular as soon as the original set-function is submodular (see Appendix~\ref{app:subsubmulti}), but is not convex in general.

For algorithms, these functions will be approximated on a discrete grid (one separate grid per variable, with a total complexity which is linear in the dimension $n$), but most of our formulations and convex analysis results extend in the continuous setting with appropriate regularity assumptions. 

\item[--] \textbf{Discrete labels}: in this paper, we will often consider the case where the sets $\X_i$ are all finite. They will serve as approximations of   functions defined on intervals. We will still use a functional notation to make  the extension to continuous settings explicit. Examples of functions are naturally obtained from restrictions of functions defined on continuous intervals. Moreover, as shown in \cite[Theorem 5.2]{topkis1978minimizing}, any Lipschitz-continuous submodular function defined on a product of subsets of $\rb^n$ may be extended into a Lipschitz-continuous function on $\rb^n$ (with the same constant).

\item[--] \textbf{Log-supermodular densities}: Submodular functions have also been studied as negative log-densities of probability distributions. These distributions  are referred to as ``multivariate totally positive of order 2'' and classical examples are the multivariate logistic, Gamma and $F$ distributions, as well as characteristic roots of random Wishart matrices (see more examples and additional properties in~\cite{karlin1980classes}).

\EIT

\paragraph{Submodular minimization problems.} 
In this paper, we focus on simple and efficient methods to minimize general submodular functions, based only on function evaluations. Many examples come from signal and image processing, with functions to minimize of the form 
$$H(x) = \sum_{i=1}^n f_i(x_i) + \sum_{C \in \mathcal{C}} f_C(x_C),$$
where $\mathcal{C}$ is a set of small subsets (often a set of edges) and each $f_C$ is submodular (while each $f_i$ may be arbitrary)~\cite{ishikawa2003exact,pock2009convex,bae2014fast}.
We consider a simple one-dimensional example in \mysec{experiments} as an illustration.

Another motivating example is  a probabilistic modelling problem where submodularity on continuous domains appears naturally, namely probabilistic models on $\{0,1\}^n$ with log-densities which are negatives of submodular functions~\cite{djolonga2014map,djolonga15scalable}, that is $\gamma(x) = \frac{1}{Z} \exp( - F(x) )$, with $F$ submodular and $Z$ the normalizing constant equal to $Z = \sum_{x \in \{0,1\}^n} \exp( - F(x) )$, which is typically hard to compute. In this context, \emph{mean field} inference aims at approximating $p$ by a product of independent distributions $\mu(x) = \prod_{i=1}^n \mu_i(x_i)$, by minimizing the Kullback-Leibler divergence between $\mu$ and $\gamma$, that is, by minimizing
$$
\sum_{x \in \X} \mu(x) \log \frac{\mu(x)}{\gamma(x)}
= \sum_{i=1}^n \big\{ \mu_i(1) \log \mu_i(1) + \mu_i(0) \log \mu_i(0) \big\} + \sum_{x \in \X} \mu(x) F(x) + Z.
$$
The first term in the right-hand side is separable, hence submodular, while the second term is exactly the multi-linear extension of the submodular set-function $F$, which is itself a submodular function (see~\cite{vondrak2008optimal} and Appendix~\ref{app:subsubmulti}). This implies that in this context, mean field inference may be done globally with arbitrary precision in polynomial time. Note that in this context, previous work~\cite{djolonga2014map} has considered replacing the multi-linear extension by the Lov\'asz extension, which is also submodular but also convex (it then turns out to correspond to a different divergence than the KL divergence between $\mu$ and $\gamma$).

\section{Extension to product probability measures}

Our goal is to minimize the function $H$ through a tight convex relaxation. Since all our sets $\X_i$ are subsets of $\rb$, we could look for extensions to $\rb^n$ directly such as done for certain definitions of discrete convexity~\cite{favati,fujishige2000notes}; this in fact exactly the approach for functions defined on $\{0,1\}^n$, where one defines extensions on $[0,1]^n$. The  view that we advocate in this paper is that $[0,1]$ is in bijection with the set of distributions on $\{0,1\}$ (as the probability of observing $1$).

When the sets $\X_i$ have more than two elements, we are going to consider the convex set $\mathcal{P}(\X_i)$ of \emph{Radon probability measures}~\cite{rudin1986real} $\mu_i$ on $\X_i$, which is the closure (for the weak topology) of  the convex hull of all Dirac measures; for $\X_i =\{0,\dots,k_i-1\}$, this is essentially a simplex in dimension $k_i$. In order to get an \emph{extension}, we   look for a function defined on the set of \emph{products of probability measures}
$\mu \in \mathcal{P}^\otimes(\X) = \prod_{i=1}^n \mathcal{P}(\X_i)$, such that if all $\mu_i$, $i=1,\dots,n$, are Dirac measures at points $x_i \in \X_i$, then we have a function value equal to $H(x_1,\dots,x_n)$.

We will define two types of extensions for all functions, not necessarily submodular, one based on inverse cumulative distribution functions, one based on convex closure. The two will happen to be identical for submodular functions.

\subsection{Extension based on inverse cumulative distribution functions}
\label{sec:ext}

 For a probability distribution $\mu_i \in \mathcal{P}(\X_i)$ defined on a totally ordered set $\X_i$, we can define the (reversed) cumulative distribution function $F_{\mu_i}: \X_i \to [0,1]$ as
$ F_{\mu_i}(x_i) = \mu_i \big(
\{ y_i \in \X_i, y_i \geqslant x_i \}
\big)$. This is a non-increasing left-continuous function from $\X_i$ to $[0,1]$. Such that $F_{\mu_i}( \min \X_i) = 1$ and $F_{\mu_i}(\max \X_i) = \mu_i ( \{ \max \X_i \})$. 
See  illustrations in \myfig{cumcont} and \myfig{cum} (left). 

When $\X_i$ is assumed discrete with $k_i$ elements, it may be exactly represented as a vector in $\rb^{k_i-1}$ elements with non-decreasing components, that is, given $\mu_i$, we define $ \mu_i(x_i)+ \cdots + \mu_{i}(k_i-1) = F_{\mu_i}(x_i)$, for $x_i \in \{1,\dots,k_i-1\}$. Because the measure $\mu_i$ has unit total mass,   $F_{\mu_i}(0)$ is always equal to $1$ and can thus be omitted to obtain a simpler representation (as done in \mysec{discrete}). For example, for $k_i=2$ (and $\X_i= \{0,1\}$), then we simply have $F_{\mu_i}(1) \in [0,1]$ which represents the probability that the associated random variable is equal to $1$.

Note that in order to preserve the parallel with submodular set-functions, we choose to deviate from the original definition of the cumulative function by considering the mass of the set $\{ y_i \in \X_i, y_i \geqslant x_i\}$ (and not the other direction).

We can define the ``inverse'' cumulative function from $[0,1]$ to $\X_i$ as
$$
F_{\mu_i}^{-1}(t_i) = \sup \{ x_i \in \X_i, \ F_{\mu_i}(x_i) \geqslant t_i \}.  
$$
The function $F_{\mu_i}^{-1}$ is non-increasing and right-continuous,  and such that 
$F_{\mu_i}^{-1}(1) = \min \X_i$ and $F_{\mu_i}^{-1}(0) = \max \X_i$. Moreover,  we have $F_{\mu_i}(x_i) \geqslant t_i \Leftrightarrow F_{\mu_i}^{-1}(t_i) \geqslant x_i$.   See an illustration in \myfig{cum} (right).
When $\X_i$ is assumed discrete, $F_{\mu_i}^{-1}$ is piecewise constant with steps at every $t_i$ equal to $F_{\mu_i}(x_i)$ for a certain $x_i$. For $k_i=2$, we get $F_{\mu_i}^{-1}(t_i) = 1$ if $t_i < \mu_i(1)$ and $0$ if $t_i \geqslant \mu_i(1)$. What happens at $t_i = \mu_i(1)$ does not matter because this corresponds to a set of zero Lebesgue measure.

\begin{figure}

\begin{center}
\includegraphics[scale=.6]{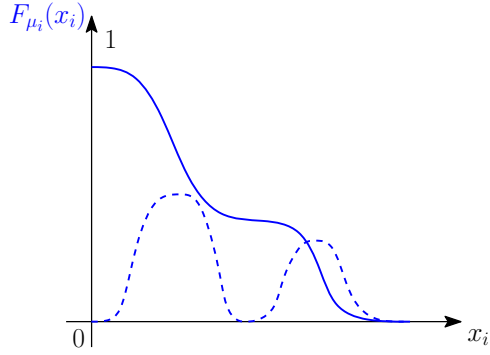}
\end{center}

\vspace*{-.4cm}

\caption{Cumulative function for a ``continuous'' distribution on the real line, with the corresponding density (with respect to the Lebesgue measure) in dotted.}
\label{fig:cumcont}
\end{figure}

\begin{figure}

\begin{center}
\includegraphics[scale=.6]{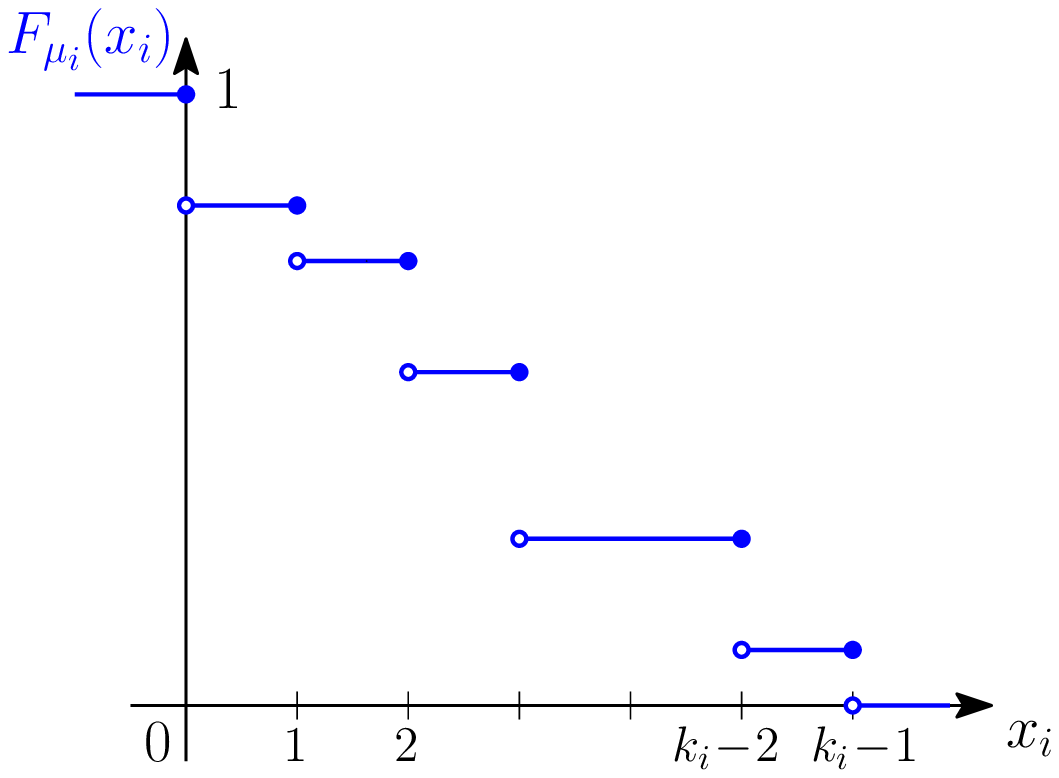}
\includegraphics[scale=.6]{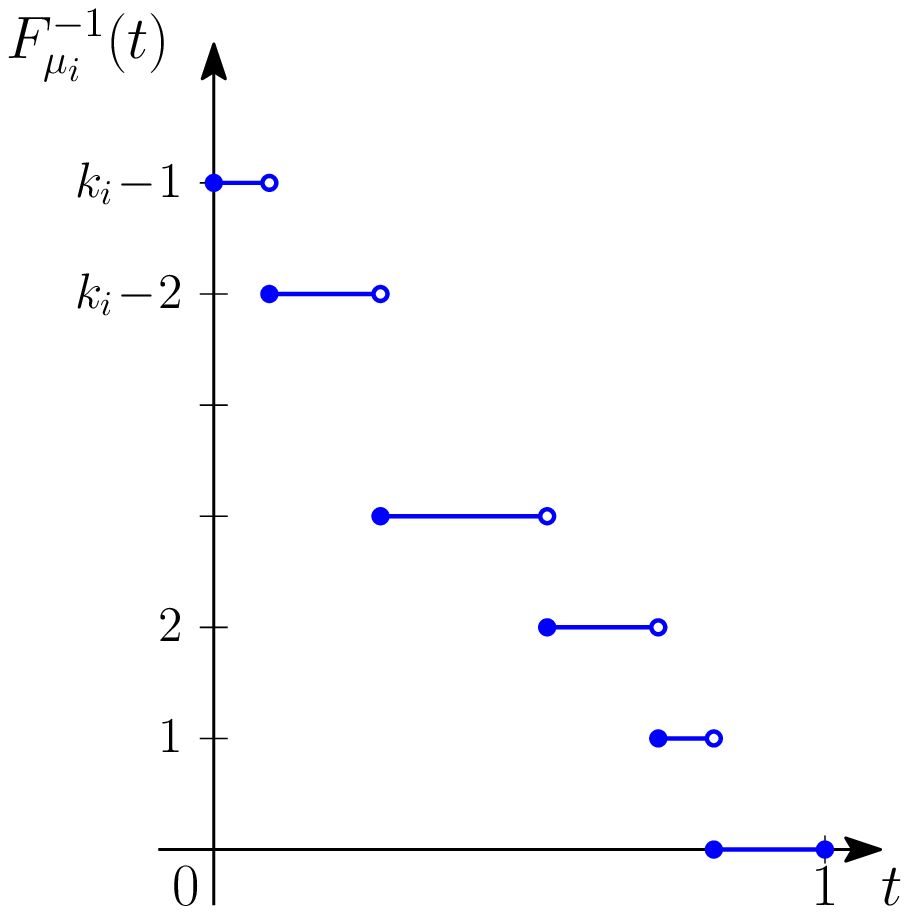}
\end{center}

\vspace*{-.4cm}

\caption{Left: cumulative function for a distribution on the real line supported in the set $\{0,\dots,k_i-1\}$. 
 Right:  inverse cumulative function (which would be the same for the distribution with discrete domain).}
\label{fig:cum}
\end{figure}

We now define our extension from $\X$ to the set of product probability measures, by considering a single threshold $t$ applied to all $n$ cumulative distribution functions. See an illustration in \myfig{transport}.

\begin{definition}[Extension based on cumulative distribution functions] Let $H: \prod_{i=1}^n \X_i \to \rb$ be any continuous function. 
We define the extension $h_{\rm cumulative}$ of $H$ to $\mathcal{P}^\otimes(\X) = \prod_{i=1}^n \mathcal{P}(\X_i)$ as follows:
\BEQ
\label{eq:ext}
\forall \mu \in \prod_{i=1}^n \mathcal{P}(\X_i), \ 
\ h_{\rm cumulative} (\mu_1,\dots,\mu_n) = \int_0^1 H\big[ F_{\mu_1}^{-1}(t),\dots, F_{\mu_n}^{-1}(t) \big] dt.
\EEQ
\end{definition}

If all $\mu_i$, $i=1,\dots,n$ are Diracs at $x_i \in \X_i$, then for all $t \in (0,1)$, $F_{\mu_i}^{-1}(t) = x_i$ and  we indeed have the extension property (again, what happens for $t=0$ or $t=1$ is irrelevant because this is on a set of zero Lebesgue measure). For $\X_i = \{0,1\}$ for all $i$, then the extension is defined on $[0,1]^n$ and is equal to
$h_{\rm cumulative}(\mu) = \int_0^1 H ( 1_{\{ \mu(1) \geqslant t\}}) dt$ and we exactly recover the Choquet integral (i.e., the Lov\'asz extension) for set-functions (see~\cite[Prop. 3.1]{fot_submod}). These properties are summarized in the following proposition.

\begin{proposition}[Properties of extension]
For any function $H: \prod_{i=1}^n \X_i \to \rb$, the extension $h_{\rm cumulative}$ satistfies the following properties:
\BIT
\item If $\mu$ is a Dirac at $x \in \X$, then $h_{\rm cumulative}(\mu) = H(x)$.
\item If all $\X_i$ are finite, then $h_{\rm cumulative}$ is piecewise affine.
\EIT

\end{proposition}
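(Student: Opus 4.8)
The plan is to establish the two properties by direct analysis of the defining integral \eq{ext}, treating each separately. For the Dirac case I would simply compute the inverse cumulative function of a Dirac: if each $\mu_i$ is a Dirac at $x_i \in \X_i$, then $F_{\mu_i}$ equals $1$ on $\{y_i \in \X_i : y_i \leqslant x_i\}$ and $0$ on $\{y_i : y_i > x_i\}$, so that $F_{\mu_i}^{-1}(t) = x_i$ for every $t \in (0,1)$. The integrand in \eq{ext} is then constantly equal to $H(x_1,\dots,x_n) = H(x)$ on $(0,1)$, and since the endpoints $t=0,1$ carry no Lebesgue mass, the integral evaluates to $H(x)$. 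This is exactly the remark already recorded after the definition, so the only thing to write out is the value of $F_{\mu_i}^{-1}$ on a Dirac.

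For the second property I would first fix, for each $i$, the cumulative representation of $\mu_i \in \mathcal{P}(\X_i)$. Writing $\X_i = \{a_i^0 < \cdots < a_i^{k_i-1}\}$ and $c_i^j = F_{\mu_i}(a_i^j)$, one has $1 = c_i^0 \geqslant c_i^1 \geqslant \cdots \geqslant c_i^{k_i-1} \geqslant 0$, and each $c_i^j$ is a \emph{linear} function of $\mu_i$ (a partial sum of its masses). The structural observation is that $t \mapsto F_{\mu_i}^{-1}(t)$ is a non-increasing step function taking the value $a_i^j$ exactly on $(c_i^{j+1}, c_i^j]$ (with the convention $c_i^{k_i}=0$), so its only jumps lie in the finite threshold set $\{c_i^j\}_{i,j}$. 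Hence the map $t \mapsto (F_{\mu_1}^{-1}(t),\dots,F_{\mu_n}^{-1}(t))$, and with it the integrand $H[\cdots]$, is piecewise constant on $(0,1)$.

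Sorting the thresholds as $0 = t_0 \leqslant t_1 \leqslant \cdots \leqslant t_m = 1$, the integrand equals $H(z_p)$ for a fixed lattice point $z_p \in \X$ on each sub-interval $(t_{p-1},t_p)$, which yields the finite expression $h_{\rm cumulative}(\mu) = \sum_{p=1}^m (t_p - t_{p-1}) H(z_p)$. To turn this into piecewise affineness I would partition the parameter space $\prod_{i=1}^n \mathcal{P}(\X_i)$ into the cells on which the relative order of all the $c_i^j$ is fixed: each such cell is cut out by finitely many linear inequalities $c_i^j \leqslant c_{i'}^{j'}$, and there are finitely many of them. On a single cell both the assignment of thresholds to the sorted list $t_1,\dots,t_{m-1}$ and the points $z_p$ are constant, so the displayed sum is a fixed linear combination of the $t_p$, hence an affine function of the $c_i^j$ and therefore of $\mu$.

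The only real obstacle is bookkeeping: one must check that the combinatorial data entering the sum---the order in which the thresholds are crossed and the resulting sequence of lattice points $z_p$---is genuinely constant on each polyhedral cell, and that coincidences among thresholds (which create zero-length sub-intervals) cause no inconsistency. The latter is harmless, since a tie contributes a zero-length term, so the affine formulas of two adjacent cells agree on their common face; this simultaneously gives continuity and glues the pieces into a single globally piecewise-affine function. For $\X_i = \{0,1\}$ this specializes to the familiar fact that the Lov\'asz extension is piecewise affine.
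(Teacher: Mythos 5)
Your proof is correct and takes essentially the same approach as the paper: the Dirac computation is exactly the remark the paper records immediately after the definition of $h_{\rm cumulative}$, and your threshold-sorting argument (integrand piecewise constant between the sorted values $F_{\mu_i}(x_i)$, giving an affine formula on each ordering cell) is the same computation the paper carries out for finite sets via Lemma~\ref{lemma:hrho} and the greedy algorithm in \mysec{greedy}.
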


Note that the extension is defined on all tuples of measures $\mu=(\mu_1,\dots,\mu_n)$ but it can equivalently be defined through non-increasing functions from $\X_i$ to $[0,1]$, e.g., the   representation in terms of cumulative distribution functions $F_{\mu_i}$ defined above. As we will see for discrete domains in \mysec{discrete}, it may also be defined for all non-increasing functions with no contraints to be in $[0,1]$. Moreover, this extension can be easily computed, either by sampling, or, when all $\X_i$ are finite, by sorting all values of $F_{\mu_i}(x_i)$, $i \in \{1,\dots,n\}$ and $x_i \in \X_i$ (see \mysec{discrete} for details).

\paragraph{Examples.}
For our three running examples, we may look at the extension. For  set-functions, we recover the usual Choquet integral; for modular functions $H(x) = \sum_{i=1}^n H_i(x_i)$, then we have $h(\mu) = \sum_{i=1}^n \int_{\X_i} H_i(x_i) d \mu_i(x_i)$ which is the expectation of $H(x)$ under the product measure defined by $\mu$. Finally, for the function $\varphi_{ij}(x_i-x_j)$, we obtain a Wasserstein distance between the measures $\mu_i$ and $\mu_j$ (which is a distance between their cumulative functions)~\cite{villani2008optimal}. See more details in \mysec{transport}.

\subsection{Extension based on convex closures}
\label{sec:extclos}
\label{sec:closure}

We now describe a second way of extending a function defined on $\X$ to a function defined on $\mathcal{P}^\otimes(\X) = \prod_{i=1}^n \mathcal{P}(\X_i)$, using the concept of \emph{convex closure}. We consider the function $g$ defined on 
$\mathcal{P}^\otimes(\X) $ as $g(\mu) = H(x)$ if $\mu$ is the Dirac measure $\delta_x$ at $x \in \X$, and $+\infty$ otherwise. The following proposition gives an expression for its convex closure, in terms of  Kantorovich multi-marginal optimal transport~\cite{carlier2003class,villani2008optimal}, which looks for a joint probability measure on $\X$ with given marginals on each $\X_i$, $i=1,\dots,n$.

\begin{proposition}[Extension by convex closure - duality]
 \label{prop:closure}
 Assume all $\X_i$ are compact subsets of~$\rb$, for $i \in \{1,\dots,n\}$ and that $H: \X \to \rb$ is continuous.
 The largest lower-semi continuous (for the weak topology on Radon measures) convex function $h : \prod_{i=1}^n \mathcal{P}(\X_i) \to \rb$ such that $h(\delta_x) \leqslant H(x)$ for any $x \in \X$ is equal to
\BEQ
\label{eq:closure}
h_{\rm closure}(\mu_1,\dots,\mu_n) = \inf_{ \gamma  \in \mathcal{P}(\X) } \int_{\X} H(x) d\gamma(x)  ,
\EEQ
where the infimum is taken
 over all probability measures $\gamma$ on $\X$ such that the $i$-th marginal $\gamma_i$ is equal to $\mu_i$. Moreover, the infimum is attained and we have the dual representation:
 \BEQ
\label{eq:dual}
h_{\rm closure}(\mu)   =  \sup_v \sum_{i=1}^n \int_{\X_i} v_i(x_i) d \mu_i(x_i) \mbox{ such that }
\forall x \in \X, \ \sum_{i=1}^n v_i(x_i) \leqslant H(x_1,\dots,x_n),
\EEQ
over all continuous functions $v_i: \X_i \to \rb$, $i=1,\dots,n$.
We denote by $\mathcal{V}(H)$ the set of such potentials $v = (v_1,\dots,v_n)$.
 \end{proposition}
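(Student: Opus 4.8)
The plan is to identify $h_{\rm closure}$ with the Fenchel biconjugate $g^{\ast\ast}$ of the function $g$ (equal to $H(x)$ at a product Dirac $\delta_x$ and $+\infty$ elsewhere), to read off the dual formula from a direct conjugate computation, and then to recover the primal transport formula by a sandwich argument, so that Kantorovich strong duality falls out without a separate Hahn--Banach step. I would work in the dual pairing between the space $\mathcal{M}(\X_i)$ of Radon measures and the space $C(\X_i)$ of continuous functions, which are in duality by the Riesz representation theorem since each $\X_i$ is compact, equipped with the weak topology used in the statement. The convex closure of $g$, namely the largest weakly lower semicontinuous convex minorant (which is precisely the object described in the proposition, since the constraint $h(\delta_x) \leqslant H(x)$ is the only nonvacuous one), equals $g^{\ast\ast}$ by the Fenchel--Moreau theorem. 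This applies because $H$ is bounded on the compact set $\X$, so the zero potential yields the continuous affine minorant $g \geqslant \min_{x} H(x)$, making $g$ proper and minorized.

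Writing $\langle v,\mu\rangle = \sum_{i=1}^n \int_{\X_i} v_i \, d\mu_i$ for $v=(v_1,\dots,v_n)$, I would compute $g^\ast(v) = \sup_\mu [\langle v,\mu\rangle - g(\mu)]$; since $g$ is finite only at product Diracs, this reduces to $g^\ast(v) = \sup_{x \in \X} \big[ \sum_{i=1}^n v_i(x_i) - H(x) \big]$. Then $g^{\ast\ast}(\mu) = \sup_v [\langle v,\mu\rangle - g^\ast(v)]$, and for any $v$ with $g^\ast(v) = c < \infty$, replacing $v_1$ by $v_1 - c$ leaves $\langle v,\mu\rangle - g^\ast(v)$ unchanged (because $\mu_1$ has unit mass) while forcing $\sum_{i=1}^n v_i(x_i) \leqslant H(x)$ for all $x$; conversely every $v \in \mathcal{V}(H)$ satisfies $g^\ast(v) \leqslant 0$. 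This reduction gives exactly the dual expression in \eq{dual}, establishing that the convex closure coincides with the dual formula.

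It then remains to match \eq{closure}. Weak duality is immediate: integrating the constraint $\sum_{i=1}^n v_i(x_i) \leqslant H(x)$ against any coupling $\gamma$ with marginals $\mu_i$ yields $\sum_{i=1}^n \int_{\X_i} v_i \, d\mu_i \leqslant \int_\X H \, d\gamma$, so the primal infimum dominates the dual supremum, which equals $g^{\ast\ast}(\mu)$. For the reverse inequality I would verify that $\mu \mapsto \inf_\gamma \int_\X H \, d\gamma$ is itself a weakly lower semicontinuous convex minorant of $g$, hence $\leqslant g^{\ast\ast}$. Convexity holds because a convex combination of couplings of $\mu$ and $\mu'$ is a coupling of the corresponding convex combination while $\gamma \mapsto \int_\X H\, d\gamma$ is affine; the minorant property holds since at $\mu = \delta_x$ the only admissible coupling is $\delta_x$, with value $H(x)$. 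Lower semicontinuity and attainment both follow from compactness: the set of couplings with prescribed marginals is nonempty (the product measure $\bigotimes_i \mu_i$ is admissible), convex, and weakly compact (a weakly closed subset of $\mathcal{P}(\X)$, itself compact by Banach--Alaoglu/Prokhorov since $\X$ is compact), and $\gamma \mapsto \int_\X H\, d\gamma$ is weakly continuous because $H$ is continuous; extracting a weak limit of optimal couplings along a sequence $\mu^{(k)} \to \mu$ gives lower semicontinuity of the value. Combining weak duality (primal $\geqslant$ dual $= g^{\ast\ast}$) with the minorant property (primal $\leqslant g^{\ast\ast}$) forces all three to coincide, proving \eq{closure}, \eq{dual}, and that the infimum is attained.

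The genuinely delicate points are all infinite-dimensional: justifying the Fenchel--Moreau identity in the pairing $(\mathcal{M}(\X_i), C(\X_i))$ under the weak topology, and the weak-compactness and continuity arguments that underlie lower semicontinuity and attainment. Everything else (the conjugate computation, the constant-shift reduction, weak duality, and convexity) is elementary, and compactness of $\mathcal{P}(\X)$ together with continuity of $H$ is exactly what closes the potential duality gap once the biconjugate has been computed.
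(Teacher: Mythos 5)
Your proof is correct, and its first half coincides with the paper's own argument: both identify the convex closure with the Fenchel biconjugate $g^{\ast\ast}$ in the pairing between Radon measures and continuous functions, and both compute $g^\ast(v) = \sup_{x\in\X}\big[\sum_{i=1}^n v_i(x_i) - H(x)\big]$ (your explicit constant-shift normalization, which converts the unconstrained biconjugate supremum into the constrained supremum over $\mathcal{V}(H)$ in \eq{dual}, is left implicit in the paper). The two proofs genuinely diverge at the step linking \eq{dual} to the transport formula \eq{closure}. The paper rewrites $g^\ast(v)$ as a supremum of integrals over $\gamma\in\mathcal{P}(\X)$, expresses $g^{\ast\ast}$ as a sup-inf, and interchanges it into an inf-sup by invoking strong duality for optimal transport (justified by continuity of $H$ and compactness of the $\X_i$, with details deferred to the cited literature); maximizing over the potentials then produces the marginal constraints $\gamma_i=\mu_i$. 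You avoid the minimax interchange entirely: weak duality gives $g^{\ast\ast}(\mu) \leqslant \inf_{\gamma\in\Pi(\mu)}\int_\X H\,d\gamma$, and you show the right-hand side is itself a weakly lower semicontinuous convex minorant of $g$ (convexity because couplings mix affinely, the minorant property because the only coupling of Dirac marginals is $\delta_x$, lower semicontinuity and attainment from weak compactness of $\Pi(\mu)$ together with weak continuity of $\gamma\mapsto\int_\X H\,d\gamma$), hence $\leqslant g^{\ast\ast}$ by maximality of the biconjugate, which closes the sandwich. What your route buys is self-containedness: Kantorovich duality in this compact setting is proved rather than quoted, and attainment falls out of the same compactness argument; what the paper's route buys is brevity and an explicit view of the Lagrangian structure behind the marginal constraints. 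One point you should make explicit: your lower-semicontinuity argument is sequential, which is legitimate here only because $\mathcal{P}(\X)$ and the $\mathcal{P}(\X_i)$ are weak-$\ast$ compact and metrizable (the spaces $C(\X)$ and $C(\X_i)$ are separable since $\X$ and the $\X_i$ are compact metric), so sequential and topological lower semicontinuity coincide.
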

 \begin{proof}
The function $h_{\rm closure}$ can be computed as the Fenchel bi-conjugate of $g$. The first step is to compute $g^\ast(v)$ for $v$ in the dual space to $\prod_{i=1}^n \mathcal{P}(\X_i)$, that is $v \in \prod_{i=1}^n \mathcal{C}(\X_i)$, with $\mathcal{C}(\X_i)$ the set of continuous functions on $\X_i$. We have, by definition of the Fenchel-Legendre dual function, with
$\langle \mu_i, v_i \rangle = \int_{\X_i} v_i(x_i) d\mu_i(x_i)$   the integral of $v_i$ with respect to $\mu_i$:

$$
g^\ast(v)  =  \sup_{ \mu \in  \mathcal{P}^\otimes(\X)} \sum_{i=1}^n \langle \mu_i, v_i \rangle - g(\mu) = \sup_{ x \in \X}  \ \sum_{i=1}^n v_i(x_i)  - H(x) .
$$
This supremum is equal to
\BEQ
\label{eq:dualopt}
g^\ast(v)  = \sup_{ \gamma \in  {\mathcal{P}(\X)} }
 \int_\X  \big\{  \sum_{i=1}^n v_i(x_i)  - H(x) \big\}d\gamma(x)
 \EEQ
over all probability measures $\gamma$ on $\X$. We may then expand using $\gamma_i$ the $i$-th marginal of $\gamma$ on $
\X_i$ defined as $\gamma_i(A_i) = \gamma( \{ x \in \X, \ x_i \in A_i\})$ for any measurable set $A_i \subset \rb$, as follows:
$$
g^\ast(v)   =  \sup_{ \gamma \in \mathcal{P}(\X)}  \bigg\{ \sum_{i=1}^n \int_{\X_i} v_i(x_i)  {d \gamma_i}(x_i)  -  
 \int_\X   H(x) d \gamma(x) \bigg\} .
$$
The second step is to compute the bi-dual $g^{\ast\ast}(\mu) = \sup_{ v\in \prod_{i=1}^n \mathcal{C}(\X_i)  } \sum_{i=1}^n \langle \mu_i,v_i \rangle - g^\ast(v)$ for $\mu \in  \prod_{i=1}^n \mathcal{P}(\mathcal{\X}_i)$:
\BEAS
g^{\ast\ast}(\mu)  & = &  \sup_{ v \in \prod_{i=1}^n \mathcal{C}(\X_i)  } \sum_{i=1}^n \langle v_i , \mu_i \rangle
-  \sup_{ \gamma \in \mathcal{P}(\X)}  \bigg\{ \sum_{i=1}^n \sum_{x_i \in \X_i} v_i(x_i)  {\gamma_i}(x_i)  -  \int_\X   H(x) d \gamma(x)\bigg\}  \\
& = &   \sup_{ v\in \prod_{i=1}^n \mathcal{C}(\X_i) }   \inf_{ \gamma \in \mathcal{P}(\X)} 
  \sum_{i=1}^n \int_{\X_i}  v_i(x_i) \big( {d\gamma_i}(x_i)   - d\mu_i(x_i) \big) -   \int_\X   H(x) d \gamma(x) \\
& = &   \inf_{ \gamma \in \mathcal{P}(\X)}  \sup_{ w\in \prod_{i=1}^n \mathcal{C}(\X_i) }  
  \sum_{i=1}^n \int_{\X_i}  v_i(x_i) \big( {d\gamma_i}(x_i)   - d\mu_i(x_i) \big) -   \int_\X   H(x) d \gamma(x).
\EEAS
In the last equality, we use strong duality which holds here because of the continuity of $H$ and the compactness of all sets $\X_i$, $i=1,\dots,n$. See for example~\cite{villani2008optimal} for details. Note that the infimum in $\gamma$ is always attained in this type of optimal transport problems.

 Thus, by maximizing over each $v_i \in \mathcal{C}(\X_i)$, we get and additional constraint and thus $  \displaystyle g^{\ast\ast}(\mu) = \inf_{ \gamma  \in \mathcal{P}(\X) } \int_{\X} H(x) d\gamma(x)$ such that $\forall i, \ \gamma_i  = \mu_i $. This leads to the desired result.
 \end{proof}
 The extension by convex closure $h_{\rm closure}$ has several interesting properties, independently of the submodularity of $H$, as we now show.
 
 \begin{proposition}[Properties of convex closure]
 \label{prop:propclo}
 For any continuous function  $H: \prod_{i=1}^n \X_i \to \rb$, the extension $h_{\rm closure}$ satistfies the following properties:
\BIT
\item[(a)] If $\mu$ is a Dirac at $x \in \X$, then $h_{\rm closure}(\mu) \leqslant H(x)$.
\item[(b)] The function $h_{\rm closure}$ is convex.
\item[(c)] Minimizing $h_{\rm closure}$ on $\prod_{i=1}^n \mathcal{P}(\X_i)$ and minimizing $H$ on $\prod_{i=1}^n \X_i$ is equivalent, that is, the two optimal values are equal, and one may find minimizers of one problem given the other one.
\EIT
 \end{proposition}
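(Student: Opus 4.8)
The plan is to read off all three properties from the two representations of $h_{\rm closure}$ established in Proposition~\ref{prop:closure}: the primal optimal-transport formula \eq{closure} and the fact that, by construction, $h_{\rm closure}$ is the largest lower-semicontinuous convex minorant of $g$. For part (a), I would observe that if every $\mu_i$ is the Dirac $\delta_{x_i}$, then the only probability measure on $\X$ with these marginals is the product Dirac $\delta_x$ itself; hence the infimum in \eq{closure} is over a singleton and $h_{\rm closure}(\delta_x) = \int_\X H\, d\delta_x = H(x)$, which gives the claimed inequality $h_{\rm closure}(\delta_x)\leqslant H(x)$, in fact with equality.

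For part (b), convexity is immediate since Proposition~\ref{prop:closure} defines $h_{\rm closure}$ as a convex closure; but I would prefer to exhibit it directly from \eq{closure}, which also prepares part (c). Given $\mu^0,\mu^1$ and $\lambda\in[0,1]$, I would pick couplings $\gamma^0,\gamma^1$ nearly attaining the infima for $\mu^0$ and $\mu^1$. Then $\gamma^\lambda := (1-\lambda)\gamma^0+\lambda\gamma^1$ has $i$-th marginal $(1-\lambda)\mu_i^0+\lambda\mu_i^1$, so it is admissible for $\mu^\lambda:=(1-\lambda)\mu^0+\lambda\mu^1$, and $\int_\X H\, d\gamma^\lambda$ is the corresponding convex combination of $\int_\X H\, d\gamma^0$ and $\int_\X H\, d\gamma^1$ by linearity of the integral. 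Taking the infimum over $\gamma^0,\gamma^1$ yields $h_{\rm closure}(\mu^\lambda)\leqslant(1-\lambda)h_{\rm closure}(\mu^0)+\lambda\, h_{\rm closure}(\mu^1)$.

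For part (c), I would establish equality of the optimal values by two inequalities. Since $\X$ is compact and $H$ continuous, $H$ attains its minimum at some $x^\ast$; evaluating at $\delta_{x^\ast}$ and using part (a) gives $\inf_\mu h_{\rm closure}(\mu)\leqslant H(x^\ast)=\min_{x} H(x)$. Conversely, for every admissible $\gamma$ one has $\int_\X H\, d\gamma\geqslant \int_\X \big(\min_{x'}H(x')\big)\, d\gamma=\min_{x'}H(x')$ because $\gamma$ is a probability measure, so $h_{\rm closure}(\mu)\geqslant\min_{x} H(x)$ for every $\mu$; the two bounds coincide. To recover minimizers, I would use that Proposition~\ref{prop:closure} guarantees the infimum in \eq{closure} is attained by some $\gamma^\ast$: if $\mu$ minimizes $h_{\rm closure}$ then $\int_\X H\, d\gamma^\ast=\min_{x} H(x)$, and since $H(x)\geqslant\min_{x} H(x)$ pointwise this forces $\gamma^\ast$ to be supported on the set $\mathcal{M}$ of minimizers of $H$, so any point of $\mathrm{supp}(\gamma^\ast)$ is a minimizer; in the other direction $\delta_{x^\ast}$ minimizes $h_{\rm closure}$.

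The only genuinely delicate point is the ``recover minimizers'' clause of (c): it relies on attainment of the transport infimum (supplied by Proposition~\ref{prop:closure}) together with the elementary fact that a continuous nonnegative integrand whose integral equals its minimum value must coincide with that minimum $\gamma^\ast$-almost everywhere. Everything else is bookkeeping with marginals, linearity of the integral, and the weak-topology compactness of each $\mathcal{P}(\X_i)$; I would keep the measure-theoretic details light since the structure is exactly the set-function argument, now phrased through couplings.
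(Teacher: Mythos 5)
Your proposal is correct and takes essentially the same route as the paper: all three properties are read off the optimal-transport representation \eq{closure} of $h_{\rm closure}$, with convexity obtained by mixing couplings (the paper phrases this as partial minimization of a problem jointly convex in $(\gamma,\mu)$), and part (c) obtained from the fact that infimizing over $\mu$ and then over couplings with marginals $\mu$ amounts to infimizing over all of $\mathcal{P}(\X)$, together with the support argument for extracting minimizers of $H$ from an optimal coupling. The only difference worth noting is that your part (a) proves the stronger statement $h_{\rm closure}(\delta_x)=H(x)$ via the (valid) observation that the unique coupling of Dirac marginals is the product Dirac; this is more than the claimed inequality, and it in fact shows that the paper's parenthetical remark that the inequality ``may be strict in general'' cannot occur for the transport representation of $h_{\rm closure}$.
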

\begin{proof}
Property (a) is obvious from the definition. Note that in general, the inequality may be strict (it will not for submodular functions). Since the objective function and constraint set in \eq{closure} are jointly convex in $\gamma$ and $\mu$, the infimum with respect to $\gamma$ is thus convex in $\mu$, which implies property (b). In order to show (c), we note that $\inf_{\mu \in \mathcal{P}^\otimes(\X)} h_{\rm closure}(\mu)$ is trivially less than $\inf_{x \in \X} H(x)$ because of (a), and we consider the sequence of equalities:
\BEAS
\inf_{x \in \X} H(x)& = &  \inf_{\gamma \in \mathcal{P}(\X)} \int_{\X} H(x) d \gamma(x)
 = \inf_{\mu \in \mathcal{P}^\otimes(\X)}  \inf_{\gamma \in \mathcal{P}(\X)}  \int_{\X} H(x) d \gamma(x)
 \mbox{ such that } \forall i, \ \gamma_i = \mu_i \\
 & = & \inf_{\mu \in \mathcal{P}^\otimes(\X)}  h(\mu).
\EEAS
Moreover, any optimal $\gamma$ is supported on the (compact) set of minimizers of $H$ on $\X$. Thus any optimal $\mu$ is the set of marginals of any distribution $\gamma$ supported on the minimizers of $H$.
\end{proof} 
 
While the convex closure is attractive because it is convex and allows the minimization of $H$,  the key difficulty in general is that $ {h}_{\rm closure}$ cannot be computed in general. These are opposite properties to the extension $h_{\rm cumulative}$ based on cumulative distribution functions. We now show that the two extensions are equal when $H$ is submodular.

\subsection{Equivalence between the two extensions through one-dimensional optimal transport}
\label{sec:transport}
We have seen two possible extensions of $H: \X \to \rb$ to $h:  \mathcal{P}^\otimes(\X) \to \rb$. When $H$ is submodular, the two are equal, as a consequence of the following proposition, which is itself obtained directly from the theory of multi-marginal optimal transport between one-dimensional distributions~\cite{carlier2003class}.

\begin{proposition}[One-dimensional multi-marginal transport]
\label{prop:transport}
Let $\X_i$ be a compact subset of~$\rb$, for $i \in \{1,\dots,n\}$ and $H$ a continuous submodular function defined on
$\X = \prod_{i=1}^n \X_i$. Then the two functions $h_{\rm cumulative}$ defined in \eq{ext} and $h_{\rm closure}$ defined in \eq{closure} are equal.

\end{proposition}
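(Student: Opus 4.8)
The plan is to show the two extensions coincide by establishing that, for submodular $H$, the specific coupling $\gamma^\ast$ induced by the common threshold $t$ in the definition of $h_{\rm cumulative}$ is in fact an \emph{optimal} coupling for the multi-marginal transport problem defining $h_{\rm closure}$. Since Proposition~\ref{prop:closure} already gives $h_{\rm closure}(\mu) = \inf_\gamma \int_\X H \, d\gamma$ over couplings with the prescribed marginals $\mu_i$, and since $\int_0^1 H[F_{\mu_1}^{-1}(t),\dots,F_{\mu_n}^{-1}(t)]\,dt$ is exactly $\int_\X H\,d\gamma^\ast$ for the pushforward $\gamma^\ast$ of Lebesgue measure on $[0,1]$ under $t \mapsto (F_{\mu_1}^{-1}(t),\dots,F_{\mu_n}^{-1}(t))$, the inequality $h_{\rm closure}(\mu) \leqslant h_{\rm cumulative}(\mu)$ is immediate once I check that $\gamma^\ast$ has the right marginals. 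That marginal check follows from the equivalence $F_{\mu_i}(x_i) \geqslant t \Leftrightarrow F_{\mu_i}^{-1}(t) \geqslant x_i$ stated earlier: the pushforward of the $i$-th coordinate recovers $\mu_i$.

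The substance is the reverse inequality, i.e.\ optimality of the monotone rearrangement coupling $\gamma^\ast$. First I would recall the classical one-dimensional fact (the case $n=2$ of \cite{carlier2003class}, or equivalently Lorentz's rearrangement inequality) that when $H$ is submodular, the common-quantile coupling minimizes $\int H\,d\gamma$. For a self-contained argument I would use the dual characterization \eq{dual}: it suffices to exhibit potentials $v_i \in \mathcal{V}(H)$ for which complementary slackness holds, i.e.\ $\sum_i v_i(x_i) = H(x)$ holds $\gamma^\ast$-almost everywhere while $\sum_i v_i(x_i) \leqslant H(x)$ holds everywhere. Weak duality from \eq{dual} then forces $\int_\X H\,d\gamma^\ast = \sum_i \int v_i\,d\mu_i \leqslant h_{\rm closure}(\mu)$, which combined with the first inequality yields equality.

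Alternatively, and perhaps more transparently, I would argue optimality directly by a rearrangement/exchange argument. Take any competing coupling $\gamma$ with the correct marginals. Because $\gamma$ and $\gamma^\ast$ share the same one-dimensional marginals, I can reduce to comparing, pairwise, pairs of points that are ``out of order'' in $\gamma$ but ``in order'' in $\gamma^\ast$; the defining submodular inequality \eq{submoddef}, $H(x)+H(y) \geqslant H(\min\{x,y\}) + H(\max\{x,y\})$, says precisely that sorting any two coordinates into comonotone position does not increase $H$. Iterating this monotone-rearrangement step (in the discrete case an explicit finite swapping argument; in the continuous case passing to the limit using continuity of $H$ and compactness of $\X$) drives $\gamma$ toward $\gamma^\ast$ without increasing the transport cost, giving $\int_\X H\,d\gamma^\ast \leqslant \int_\X H\,d\gamma$ and hence $\int_\X H\,d\gamma^\ast \leqslant h_{\rm closure}(\mu)$.

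The main obstacle is the rigorous execution of this rearrangement argument in the general, non-discrete setting: turning ``sort the mass into comonotone position'' into a precise statement requires either a clean reduction to the measure-theoretic monotone rearrangement (justifying that the common-quantile coupling is the unique comonotone coupling with the given marginals) or the careful construction of the dual potentials $v_i$ with the stated complementary slackness. I expect to lean on \cite{carlier2003class} for the $n$-marginal optimality statement and on the compactness/continuity hypotheses to pass any discrete swapping argument to the limit; verifying marginals and the extension property for $\gamma^\ast$ is routine by comparison.
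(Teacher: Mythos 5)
Your first half (the inequality $h_{\rm closure}(\mu) \leqslant h_{\rm cumulative}(\mu)$, obtained by checking that the pushforward coupling $\gamma^\ast$ of Lebesgue measure under $t \mapsto (F_{\mu_1}^{-1}(t),\dots,F_{\mu_n}^{-1}(t))$ has marginals $\mu_i$) is correct and is exactly the paper's step (1). The genuine gap is in the reverse inequality, which is where the entire content of the proposition lies: you propose two routes and complete neither. For your dual route, you would need to actually exhibit potentials $v_i \in \mathcal{V}(H)$ satisfying complementary slackness with respect to $\gamma^\ast$; you defer this as the ``main obstacle,'' and your fallback of leaning on \cite{carlier2003class} for the $n$-marginal optimality statement is precisely what the paper rules out at the start of its proof (``\cite{carlier2003class} does not provide exactly our result''). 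Moreover, without an explicit construction, the natural source of such potentials --- strong duality plus complementary slackness --- presupposes that $\gamma^\ast$ is optimal, which is the very thing to be proved. The paper inverts this logic: it takes an arbitrary \emph{optimal} plan $\gamma$ (the infimum in \eq{closure} is attained), invokes the duality from Proposition~\ref{prop:closure} to obtain potentials with $\sum_i v_i(x_i) = H(x)$ on the support of $\gamma$ and $\sum_i v_i \leqslant H$ everywhere, and then deduces that any two support points $x, x'$ must be comparable: otherwise, evaluating the constraint at $\min\{x,x'\}$ and $\max\{x,x'\}$ yields $H(x)+H(x') \leqslant H(\min\{x,x'\}) + H(\max\{x,x'\})$, contradicting \emph{strict} submodularity. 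Combined with a separate measure-theoretic lemma (the paper's step (2)) that a coupling whose support is totally ordered and whose marginals are the $\mu_i$ must equal $\gamma_{\rm mon}$, this identifies the optimal plan as $\gamma_{\rm mon}$.

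Two specific ideas are therefore missing from your proposal. First, the reduction ``any optimal plan has monotone support, and monotone support plus prescribed marginals determines the plan uniquely,'' which is what lets the paper avoid constructing dual potentials for $\gamma_{\rm mon}$ altogether; you gesture at the uniqueness statement but do not prove it, and you never make the reversal from ``potentials for $\gamma_{\rm mon}$'' to ``potentials for the optimal plan.'' Second, the strictness/perturbation device: the complementary-slackness contradiction only works when $H$ is \emph{strictly} submodular, so the paper first proves the result for $H + \varepsilon \sum_{i \neq j}(x_i - x_j)^2$ and then lets $\varepsilon \to 0$; your proposal nowhere addresses the fact that plain submodularity yields no contradiction, and any completed version of either of your routes needs this (or an equivalent) regularization step. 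Your exchange-argument route can in principle be made rigorous --- it is the Lorentz-style rearrangement proof --- but the termination of the pairwise swapping for $n \geqslant 3$ marginals, the passage to the limit for non-discrete measures, and again the uniqueness of the comonotone coupling are all left unproven, so as written it is a plan rather than a proof.
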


\begin{proof}
Since~\cite{carlier2003class} does not provide exactly our result, we give a detailed proof here from first principles.
In order to prove this equivalence, there are three potential proofs: (a) based on convex duality, by exhibiting primal and dual candidates (this is exactly the traditional proof for submodular set-functions~\cite{lovasz1982submodular,edmonds}, which is cumbersome for continuous domains, but which we will follow in \mysec{discrete} for finite sets); (b) based on Hardy-Littlewood's inequalities~\cite{lorentz1953inequality}; or (c) using properties of optimal transport. We consider the third approach (based on the two-marginal proof of~\cite{santambrogio2015optimal}) and use four steps:
\BIT
\item[(1)] We define $\gamma_{\rm mon} \in \mathcal{P}(\X)$ as the distribution of $(F_{\mu_1}^{-1}(t),\dots, F_{\mu_n}^{-1}(t) ) \in \X$ for $t$ uniform in $[0,1]$. See an illustration in \myfig{transport}.
The extension $h_{\rm cumulative}$ corresponds to the distribution $\gamma_{\rm mon}$ and we thus need to show that $\gamma_{\rm mon}$ is an optimal distribution. In this context, probability distributions, i.e., elements of $\mathcal{P}(\X)$ with given marginals are often referred to as ``transport plan'', a terminology we now use.

 The transport plan $\gamma_{\rm mon}$ is trivially ``monotone'' so that two elements of its support are comparable for the partial order $x \leqslant x'$ if all components of $x$ are less than or equal to the corresponding components of $x'$. Moreover, is it such that
$\gamma_{\rm mon} \Big(
 \prod_{i=1}^n \big\{ y_i \in \X_i,  y_i \geqslant x_i \big\}
 \Big) $
 is the Lebesgue measure of the set of $t \in [0,1]$ such that $F_{\mu_i}^{-1}(t) \geqslant x_i$ for all $i$, that is such that 
 $F_{\mu_i}(x_i) \leqslant t$ for all $i$, thus 
 \BEQ
 \label{eq:gammmon}
 \gamma_{\rm mon} \bigg(
 \prod_{i=1}^n \big\{ y_i \in \X_i,  y_i \geqslant x_i \big\}
 \bigg) = \max_{i \in \{1,\dots,n\}} F_{\mu_i}(x_i).
 \EEQ

\begin{figure}

\begin{center}
\includegraphics[scale=.6]{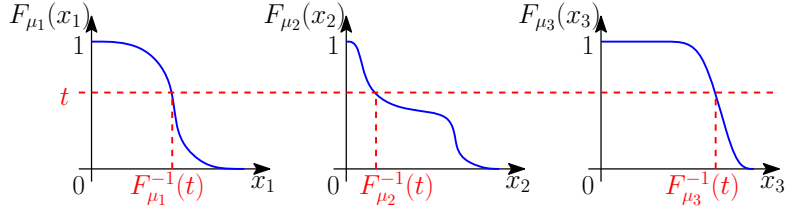}
\end{center}

\vspace*{-.5cm}

\caption{Multi-marginal optimal transport by thresholding inverse cumulative distribution functions: definition of the transport plan $\gamma_{\rm mon}$.}
\label{fig:transport}
\end{figure}

\item[(2)] We show that if $\gamma \in \mathcal{P}(\X)$ is a distribution so that  any two elements $x,x' \in \X$ of its support are comparable, then it is equal to $\gamma_{\rm mon}$. We simply need to compute the mass of a product of rectangle as in \eq{gammmon}. For $n=2$ marginals, we consider the 4 possible combinations of the sets $\big\{ y_i \in \X_i,  y_i \geqslant x_i \big\}$, $i \in \{1,2\}$ and their complements. because of the comparability assumption, either $\big\{ y_1 \in \X_1,  y_1 \geqslant x_1 \big\} \times \big\{ y_2 \in \X_2,  y_2 < x_2 \big\} $ or 
$\big\{ y_1 \in \X_1,  y_1 < x_1 \big\} \times \big\{ y_2 \in \X_2,  y_2 \geqslant x_2 \big\} $ is empty (see \myfig{monotone}), which implies that the measure of $\big\{ y_1 \in \X_1,  y_1 \geqslant x_1\big\} \times \big\{ y_2 \in \X_2,  y_2\geqslant x_2 \big\}$ is either
$F_{\mu_1}(x_1)$ or $F_{\mu_2}(x_2)$, and hence larger than the maximum of these two. The fact that it is lower is trivial, hence the result.

\begin{figure}

\begin{center}
\includegraphics[scale=.6]{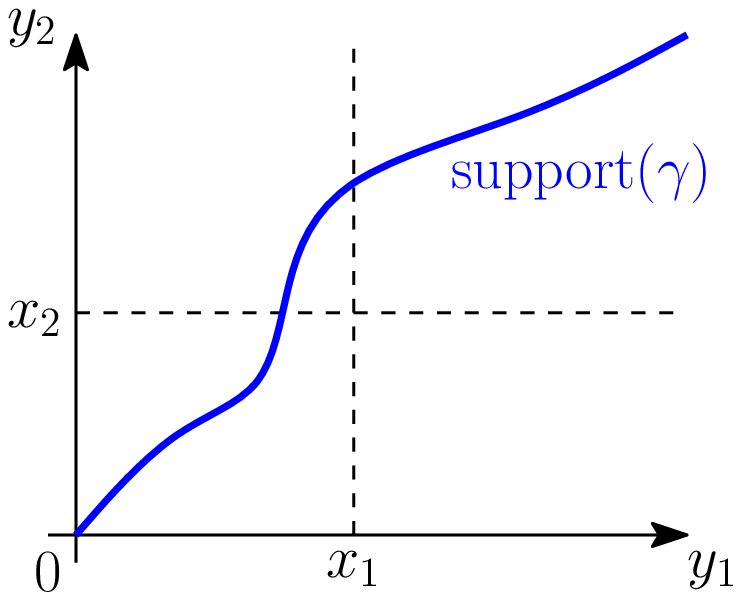}
\end{center}

\vspace*{-.5cm}

\caption{Monotone transport plan $\gamma$ such that $\big\{ y_1 \in \X_1,  y_1 \geqslant x_1 \big\} \times \big\{ y_2 \in \X_2,  y_2 < x_2 \big\} $  is empty, leading to $\gamma\Big(\big\{ y_1 \in \X_1,  y_1 \geqslant x_1\big\} \times \big\{ y_2 \in \X_2,  y_2\geqslant x_2 \big\}\Big) = F_{\mu_1}(x_1)$.}
\label{fig:monotone}
\end{figure}

\item[(3)]  If $H$ is strictly submodular, any optimal transport plan $\gamma \in \mathcal{P}(\X)$ satisfies the property above of monotone support. Indeed, let us assume that $x$ and $x'$ are two non-comparable elements of the support of $\gamma$. From convex duality used in the proof of Proposition~\ref{prop:closure}, in particular, \eq{dualopt}, there exist continuous potentials $v_i: \X_i \to \rb$ such that $H(x) = \sum_{i=1}^n v_i(x_i)$ and $H(x') = \sum_{i=1}^n v_i(x_i')$ (because of complementary slackness applied to any element of the support of $\gamma$), while for any $y \in \X$, we simply have $H(y) \geqslant \sum_{i=1}^n v_i(y_i)$. By considering $y = \max\{x,x'\}$ and $y = \min\{x,x'\}$, we obtain:
$
H(x) + H(x') \leqslant H( \max\{x,x'\})
+  H( \min\{x,x'\})
$, which is in contradiction with the strict submodularity of $H$. Thus any optimal plan has to be equal to $\gamma_{\rm mon}$ for strictly submodular functions.

\item[(4)] When $H$ is submodular, by adding $\varepsilon \sum_{ i \neq j} (x_i - x_j)^2$, we obtain a strictly submodular function and by letting $\varepsilon$ tend to zero, we obtain the desired result.

\EIT
\end{proof}

\subsection{Relationship between convexity and submodularity}

\label{sec:extension}  
We can now prove our first formal result relating convexity and submodularity, that extends the similar result of~\cite{lovasz1982submodular} from set-functions to all continuous functions. Given the theory of multi-marginal optimal transport outlined above, the proof is straightforward and our result provides an alternative proof even for set-functions; note that while an implicit connection had been made for $n=2$ through monotonicity properties of optimal assignment problems~\cite{shapley1962complements}, the link we propose here is novel.
\begin{theorem}[Convexity and submodularity]
\label{theo:lov}
 Assume $H$ is continuous and all $\X_i$'s are compact. The extension $h_{\rm cumulative}$ defined in \eq{ext} is convex if and only if $H$ is submodular.
\end{theorem}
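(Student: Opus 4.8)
The plan is to prove the two implications separately, with the reverse direction (``$H$ submodular $\Rightarrow$ $h_{\rm cumulative}$ convex'') being essentially free given the results already established, and the forward direction (``$h_{\rm cumulative}$ convex $\Rightarrow$ $H$ submodular'') requiring a single explicit computation.

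For the direction ``$H$ submodular $\Rightarrow$ $h_{\rm cumulative}$ convex'', I would just assemble the machinery already in place: by Proposition~\ref{prop:transport}, when $H$ is submodular the two extensions coincide, $h_{\rm cumulative} = h_{\rm closure}$, and by Proposition~\ref{prop:propclo}(b) the convex closure $h_{\rm closure}$ is convex for \emph{any} continuous $H$. Hence $h_{\rm cumulative}$ is convex, and no additional argument is needed.

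For the converse I would argue in a quantitative form, showing that convexity of $h_{\rm cumulative}$ forces the submodular inequality \eq{submoddef} at every pair $(x,y)$. Fix $x,y \in \X$ and test convexity along the segment joining the two Dirac tuples $\delta_x = (\delta_{x_1},\dots,\delta_{x_n})$ and $\delta_y = (\delta_{y_1},\dots,\delta_{y_n})$ in $\mathcal{P}^\otimes(\X)$. Since convex combinations in $\prod_{i=1}^n \mathcal{P}(\X_i)$ are taken componentwise, the midpoint is the tuple $\bar\mu$ with $\bar\mu_i = \tfrac12 \delta_{x_i} + \tfrac12 \delta_{y_i}$; and by the extension property established right after the definition, $h_{\rm cumulative}(\delta_x) = H(x)$ and $h_{\rm cumulative}(\delta_y) = H(y)$. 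The crux is then to evaluate $h_{\rm cumulative}(\bar\mu)$. Writing $a_i = \min\{x_i,y_i\}$ and $b_i = \max\{x_i,y_i\}$, the two-point measure $\bar\mu_i$ has reversed cumulative function equal to $1$ up to $a_i$, to $\tfrac12$ on $(a_i,b_i]$, and to $0$ beyond $b_i$, so its inverse satisfies $F_{\bar\mu_i}^{-1}(t) = a_i$ for $t>\tfrac12$ and $F_{\bar\mu_i}^{-1}(t) = b_i$ for $t<\tfrac12$, uniformly in $i$ (the coordinates with $x_i=y_i$ cause no issue, and $t=\tfrac12$ is Lebesgue-negligible). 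Plugging the common threshold $t$ into \eq{ext} therefore yields $h_{\rm cumulative}(\bar\mu) = \tfrac12 H(\min\{x,y\}) + \tfrac12 H(\max\{x,y\})$, because above the threshold every coordinate selects the minimum and below it every coordinate selects the maximum. Midpoint convexity of $h_{\rm cumulative}$ then reads $\tfrac12 H(\min\{x,y\}) + \tfrac12 H(\max\{x,y\}) \leqslant \tfrac12 H(x) + \tfrac12 H(y)$, which is exactly \eq{submoddef}.

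The only delicate point is the evaluation of $h_{\rm cumulative}(\bar\mu)$: one must observe that the single-threshold coupling in \eq{ext} turns the arbitrary pair $(x,y)$ into its comonotone rearrangement $\big(\min\{x,y\},\max\{x,y\}\big)$, which is precisely the configuration on which submodularity of $H$ is tested. Everything else --- the componentwise midpoint, the extension property at Diracs, and passing from midpoint convexity to the stated inequality --- is routine. Note that this ``only if'' argument does not use Proposition~\ref{prop:transport}, so it remains self-contained and elementary.
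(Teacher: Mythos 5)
Your proposal is correct and follows essentially the same route as the paper's own proof: the submodular-implies-convex direction via Proposition~\ref{prop:transport} combined with the convexity of $h_{\rm closure}$, and the converse by testing midpoint convexity on Dirac tuples and computing that the single-threshold formula in \eq{ext} evaluates the half-half mixture to $\tfrac12 H(\min\{x,y\}) + \tfrac12 H(\max\{x,y\})$. Your treatment is, if anything, slightly more careful than the paper's on boundary values of the cumulative functions and on coordinates with $x_i = y_i$, but these are cosmetic refinements of the identical argument.
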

\begin{proof}
  We first assume that $H$ is submodular.
As shown in Proposition~\ref{prop:transport}, optimal transport problems on subsets of real numbers with submodular costs are known to have closed-form solutions~\cite{carlier2003class}, which leads to the convexity of $h_{\rm cumulative}
= h_{\rm closure}$.

We now assume that the function $h_{\rm cumulative}$ is convex. Following the argument of~\cite{lorentz1953inequality} for the related problem of rearrangement inequalities and~\cite{lovasz1982submodular} for submodular set-functions, we consider two arbitrary elements $a$ and $b$ in $\X$, and the Dirac measures $\delta_{a_i}$ and $\delta_{b_i}$. We have, by convexity:
$$h_{\rm cumulative}\big( \frac{1}{2} \delta_{a_1} +\frac{1}{2}\delta_{b_1}, \dots, \frac{1}{2} \delta_{a_n} +\frac{1}{2}\delta_{b_n}\big)
\leqslant \frac{1}{2} h_{\rm cumulative} (  \delta_{a_1} , \dots,  \delta_{a_n}   )
+\frac{1}{2} h_{\rm cumulative} (  \delta_{b_1} , \dots,  \delta_{b_n}   ).
$$
The right-hand side is equal to $\frac{1}{2} H(a) + \frac{1}{2} H(b)$, while we can compute the left-hand side by computing
$
F_{\frac{1}{2} \delta_{a_i} +\frac{1}{2}\delta_{b_i}}(x_i)
$, which is equal to $0$ if $x_i > \max \{a_i,b_i\}$, $1$ if $x_i < \min \{a_i,b_i\}$ and $\frac{1}{2}$ if $x_i 
\in ( \min \{a_i,b_i\}, \max \{a_i,b_i\} )$. This implies that $
F_{\frac{1}{2} \delta_{a_i} +\frac{1}{2}\delta_{b_i}}^{-1}(t)$ is equal to $ \min \{ a_i, b_i\}$ if $t>\frac{1}{2}$, and to 
$ \max \{ a_i, b_i\}$ if $t<\frac{1}{2}$. Thus the left-hand side of the inequality above is equal to
$\frac{1}{2} H(\min \{a,b\} ) + \frac{1}{2} H(\max \{a,b\})$. Hence, the submodularity.
\end{proof}

From now on, we will assume that $H$ is submodular and refer to $h$ as its extension, which is both defined as a convex closure and through cumulative distribution functions. Note that a consequence of Theorem~\ref{theo:lov} is that for submodular functions, the closure of the sum is the sum of the closures, which is  not true in general.

We now show that minimizing the extension is equivalent to minimizing  the original function, implying that we may minimize any submodular function as a convex optimization problem:
\begin{theorem}[Equivalent minimization problems]
\label{theo:eq}
 Assume each $\X_i$ is compact, $i=1,\dots,n$.
If $H$ is submodular, then 
\BEQ
\displaystyle \inf_{x \in \X} H(x) = \inf_{ \mu \in \mathcal{P}^\otimes(\X)} h(\mu).
\EEQ
Moreover, $\mu$ is a minimizer if and only if $F_\mu^{-1}(t)$ is a minimizer of $H$ for almost all $t \in [0,1]$. 
\end{theorem}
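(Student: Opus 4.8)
The plan is to deduce both assertions from the material already established: Proposition~\ref{prop:transport} lets us identify $h = h_{\rm closure} = h_{\rm cumulative}$ for submodular $H$, so I can freely switch between the two representations and use whichever is convenient for each claim.

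For the equality of the two infima, I would simply combine Proposition~\ref{prop:propclo}(c), which states that $\inf_{\mu} h_{\rm closure}(\mu) = \inf_{x \in \X} H(x)$, with Proposition~\ref{prop:transport}, which gives $h_{\rm closure} = h_{\rm cumulative} = h$ under submodularity. This step is essentially immediate and carries no real content beyond the two cited results.

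For the characterization of minimizers I would instead work with the cumulative representation $h(\mu) = \int_0^1 H\big[F_{\mu_1}^{-1}(t), \dots, F_{\mu_n}^{-1}(t)\big]\, dt$ from \eq{ext}. Write $m = \inf_{x \in \X} H(x)$, which is attained because $H$ is continuous and $\X$ is compact. For every $t$ the point $F_\mu^{-1}(t) = (F_{\mu_1}^{-1}(t), \dots, F_{\mu_n}^{-1}(t))$ lies in $\X$, so $H(F_\mu^{-1}(t)) \geqslant m$; integrating over $t$ gives $h(\mu) \geqslant m$ (which re-proves the first part directly). Consequently $\mu$ is a minimizer if and only if $\int_0^1 \big[ H(F_\mu^{-1}(t)) - m \big]\, dt = 0$. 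The integrand is nonnegative and measurable, so this integral vanishes if and only if the integrand is zero for almost every $t$, i.e.\ $H(F_\mu^{-1}(t)) = m$ — equivalently $F_\mu^{-1}(t)$ minimizes $H$ — for almost all $t \in [0,1]$. Reading this equivalence in each direction yields both implications of the stated claim.

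I do not expect a genuine obstacle here: the substance is entirely carried by Propositions~\ref{prop:propclo} and~\ref{prop:transport}, and this theorem is a short consequence. The only points deserving a line of justification are that $t \mapsto H(F_\mu^{-1}(t))$ is measurable (a continuous function composed with the monotone, hence Borel, map $F_\mu^{-1}$) and bounded (continuity on the compact set $\X$), so that the integral and the ``zero almost everywhere'' argument are legitimate; both are routine. The one elementary fact worth invoking explicitly is that a nonnegative integrable function has vanishing integral exactly when it is zero almost everywhere, which is precisely what converts the scalar optimality $h(\mu) = m$ into the pointwise statement about $F_\mu^{-1}(t)$.
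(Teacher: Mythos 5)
Your proposal is correct and follows essentially the same route as the paper: the equality of infima is obtained from the convex-closure representation (Proposition~\ref{prop:propclo} together with the identification $h_{\rm closure}=h_{\rm cumulative}$ of Proposition~\ref{prop:transport}), and the characterization of minimizers comes from the cumulative representation via the observation that a nonnegative integrand with zero integral vanishes almost everywhere. The only cosmetic difference is that you spell out both directions of the ``if and only if'' and note the measurability/boundedness details, which the paper leaves implicit.
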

\begin{proof}
Since $h$ is the convex closure of $H$, the two infima have to be equal. Indeed, from Proposition~\ref{prop:propclo}, we have
$$
\inf_{ \mu \in \mathcal{P}^\otimes(\X)} h(\mu)
= \inf_{ \mu \in \mathcal{P}^\otimes(\X)}  \inf_{ \gamma \in \Pi(\mu) } \int_{\X} H(x) d\gamma(x),
$$
which is the infimum over all probability measures on $\X$ (without any marginal constraints). It is thus achieved at a Dirac measure at any minimizer $x \in \X$ of $H(x)$.

Moreover,  given a minimizer $\mu$ for the convex problem, we have:
$$\inf_{x \in \X} H(x) = h(\mu) =   \int_0^1 H \big[  F_{\mu_1}^{-1}(t),\dots, F_{\mu_n}^{-1}(t) \big] dt  \geqslant \int_0^1\inf_{x \in \X} H(x) dt = \inf_{x \in \X} H(x).$$
 Thus, for almost all $t \in [0,1]$, $\big( F_{\mu_1}^{-1}(t),\dots, F_{\mu_n}^{-1}(t)  \big) \in \X$ is a minimizer of $H$.
\end{proof}
From the proof above, we see that a minimizer of $H$ may be obtained from a minimizer of $h$ by inverting the cumulative distribution for any $t$. Many properties are known regarding minimizers of submodular functions~\cite{topkis1978minimizing}, i.e., if $x$ and $y$ are minimizers of $H$, so are $\min\{x,y\}$ and $\max\{x,y\}$. As opposed to convex function where imposing strict convexity leads to a unique minimizer, imposing strict submodularity only imposes that   the set of minimizers forms a chain.   

In practice, given a (potentially approximate) minimizer $\mu$ and for discrete domains, we can look at the minimal value of $ H \big[  F_{\mu_1}^{-1}(t),\dots, F_{\mu_n}^{-1}(t) \big]$ over all $t \in [0,1]$ by enumerating and sorting all possible  values of $F_{\mu_i}(x_i)$ (see \mysec{greedy}). Moreover, we still need a subgradient of $h$ for our optimization algorithms. We will consider these in the simpler situation of finite sets in \mysec{discrete}.

\paragraph{Dual of submodular function minimization.}
Algorithms for minimizing submodular set-functions rely on a dual problem which allows to provide optimality certificates. We obtain a similar dual problem in the general situation as we now show:

\begin{proposition}[Dual of submodular function minimization]
\label{prop:dualsfm}
The problem of minimizing $H(x)$ over $x \in \X$, has the following dual  formulation 
\BEQ
\label{eq:dualsfm}
\inf_{x \in \X} H(x) = \inf_{ \mu \in \mathcal{P}^\otimes(\X)} h(\mu) 
=  \sup_{v\in \mathcal{V}(H)} \sum_{i=1}^n \inf_{x_i \in \X_i} v_i(x_i).
\EEQ
\end{proposition}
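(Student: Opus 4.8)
The plan is to take the first equality $\inf_{x\in\X} H(x) = \inf_{\mu} h(\mu)$ as already supplied by Theorem~\ref{theo:eq}, and to concentrate on the second equality, which is a convex-duality statement. The natural starting point is the dual representation of the extension given by Proposition~\ref{prop:closure}: since $H$ is submodular we have $h = h_{\rm closure}$, and therefore $h(\mu) = \sup_{v \in \mathcal{V}(H)} \sum_{i=1}^n \int_{\X_i} v_i \, d\mu_i$. Substituting this into $\inf_\mu h(\mu)$ turns the quantity to be computed into the inf-sup $\inf_{\mu \in \mathcal{P}^\otimes(\X)} \sup_{v \in \mathcal{V}(H)} \sum_{i=1}^n \int_{\X_i} v_i\, d\mu_i$, so that the whole proof reduces to exchanging the order of the infimum and the supremum.

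First I would record the easy direction. For every $v \in \mathcal{V}(H)$ and every $\mu$, the integral $\int_{\X_i} v_i \, d\mu_i$ is at least $\inf_{x_i \in \X_i} v_i(x_i)$ because $\mu_i$ is a probability measure; hence the weak minimax inequality already gives $\inf_\mu h(\mu) \geqslant \sup_v \sum_{i=1}^n \inf_{x_i} v_i(x_i)$, which is exactly weak duality for the minimization of $H$. The content of the proposition is the reverse inequality, i.e. the absence of a duality gap.

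For the exchange I would invoke Sion's minimax theorem. The objective $(\mu, v) \mapsto \sum_{i=1}^n \int_{\X_i} v_i\, d\mu_i$ is bilinear, hence affine and (for continuous $v_i$) weak-$*$ continuous in $\mu$, and affine in $v$; the feasible set $\mathcal{P}^\otimes(\X) = \prod_{i=1}^n \mathcal{P}(\X_i)$ is convex and, because each $\X_i$ is compact, weak-$*$ compact (Banach--Alaoglu / Prokhorov), while $\mathcal{V}(H)$ is convex. These are precisely the hypotheses under which the infimum and supremum may be swapped. Once the order is reversed, the inner infimum separates across coordinates, since the objective is a sum of terms each involving a single $\mu_i$, and for each $i$ the minimization of the linear functional $\mu_i \mapsto \int_{\X_i} v_i\, d\mu_i$ over $\mathcal{P}(\X_i)$ is attained at a Dirac mass, giving $\inf_{\mu_i \in \mathcal{P}(\X_i)} \int_{\X_i} v_i\, d\mu_i = \inf_{x_i \in \X_i} v_i(x_i)$ (the infimum being attained because $v_i$ is continuous and $\X_i$ compact). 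This produces exactly $\sup_{v \in \mathcal{V}(H)} \sum_{i=1}^n \inf_{x_i} v_i(x_i)$ and closes the chain of equalities.

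The main obstacle is the justification of the minimax exchange: everything rests on choosing the correct topology so that the set of measures is compact and the objective is continuous in $\mu$. This is the same compactness of the $\X_i$ and continuity of the potentials that already underlie the strong-duality step in the proof of Proposition~\ref{prop:closure}. Indeed, one may alternatively regard the statement as the optimal-transport duality of Proposition~\ref{prop:closure} with the marginals $\mu_i$ themselves left free: the potentials $v_i$ are the Lagrange multipliers for the (now vacuous) marginal constraints, and minimizing the Lagrangian over the free $\mu_i$ produces exactly the terms $\inf_{x_i} v_i(x_i)$.
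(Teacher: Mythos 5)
Your proposal is correct and follows essentially the same route as the paper's proof: the first equality is taken from Theorem~\ref{theo:eq}, the dual representation $h(\mu) = \sup_{v \in \mathcal{V}(H)} \sum_{i=1}^n \int_{\X_i} v_i \, d\mu_i$ from Proposition~\ref{prop:closure} is substituted, the infimum and supremum are exchanged by a minimax/convex-duality argument, and the inner minimization over $\mathcal{P}(\X_i)$ of a linear functional is attained at Dirac masses. The only difference is cosmetic: where the paper cites a convex-duality reference for the exchange, you justify it explicitly via Sion's theorem using weak-$*$ compactness of $\mathcal{P}^\otimes(\X)$, which is a legitimate filling-in of the same step.
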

\begin{proof}
We have $\inf_{x \in \X} H(x) = \inf_{ \mu \in \mathcal{P}^\otimes(\X)} h(\mu) $ from Theorem~\ref{theo:eq}. Moreover, we may use convex duality~\cite{mitter2008convex} like in Prop.~\ref{prop:closure} to get:
\BEAS
\inf_{ \mu \in \mathcal{P}^\otimes(\X)} h(\mu)  & = & 
\inf_{ \mu \in \mathcal{P}^\otimes(\X)}\sup_{v \in \mathcal{V}(H)}  
 \sum_{i=1}^n  \int_{\X_i}v_i(x_i) d\mu_i(x_i) \\
 & = & 
\sup_{v \in \mathcal{V}(H)}  \sum_{i=1}^n \inf_{\mu_i \in   \mathcal{P}(\X_i)} \int_{\X_i}v_i(x_i) d\mu_i(x_i)
=  \sup_{v\in \mathcal{V}(H)} \sum_{i=1}^n \inf_{x_i \in \X_i} v_i(x_i).
\EEAS
\end{proof}
It allows to provide certificates of optimality when minimizing $H$. Note that like for set-functions, checking that a given function $v$ is in $\mathcal{V}(H)$ is difficult, and therefore, most algorithms will rely on convex combinations of outputs of the greedy algorithm presented in \mysec{discrete}.

 \subsection{Strongly-convex separable submodular function minimization}
 \label{sec:sep}
 \label{sec:separable}
 
 In the set-function situation, minimizing the Lov\'asz extension plus a separable convex function has appeared useful in several scenarios~\cite{chambolle2009total,fot_submod}, in particular because a single of such problems leads to a solution to a continuously parameterized set of submodular minimization problems on~$\X$. In our general formulation, the separable convex functions that can be combined are themselves defined through a submodular function and optimal transport.
 
 We choose strongly convex separable costs of the form $\sum_{i=1}^n \varphi_i(\mu_i)$, with:
 \BEQ
 \label{eq:phi}
\varphi_i(\mu_i) = \int_{\X_i} a_i(x_i, F_{\mu_i}(x_i)) dx_i,
\EEQ
where for all $x_i \in \X_i$, $a_i(x_i,\cdot)$ is a differentiable $\lambda$-strongly convex function on $[0,1]$. This implies that the function $\varphi_i$, as a  function of $F_{\mu_i}$ is $\lambda$-strongly convex (for the $L_2$-norm on cumulative distribution functions). A key property is that it may be expressed as a transport cost, as we now show.

\begin{proposition}[Convex functions of cumulative distributions]
\label{prop:cumuc}
For $a_i: \X_i \times [0,1] \to \rb$ convex and differentiable with respect to the second variable, the function $\varphi_i: \mathcal{P}(\X_i) \to \rb$ defined in \eq{phi} is equal to
 \BEQ
 \label{eq:phic}
\varphi_i(\mu_i) = \int_0^1 c_i( F_{\mu_i}^{-1} (t),1-t) dt,
\EEQ
for $\displaystyle c_i(z_i,t_i) =  \int_{\X_i} a_i(x_i,0) dx_i + 
 \int_{\X_i} \bigg(  \int_{\X_i \cap (-\infty, z_i]  }   \frac{\partial a_i}{\partial t_i}(x_i, 1- t_i)  
dx_i  \bigg)   $ is a submodular cost on $\X_i \times [0,1]$.
\end{proposition}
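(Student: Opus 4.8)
The plan is to establish the two assertions of the proposition in turn: the integral identity \eq{phic}, and the submodularity of the cost $c_i$ on $\X_i \times [0,1]$. Neither requires anything beyond the fundamental theorem of calculus, Fubini's theorem, and the inverse-cumulative relationship $F_{\mu_i}(x_i) \geqslant t \Leftrightarrow F_{\mu_i}^{-1}(t) \geqslant x_i$ recalled in \mysec{ext}, so I expect no deep obstacle; the only genuine care is in justifying the exchange of integration order and in correctly identifying the region of integration.

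For the identity, I would begin from the definition $\varphi_i(\mu_i) = \int_{\X_i} a_i(x_i, F_{\mu_i}(x_i))\, dx_i$ and use differentiability of $a_i$ in its second variable to write, by the fundamental theorem of calculus, $a_i(x_i, F_{\mu_i}(x_i)) = a_i(x_i,0) + \int_0^{F_{\mu_i}(x_i)} \frac{\partial a_i}{\partial t_i}(x_i, s)\, ds$. Integrating over $x_i \in \X_i$ and splitting off the constant $\int_{\X_i} a_i(x_i, 0)\, dx_i$ leaves a double integral over the region $\{(x_i, s) : x_i \in \X_i,\ 0 \leqslant s \leqslant F_{\mu_i}(x_i)\}$. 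Exchanging the order of integration (legitimate since $a_i$ and $\frac{\partial a_i}{\partial t_i}$ are continuous and $\X_i$ is compact, so the integrand is bounded), the slice at fixed $s$ is $\{x_i \in \X_i : F_{\mu_i}(x_i) \geqslant s\}$, which by the inverse relationship equals $\X_i \cap (-\infty, F_{\mu_i}^{-1}(s)]$ up to a Lebesgue-null set. Renaming $s$ as $t$, this is exactly $\int_0^1 \big( \int_{\X_i \cap (-\infty, F_{\mu_i}^{-1}(t)]} \frac{\partial a_i}{\partial t_i}(x_i, t)\, dx_i \big)\, dt$. Since the defining formula for $c_i$ carries $\frac{\partial a_i}{\partial t_i}(x_i, 1 - t_i)$ and we evaluate at $t_i = 1-t$, so that $1 - t_i = t$, and since $\int_0^1 dt = 1$ absorbs the constant term, reassembling gives precisely $\varphi_i(\mu_i) = \int_0^1 c_i(F_{\mu_i}^{-1}(t), 1-t)\, dt$, which is \eq{phic}. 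Notice that the $1-t_i$ appearing in $c_i$ is exactly what makes this evaluation at $t_i = 1-t$ reproduce the integrand above.

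For submodularity, I would verify that the mixed second derivative of $c_i$ is nonpositive, which by \eq{diff} (in the limit of small increments) characterizes submodularity on the product of the two totally ordered sets $\X_i$ and $[0,1]$. The constant term $\int_{\X_i} a_i(x_i, 0)\, dx_i$ contributes nothing. Differentiating the remaining term with respect to the upper limit $z_i$ gives $\frac{\partial c_i}{\partial z_i}(z_i, t_i) = \frac{\partial a_i}{\partial t_i}(z_i, 1 - t_i)$, and then differentiating in $t_i$ and using the chain rule produces $\frac{\partial^2 c_i}{\partial t_i \partial z_i}(z_i, t_i) = - \frac{\partial^2 a_i}{\partial t_i^2}(z_i, 1 - t_i)$. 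Because $a_i(x_i, \cdot)$ is convex (indeed $\lambda$-strongly convex) in its second argument, $\frac{\partial^2 a_i}{\partial t_i^2} \geqslant 0$, whence the cross-derivative is $\leqslant 0$ and $c_i$ is submodular; the minus sign, and hence submodularity rather than supermodularity, is again a consequence of the $1-t_i$ in the definition. For the fully general (e.g.\ discrete) case one argues identically at the level of finite increments: the increment $c_i(z_i', t_i) - c_i(z_i, t_i) = \int_{\X_i \cap (z_i, z_i']} \frac{\partial a_i}{\partial t_i}(x_i, 1-t_i)\, dx_i$ is nonincreasing in $t_i$ by the same convexity, which is exactly \eq{diff}.

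The main obstacle, such as it is, is purely a matter of careful bookkeeping rather than of substance: one must be attentive that the Fubini exchange and the passage from $\{0 \leqslant s \leqslant F_{\mu_i}(x_i)\}$ to $\{x_i \leqslant F_{\mu_i}^{-1}(s)\}$ hold only up to sets of measure zero (coming from the jumps and flat parts of the cumulative function and its inverse), which is harmless under the integral, and that the $1-t_i$ shift in $c_i$ is tracked consistently through both parts of the argument.
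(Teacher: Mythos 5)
Your proof is correct, and its skeleton is the same as the paper's: expand $a_i(x_i,F_{\mu_i}(x_i))$ by the fundamental theorem of calculus, exchange an order of integration, recognize $c_i$, and prove submodularity of $c_i$ by observing that the increment $c_i(z_i',t_i)-c_i(z_i,t_i)$ is nonincreasing in $t_i$ by convexity of $a_i(x_i,\cdot)$ (your cross-derivative computation is the smooth special case, and note it implicitly assumes twice differentiability and an interval $\X_i$; your final finite-increment sentence, which is exactly the paper's argument, is what actually covers the general case). The one place where you genuinely deviate is in how the exchange of integrals is carried out. The paper first converts the inner Lebesgue integral $\int_0^{F_{\mu_i}(x_i)} \frac{\partial a_i}{\partial t_i}(x_i,t_i)\,dt_i$ into an integral against $d\mu_i(y_i)$ via the substitution $t_i = F_{\mu_i}(y_i)$, applies Fubini to the resulting $(x_i,y_i)$ double integral, and then substitutes back to reach $\int_0^1 c_i(F_{\mu_i}^{-1}(t),1-t)\,dt$. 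You instead apply Fubini directly to the $(x_i,t)$ Lebesgue double integral over $\{(x_i,t): 0 \leqslant t \leqslant F_{\mu_i}(x_i)\}$ and identify the slice at fixed $t$ as $\X_i \cap (-\infty, F_{\mu_i}^{-1}(t)]$ via the equivalence $F_{\mu_i}(x_i) \geqslant t \Leftrightarrow x_i \leqslant F_{\mu_i}^{-1}(t)$ stated in the paper. This buys you a small but real advantage: the paper's substitution $t_i = F_{\mu_i}(y_i)$, which amounts to pushing Lebesgue measure on $[0,F_{\mu_i}(x_i)]$ back and forth along $F_{\mu_i}$, is exact only when $\mu_i$ has no atoms and $F_{\mu_i}$ no flat parts, which is precisely the regime violated by the discrete sets $\X_i$ central to the paper (the two substitution errors cancel in the paper's chain, but the intermediate identities are not literally valid there); your route never leaves Lebesgue coordinates, so the only measure-zero issues are the harmless boundary ones you flag. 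Otherwise the two proofs are interchangeable.
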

 \begin{proof}
 We have the following sequence of equalities:
 \BEAS
 \varphi_i(\mu_i) & = &  \int_{\X_i} a_i(x_i, F_{\mu_i}(x_i)) dx_i \\
 & = &  \int_{\X_i} \bigg( a_i(x_i,0) + 
 \int_{0}^{F_{\mu_i}(x_i)} \frac{\partial a_i}{\partial t_i}(x_i,t_i) dt_i  
\bigg)dx_i \\
& = & \int_{\X_i} a_i(x_i,0) dx_i + 
 \int_{\X_i} \bigg(  \int_{\X_i \cap [x_i,+\infty) }   \frac{\partial a_i}{\partial t_i}(x_i, F_{\mu_i}(y_i)) d\mu_i(y_i) 
\bigg)dx_i   \\
& & \hspace*{1cm} \mbox{ by the change of variable } t_i = F_{\mu_i}(y_i),
\\
& = & \int_{\X_i} a_i(x_i,0) dx_i + 
 \int_{\X_i} \bigg(  \int_{\X_i \cap (-\infty, y_i]  }   \frac{\partial a_i}{\partial t_i}(x_i, F_{\mu_i}(y_i))  
dx_i  \bigg)  d\mu_i(y_i) \mbox{ by Fubini's theorem} ,
\\
& = & 
 \int_{\X_i} c_i(y_i, 1 - F_{\mu_i}(y_i))  
 d\mu_i(y_i) \mbox{ by definition of } c_i ,
\\
& = & 
 \int_{0}^1 c_i( F_{\mu_i}^{-1}(t),1-t)  
 dt   \mbox{ by the change of variable } t_i = F_{\mu_i}(y_i).
 \EEAS
 Moreover, $c_i$ is indeed a submodular function as, for any $z_i' > z_i$ in $\X_i$, we have
 $c_i(z_i',t_i) - c_i(z_i,t_i) = \displaystyle 
 \int_{\X_i} \bigg(  \int_{\X_i \cap (z_i, z_i']  }   \frac{\partial a_i}{\partial t_i}(x_i, 1- t_i)  
dx_i  \bigg)   $, which is a decreasing function in $t_i$ because $a_i$ is convex. Thus $c_i$ is submodular. It is also strictly submodular if $a_i(x_i,\cdot)$ is strongly convex for all $x_i \in \X_i$.
 \end{proof}
 
Because $c_i$ is submodular, we have, following Prop.~\ref{prop:transport}, a formulation of $\varphi_i$ as an optimal transport problem between the measure $\mu_i$ on $\X_i$ and the uniform distribution $U[0,1]$ on $[0,1]$, as 
$$ \varphi_i(\mu_i) = \inf_{\gamma_i \in \mathcal{P}(\X_i \times [0,1] } \int_{\X_i \times [0,1]}  c_i(x_i,t_i) d \gamma_i(x_i,t_i),$$
such that $\gamma_i$ has marginals $\mu_i$ and the uniform distribution on $[0,1]$.
It is thus always convex (as as the minimum of a jointly convex problem in $\gamma_i$ and $\mu_i$)---note that it is already convex from the definition in \eq{phic} and the relationship between $a_i$ and $c_i$ in Prop.~\ref{prop:cumuc}.

We now consider the following problem:
\BEQ
\label{eq:prox}
\inf_{\mu \in \mathcal{P}^\otimes(\X) } h(\mu) + \sum_{i=1}^n \varphi_i(\mu_i) ,
\EEQ
which is an optimization problem with $h$, with additional separable transport costs $\varphi_i(\mu_i)$. Given our assumption regarding the strong-convexity of the functions $a_i$ above, this system has a unique solution.
We may derive a dual problem using the representation
from \eq{dual}:
\BEA
\nonumber \inf_{\mu \in \mathcal{P}^\otimes(\X)  } h(\mu) + \sum_{i=1}^n  \varphi_i(\mu_i)
& = &  \inf_{\mu \in \mathcal{P}^\otimes(\X)  } \sup_{v \in \mathcal{V}(H)} \sum_{i=1}^n \bigg\{ \int_{\X_i} v_i(x_i) d \mu_i(x_i)  +  \varphi_i(\mu_i) \bigg\}  \\
\nonumber & = &  \sup_{v \in \mathcal{V}(H)} \sum_{i=1}^n \inf_{\mu_i \in \mathcal{P}(\X_i)  }  \bigg\{ \int_{\X_i} v_i(x_i) d \mu_i(x_i)  +  \varphi_i(\mu_i) \bigg\} 
\\
\label{eq:dualprox} & = &  \sup_{v \in \mathcal{V}(H)} - \sum_{i=1}^n\varphi_i^\ast( -v_i)  ,
\EEA
where we use the Fenchel-dual notation $\varphi_i^\ast( v_i) = \sup_{\mu_i \in \mathcal{P}(\X_i)  }  \bigg\{ \int_{\X_i} v_i(x_i) d \mu_i(x_i)  -  \varphi_i(\mu_i) \bigg\}$.
The equation above provides a dual problem to \eq{prox}.
We may also consider a family of submodular minimization problems, parameterized by $t \in [0,1]$:
\BEQ
\label{eq:sfmt}
\min_{x \in \X} \ H(x) + \sum_{i=1}^n  c_i( x_i,1- t),
\EEQ
with their duals defined from \eq{dualsfm}. Note that the two dual problems are defined on the same set~$\mathcal{V}(H)$. We can now prove the following theorem relating the two optimization problems.

\begin{theorem}[Separable optimization - general case]
\label{theo:sepc}
Assume $H$ is continuous and submodular and all $c_i$, $i=1,\dots,n$ are defined as in Prop.~\ref{prop:cumuc}. Then:
\BIT
\item[(a)] If $x$ and $x'$ are minimizers of \eq{sfmt} for $t> t'$, then $x \leqslant x'$ (for the partial order on $\rb^n$), i.e., the solutions of \eq{sfmt} are non-increasing in $t \in [0,1]$.
\item[(b)] Given a primal candidate $\mu \in \mathcal{P}^\otimes(\X)$ and a dual candidate $v \in \mathcal{V}(H)$, then the duality gap for the problem in \eq{prox} is the integral from $t=0$ to $t=1$ of the gaps for the problem in \eq{sfmt} for the same dual candidate and the primal candidate $(F_{\mu_1}^{-1}(t),\dots, F_{\mu_n}^{-1}(t)) \in \X$.
\item[(c)] Given the unique solution $\mu$ of \eq{prox}, for all $t \in [0,1]$, $(F_{\mu_1}^{-1}(t),\dots, F_{\mu_n}^{-1}(t)) \in \X$ is a solution of \eq{sfmt}. 
\item[(d)] Given any solutions $x^t \in \X$ for all problems in \eq{sfmt}, we may define $\mu$ through
$F_{\mu_i}(x_i) = \sup \big\{
t \in [0,1], \ x_i^t \geqslant x_i
\big\}$, for all $i$ and $x_i \in \X_i$, so that $\mu$ is the optimal solution of \eq{prox}.
\EIT
\end{theorem}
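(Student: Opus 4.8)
The plan is to reduce all four parts to a single disintegration identity. By the definition of $h$ in \eq{ext} and the transport formula \eq{phic}, writing $x^t = (F_{\mu_1}^{-1}(t),\dots,F_{\mu_n}^{-1}(t))$, the objective of \eq{prox} splits over the threshold $t$:
\[
h(\mu) + \sum_{i=1}^n \varphi_i(\mu_i) = \int_0^1 \Big[\, H(x^t) + \sum_{i=1}^n c_i(x_i^t, 1-t)\,\Big]\, dt,
\]
where the bracket is exactly the objective $H_t(x^t)$ of \eq{sfmt}, with $H_t := H + \sum_i c_i(\cdot,1-t)$. Since $\mu \mapsto (F_{\mu_i}^{-1})_i$ is a bijection onto tuples of non-increasing right-continuous functions $[0,1]\to\X_i$, solving \eq{prox} amounts to pointwise minimization of \eq{sfmt} subject to a monotonicity-in-$t$ constraint, and the whole theorem expresses when that constraint is inactive.

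First I would prove (a) by an exchange argument. Given minimizers $x$ at $t$ and $x'$ at $t'$ with $t>t'$, set $u=\min\{x,x'\}$, $w=\max\{x,x'\}$ and compare $H_t(x)+H_{t'}(x')$ with $H_t(u)+H_{t'}(w)$. The $H$-terms give $H(x)+H(x')-H(u)-H(w)\geqslant 0$ by submodularity, and on each coordinate $i$ with $x_i>x_i'$ the $c_i$-terms give $[c_i(x_i,1-t)-c_i(x_i',1-t)]-[c_i(x_i,1-t')-c_i(x_i',1-t')]\geqslant 0$ because $c_i$ is submodular and $1-t<1-t'$; this is strict by the strict submodularity of $c_i$ (valid since each $a_i$ is strongly convex, as noted in Prop.~\ref{prop:cumuc}). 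Optimality forces the reverse inequality, so every strict term vanishes, hence no coordinate has $x_i>x_i'$, i.e. $x\leqslant x'$. A key consequence is that \emph{any} selection $t\mapsto x^t$ of minimizers of \eq{sfmt} is automatically non-increasing, hence is the inverse-cumulative representation of a well-defined $\mu\in\mathcal{P}^\otimes(\X)$.

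The technical lemma behind (b) is the conjugate identity $-\varphi_i^\ast(-v_i) = \int_0^1 \inf_{x_i\in\X_i}\big[v_i(x_i)+c_i(x_i,1-t)\big]\,dt$. I would prove it by writing $-\varphi_i^\ast(-v_i)=\inf_{\mu_i}\{\int v_i\,d\mu_i+\varphi_i(\mu_i)\}$, using $\int v_i\,d\mu_i = \int_0^1 v_i(F_{\mu_i}^{-1}(t))\,dt$ and \eq{phic} to rewrite the inner objective as $\int_0^1[v_i(z(t))+c_i(z(t),1-t)]\,dt$ over non-increasing $z$, and noting that a non-increasing pointwise minimizer exists (the monotone-selection consequence of (a), applied with the modular function $v_i$ in place of $H$), so the constrained infimum equals the pointwise one. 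Granting this, (b) is a direct computation: integrating the gap of \eq{sfmt} over $t$, with primal $x^t$ and the induced dual candidate $(v_i+c_i(\cdot,1-t))_i\in\mathcal{V}(H_t)$ coming from $v\in\mathcal{V}(H)$, recovers $h(\mu)+\sum_i\varphi_i(\mu_i)+\sum_i\varphi_i^\ast(-v_i)$, which is exactly the gap of \eq{prox} read off from \eq{dualprox}.

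Parts (c) and (d) are the two halves of the constrained-versus-pointwise equivalence. For every $\mu'$ the displayed integral is pointwise at least $\int_0^1 \mathrm{opt}(t)\,dt$, where $\mathrm{opt}(t)$ is the optimal value of \eq{sfmt}; by (a) this bound is attained by the $\mu$ built from a monotone selection of minimizers, so the optimal value of \eq{prox} equals $\int_0^1\mathrm{opt}(t)\,dt$. For (c), the optimal $\mu$ then makes the nonnegative pointwise excess vanish for a.e.\ $t$, i.e. $x^t=(F_{\mu_i}^{-1}(t))_i$ solves \eq{sfmt} for a.e.\ $t$; for (d), one checks that $F_{\mu_i}(x_i)=\sup\{t: x_i^t\geqslant x_i\}$ is precisely the generalized inverse of the non-increasing map $t\mapsto x_i^t$, so $F_{\mu_i}^{-1}(t)=x_i^t$ and this $\mu$ attains the lower bound. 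The hard part will be the measure-theoretic bookkeeping: justifying the change of variables $\int v_i\,d\mu_i=\int_0^1 v_i(F_{\mu_i}^{-1}(t))\,dt$, securing a measurable monotone selection, and above all upgrading the a.e.\ statement in (c) to hold for \emph{all} $t\in[0,1]$, which I would do using right-continuity of $F_{\mu_i}^{-1}$ together with joint continuity of $(x,t)\mapsto H(x)+\sum_i c_i(x_i,1-t)$, passing to the limit along good thresholds approaching a given $t$ from the right.
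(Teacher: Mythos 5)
Your proof is correct, and it rests on the same master decomposition as the paper's proof --- writing the objective of \eq{prox} as $\int_0^1 \big[ H(x^t) + \sum_{i=1}^n c_i(x^t_i,1-t)\big]\,dt$ for $x^t = (F_{\mu_1}^{-1}(t),\dots,F_{\mu_n}^{-1}(t))$ --- and on the same key lemma, the conjugate identity $-\varphi_i^\ast(-v_i)=\int_0^1 \inf_{y_i \in \X_i}\big[v_i(y_i)+c_i(y_i,1-t)\big]\,dt$; but you establish the individual steps by genuinely different means. Where the paper proves the conjugate identity through the optimal-transport representation of $\varphi_i$ (couplings of $\mu_i$ with the uniform measure on $[0,1]$, then conditional Diracs at pointwise minimizers), you restrict to monotone couplings via inverse cumulative distribution functions and show the monotonicity constraint is inactive by a monotone selection, i.e.\ by re-applying (a) with the modular cost $v_i$ --- more elementary, with no disintegration of transport plans, at the price of a measurable-selection step (which monotonicity supplies for free). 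For (a) itself you spell out the exchange argument that the paper outsources to Topkis, and you correctly locate that strict submodularity of $c_i$ (coming from strong convexity of $a_i(x_i,\cdot)$, as noted at the end of the proof of Prop.~\ref{prop:cumuc}) is what forces \emph{all} pairs of minimizers, not merely their lattice closure, to be comparable. For (c) the paper passes through duality and (b) (``and thus, by continuity, (c)''), whereas you avoid duality altogether: the optimal value of \eq{prox} equals $\int_0^1 \mathrm{opt}(t)\,dt$ because a monotone selection of minimizers attains it, so the unique minimizer must annihilate the nonnegative pointwise excess almost everywhere; your right-continuity limit argument then makes explicit the a.e.-to-everywhere upgrade that the paper compresses into two words (only the endpoint $t=1$, which cannot be approached from the right, needs a separate remark --- a blemish shared with the paper). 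For (d) you give a self-contained generalized-inverse argument where the paper defers to the set-function reference. What the paper's route buys is brevity, since it reuses the machinery of Prop.~\ref{prop:transport} and Prop.~\ref{prop:cumuc} and makes the gap identity (b) the workhorse from which (c) follows; what yours buys is a more self-contained and duality-free treatment of (c) and (d), and an honest accounting of the measure-theoretic details.
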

\begin{proof}
The first statement (a) is a direct and classical consequence of the   submodularity of $c_i$~\cite[Section 2.8]{topkis2011supermodularity}. The main idea is that when we go from $t$ to $t'<t$, then the function difference, i.e.,  $x_i \mapsto  c_i( x_i, t') -  c_i( x_i, t)$ is strictly increasing, hence the minimizer has to decrease.  

For the second statement (b), we may first re-write the cost function in \eq{prox} as an integral in $t$, that is, for any $\mu \in \mathcal{P}^\otimes(\X)$:
\BEAS
h(\mu ) + \sum_{i=1}^n  \varphi_i(\mu_i) 
& = & \int_0^1
\bigg\{
H\big[ F_{\mu_1}^{-1}(t),\dots, F_{\mu_n}^{-1}(t) \big]+ \sum_{i=1}^n  c_i\big[ F_{\mu_i}^{-1} (t), 1-t \big]
\bigg\} dt.
\EEAS

The gap defined in Prop.~\ref{prop:dualsfm} for a single submodular minimization problem in \eq{sfmt} is, for a primal candidate $x \in \X$ and a dual candidate $v \in \mathcal{V}(H)$:
\BEAS
& &H(x) + \sum_{i=1}^n  c_i\big[  x_i, F_{\mu_i}^{-1} (t) \big]
- \sum_{i=1}^n  \min_{y_i \in \X_i} \Big\{  v_i(y_i)  +  c_i \big[ y_i, 1-t\big]\Big\},
\EEAS
and its integral with respect to $t \in [0,1]$ for $x_i = F_{\mu_i}^{-1}(t)$, for all $i \in \{1,\dots,n\}$, is  equal to
\BEAS
& &h(\mu ) + \sum_{i=1}^n  \varphi_i(\mu_i) 
- \sum_{i=1}^n  \int_0^1 \min_{y_i \in \X_i} \Big\{  v_i(y_i)  +  c_i \big[ y_i, 1-t \big]\Big\}  dt .
\EEAS
Finally, we have, by using the formulation of $\varphi_i(\mu_i)$ through optimal transport, an expression of the elements appearing in the dual problem in \eq{dualprox}:
\BEAS
 - \varphi_i^\ast(-v_i)& =  &   \inf_{\mu_i \in \mathcal{P}_i(\X_i)  }  \bigg\{ \int_{\X_i} v_i(x_i) d \mu_i(x_i)  +  \varphi_i(\mu_i) \bigg\} 
\\
& = &  \inf_{\mu_i \in \mathcal{P}_i(\X_i)  }   \inf_{\gamma_i \in \Pi(\mu_i,U[0,1]) } \int _{\X_i \times [0,1]} \big[ v_i(x_i) + c_i(x_i,1-t_i)  \big] d \gamma_i(x_i,t_i) 
\\
& = &  \int_0^1  \inf_{y_i \in \X_i} \big[ v_i(y_i) + c_i(y_i,1-t_i)  \big] d t_i,
\EEAS
because we may for any $t_i$ choose the conditional distribution of $x_i$ given $t_i$ equal to a Dirac at the minimizer $y_i$ of  $v_i(y_i) + c_i(y_i,1-t_i)$. This implies (b) and thus, by continuity, (c). The statement (d) is proved exactly like for set-functions~\cite[Prop.~8.3]{fot_submod}.
\end{proof}

In \mysec{base}, we will consider a formulation for finite sets that will exactly recover the set-function case, with the additional concept of base polytopes.

\section{Discrete sets}
\label{sec:discrete}
\label{sec:finite}

In this section, we consider only finite sets for all $i \in \{1,\dots,n\}$, i.e., $\X_i = \{0,\dots,k_i-1\}$. For this important subcase, we will extend many of the notions related to submodular set-functions, such as the base polytope and the greedy algorithm to compute its support function. This requires to extend the domain where we compute our extensions, from product of probability measures to products of non-increasing functions. Throughout this section, all measures are characterized by their probability mass functions, thus replacing integrals by sums.

This extension will be done using a specific representation of the measures $\mu_i \in \mathcal{P}(\X_i)$. Indeed, we may represent $\mu_i$ through its cumulative distribution function
$\rho_i(x_i) = \mu_i(x_i)+ \cdots + \mu_{i}(k_i-1) = F_{\mu_i}(x_i)$, for $x_i \in \{1,\dots,k_i-1\}$. Because the measure $\mu_i$ has unit total mass,   $\rho_i(0)$ is always equal to $1$ and can be left out. The only constraint on $\rho_i$ is that is has non-increasing components and that all of them belong to $[0,1]$. We denote by $[0,1]^{k_i-1}_\downarrow$ this set, which is in bijection with $\mathcal{P}( \{0,\dots,k_i-1\})$. Therefore, $\rho_i$ may be seen as a truncated cumulative distribution function equal to a truncation of~$F_{\mu_i}$; however,  we will next extend its domain and remove the restriction of being between 0 and 1; hence the link with the cumulative distribution function $F_{\mu_i}$ is not direct anymore, hence a new notation $\rho_i$.

We are going to consider the set of non-increasing vectors $\rb^{k_i-1}_\downarrow$ (without the constraint that they are between $0$ and $1$). For any such $\rho_i$ with $k_i-1$ non-increasing components, the set of real numbers is divided into $k_i$ parts, as shown below. Note that this is simply a rephrasing of the definition of $F_{\mu_i}^{-1}(t)$, as, when $\rho_i \in [0,1]^{k_i-1}_\downarrow$, we have $\theta(\rho_i,t) = F_{\mu_i}^{-1}(t)$.
 
\begin{center}
\includegraphics[scale=.6]{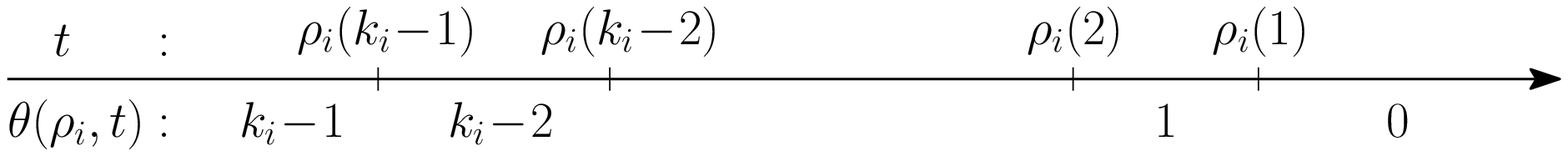}
\end{center}

This creates a map $\theta(\rho_i,\cdot): \rb \to \{0,\dots,k_i-1\}$
 such that $\theta(\rho_i,t) = k_i-1$ for $t< \rho_i(k_i-1)$, $\theta(\rho_i,t) = x_i$
  if $ t\in ( \rho_i(x_i+1), \rho_i(x_i))$, for $x_i \in \{1,\dots,k_i-2\}$, and $\theta(\rho_i,t) =0$ for $ t>\rho_i(1)$. What happens at the boundary points is arbitrary and irrelevant.

For example, for $k_i=2$ (and $\X_i= \{0,1\}$), then we simply have $\rho_i \in  \rb$ and $\theta(\rho_i,t) = 1$ for $t<\rho_i$, and $0$ if $t>\rho_i$. We can now give an expression of $h(\mu)$ as a function of $\rho \in \prod_{i=1}^n [0,1]^{k_i-1}_\downarrow$, which will be extended to all non-increasing vectors (not constrained to be between $0$ and $1$). This will then allow us to define extensions of  base polytopes.

\subsection{Extended extension on all products of non-increasing sequences}

\label{sec:greedy}
We first start by a simple lemma providing an expression of $h(\mu)$ as a function of $\rho$---note the similarity with the Lov\'asz extension for set-functions~\cite[Prop. 3.1]{fot_submod}.

\begin{lemma}[Extension as a function of $\rho$]
\label{lemma:hrho}
For $\rho \in \prod_{i=1}^n[0,1]^{k_i-1}_\downarrow$, define $h_\downarrow(\rho)$ as
\BEQ
\label{eq:hnu01}
h_\downarrow(\rho) = \int_{0}^{1} H(\theta(\rho_1,t),\dots,\theta(\rho_n,t)) dt.
\EEQ
If $\mu \in \mathcal{P}^\otimes(\X)$ and $\rho$ are linked through $\rho_i(x_i) = F_{\mu_i}(x_i)$ for all $i$ and $x_i \in \{1,\dots,k_i-1\}$, then, $h(\mu) = h_\downarrow(\rho)$.
\end{lemma}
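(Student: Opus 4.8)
The plan is to show that the two definitions of the extension coincide by verifying that the integrands agree for almost every $t \in [0,1]$. Recall that $h(\mu) = h_{\rm cumulative}(\mu) = \int_0^1 H\big[ F_{\mu_1}^{-1}(t),\dots,F_{\mu_n}^{-1}(t)\big]\,dt$ by \eq{ext}, while $h_\downarrow(\rho) = \int_0^1 H\big(\theta(\rho_1,t),\dots,\theta(\rho_n,t)\big)\,dt$ by \eq{hnu01}. So it suffices to prove that, for each coordinate $i$ and for almost every $t$, we have $\theta(\rho_i,t) = F_{\mu_i}^{-1}(t)$ when $\rho_i$ and $\mu_i$ are linked via $\rho_i(x_i) = F_{\mu_i}(x_i)$.

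First I would recall the explicit discrete description of $F_{\mu_i}^{-1}$ given in \mysec{ext}: for a discrete $\X_i = \{0,\dots,k_i-1\}$, the inverse cumulative function $F_{\mu_i}^{-1}$ is piecewise constant with jumps located at the values $F_{\mu_i}(x_i)$. On the other hand, the map $\theta(\rho_i,\cdot)$ is defined in this section by exactly the same threshold rule: $\theta(\rho_i,t) = k_i-1$ for $t < \rho_i(k_i-1)$, $\theta(\rho_i,t) = x_i$ for $t \in (\rho_i(x_i+1),\rho_i(x_i))$, and $\theta(\rho_i,t) = 0$ for $t > \rho_i(1)$. Since $\rho_i(x_i) = F_{\mu_i}(x_i)$ by hypothesis, substituting shows that the thresholds defining $\theta(\rho_i,\cdot)$ are precisely the jump locations of $F_{\mu_i}^{-1}$, and the value taken on each interval matches. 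The paper already flags this: it states that $\theta(\rho_i,t) = F_{\mu_i}^{-1}(t)$ whenever $\rho_i \in [0,1]^{k_i-1}_\downarrow$, which is exactly the regime of this lemma.

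The only care needed is at the finitely many boundary points $t = \rho_i(x_i)$, where the two conventions may disagree; but these form a finite (hence Lebesgue-null) set, and the values of $H$ there do not affect the integrals, consistent with the remark in the definition that ``what happens at the boundary points is arbitrary and irrelevant.'' Taking the union over $i \in \{1,\dots,n\}$ of these null sets yields a null set outside of which $\big(\theta(\rho_1,t),\dots,\theta(\rho_n,t)\big) = \big(F_{\mu_1}^{-1}(t),\dots,F_{\mu_n}^{-1}(t)\big)$, so the two integrands coincide almost everywhere and the integrals are equal.

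This is essentially a bookkeeping verification rather than a genuine difficulty; the main (minor) obstacle is simply being careful to state the identification $\theta(\rho_i,t) = F_{\mu_i}^{-1}(t)$ cleanly and to note that the finitely many boundary discrepancies are negligible. I would therefore present the proof as: unfold both definitions, invoke the pointwise (a.e.) identity $\theta(\rho_i,t)=F_{\mu_i}^{-1}(t)$ coordinate by coordinate, and conclude equality of the two one-dimensional integrals.
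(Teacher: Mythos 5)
Your proposal is correct and follows exactly the paper's own argument: the proof there is the one-line observation that for almost all $t \in [0,1]$ and all $i$, $\theta(\rho_i,t) = F_{\mu_i}^{-1}(t)$, so the two integrands agree almost everywhere. Your write-up simply makes explicit the coordinate-wise identification of thresholds and the negligibility of the finitely many boundary points, which is the same bookkeeping the paper leaves implicit.
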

\begin{proof}
This is simply a re-writing of the definition of $\theta$, as for almost all $t \in [0,1]$, and all $i \in \{1,\dots,n\}$, 
we have $\theta(\rho_i,t) = F_{\mu_i}^{-1}(t)$.
\end{proof}
We can give an alternative formulation for $h_\downarrow(\rho)$ in \eq{hnu01} for $\rho \in \prod_{i=1}^n[0,1]^{k_i-1}_\downarrow$, as (with $\max\{\rho\}$ the maximum value of all $\rho_i(x_i)$, and similarly for $\min\{\rho\}$), using
that $\theta(\rho_i,t) = \max \X_i = k_i-1$ for $t < \min \{\rho\}$ and 
$\theta(\rho_i,t) = \min \X_i = 0$ for $t > \max \{\rho\}$:
\BEA
\nonumber
\!\!\!\!\!\!\!\!\!\!\! h_\downarrow(\rho) & \!\!\!\!\!=\!\!\!\!\! &  \bigg( \int_{0}^{\min\{\rho\}}
+
\int_{\min\{\rho\}}^{\max\{\rho\}} 
+
\int_{\max\{\rho\}}^{1}
\bigg)
 H(\theta(\rho_1,t),\dots,\theta(\rho_n,t)) dt
\\
\label{eq:hrho}
& \!\!\! \!\!=\!\!\!\!\!   &
\int_{\min\{\rho\}}^{\max\{\rho\}} H(\theta(\rho_1,t),\dots,\theta(\rho_n,t)) dt
+ \min\{\rho\} H(k_1\!-\!1,\dots,k_n\!-\!1) + ( 1 - \max\{\rho\}) H(0). \hspace*{.5cm}
\EEA
The expression in \eq{hrho} may be used as a definition of $h_\downarrow(\rho)$ for all $\rho \in \prod_{i=1}^n \rb^{k_i-1}_\downarrow$.
With this definition, the function $\rho \mapsto h_\downarrow(\rho) - H(0)$ is piecewise linear. The following proposition shows that it is convex.

\begin{proposition}[Convexity of extended extension]
Assume $H$ is submodular. The function $h_\downarrow$ defined in \eq{hrho} is convex on $\prod_{i=1}^n \rb^{k_i-1}_\downarrow$.
\end{proposition}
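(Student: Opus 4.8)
The plan is to reduce the claim to the convexity of $h_\downarrow$ on the \emph{bounded} box $\prod_{i=1}^n [0,1]^{k_i-1}_\downarrow$, which is already available, and then to remove the restriction $\rho_i \in [0,1]^{k_i-1}$ by an affine rescaling argument. Indeed, combining Theorem~\ref{theo:lov} with Lemma~\ref{lemma:hrho}, and noting that the map sending $\mu_i$ to its truncated cumulative vector $\rho_i = (F_{\mu_i}(x_i))_{x_i}$ is a linear bijection from $\mathcal{P}(\X_i)$ onto $[0,1]^{k_i-1}_\downarrow$, we already know that $h_\downarrow$ is convex on $\prod_{i=1}^n [0,1]^{k_i-1}_\downarrow$ (a convex function precomposed with a linear map). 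It remains to propagate this to all non-increasing vectors.

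The key ingredient is the affine invariance of the map $\theta$: for any $c>0$ and $s \in \rb$, writing $c\rho_i + s$ for the componentwise operation on the vector $\rho_i$, the defining inequalities of $\theta$ give $\theta(c\rho_i + s, ct+s) = \theta(\rho_i,t)$ for all $t$. I would first record this identity, which mirrors the invariance of submodularity under separable increasing reparameterizations noted in \mysec{finite}. Fix now $\rho, \rho' \in \prod_{i=1}^n \rb^{k_i-1}_\downarrow$ and $\lambda \in [0,1]$, assume $\rho \neq \rho'$, and let $m$ (resp.\ $M$) be the minimum (resp.\ maximum) over all components of $\rho$ and $\rho'$ (the case $M=m$ forces $\rho=\rho'$ and is trivial). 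The componentwise affine map $A(\sigma) = (\sigma - m)/(M-m)$ sends every point of the segment $[\rho,\rho']$ into the box, and is affine, so that $A(\lambda\rho+(1-\lambda)\rho') = \lambda A(\rho)+(1-\lambda)A(\rho')$.

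The central computation is the rescaling identity: for every $\sigma$ with $m \leqslant \min\{\sigma\}$ and $\max\{\sigma\} \leqslant M$,
\[
h_\downarrow(\sigma) = (M-m)\, h_\downarrow(A(\sigma)) + m\, H(k_1-1,\dots,k_n-1) - (M-1)\, H(0).
\]
I would prove this by starting from $h_\downarrow(A(\sigma)) = \int_0^1 H(\theta(A(\sigma)_1,u),\dots,\theta(A(\sigma)_n,u))\, du$, which is valid since $A(\sigma)$ lies in the box, by \eqref{eq:hnu01}; then applying the invariance $\theta(A(\sigma)_i,u) = \theta(\sigma_i,(M-m)u+m)$; changing variables $t=(M-m)u+m$ to obtain $(M-m)^{-1}\int_m^M H(\theta(\sigma_1,t),\dots,\theta(\sigma_n,t))\,dt$; and finally matching this against the defining formula \eqref{eq:hrho} for $h_\downarrow(\sigma)$, after splitting $\int_m^M$ into $\int_{\min\{\sigma\}}^{\max\{\sigma\}}$ plus the two constant tails on which $\theta$ is pinned to the top and bottom corners. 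The point is that, for fixed $m,M$, the additive term $m\,H(k_1-1,\dots,k_n-1)-(M-1)H(0)$ is a constant independent of $\sigma$.

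With this identity the conclusion is immediate: applying it at $\sigma=\lambda\rho+(1-\lambda)\rho'$, using $A(\sigma)=\lambda A(\rho)+(1-\lambda)A(\rho')$ together with the convexity of $h_\downarrow$ on the box, and then applying the identity again at $\sigma=\rho$ and $\sigma=\rho'$, the constant terms and the positive factor $M-m$ cancel to yield $h_\downarrow(\lambda\rho+(1-\lambda)\rho') \leqslant \lambda h_\downarrow(\rho)+(1-\lambda)h_\downarrow(\rho')$. I expect the only delicate step to be the bookkeeping of the boundary terms in the rescaling identity, namely tracking the two constant tails of the integral and checking that they assemble exactly into the correction appearing in \eqref{eq:hrho}; everything else is affine algebra and an appeal to the bounded case. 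An alternative would be to identify $h_\downarrow - H(0)$ with the support function of a suitable base polytope, but that route essentially requires the greedy machinery developed later in this section and is heavier than the reduction above.
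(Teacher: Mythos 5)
Your proof is correct and takes essentially the same route as the paper's: the paper also reduces convexity on $\prod_{i=1}^n \rb^{k_i-1}_\downarrow$ to convexity on the box $\prod_{i=1}^n [0,1]^{k_i-1}_\downarrow$ (where $h_\downarrow$ coincides with the convex function $h(\mu)$ via Lemma~\ref{lemma:hrho} and Theorem~\ref{theo:lov}) by the affine transformation that subtracts the global minimum and divides by the global range, the key facts being the translation property $h_\downarrow(\rho+C) = h_\downarrow(\rho) + C\big[H(k_1\!-\!1,\dots,k_n\!-\!1)-H(0)\big]$ and the positive homogeneity of $\rho \mapsto h_\downarrow(\rho)-H(0)$. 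Your rescaling identity is precisely the combination of these two facts, merely verified directly through the change of variables in the defining integral, so the two arguments coincide in substance.
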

\begin{proof}
From the definition in \eq{hrho}, we have that, with $\rho + C$ being defined as adding the constant $C$ to all components of all $\rho_i$'s:
$h_\downarrow(\rho + C) = h_\downarrow(\rho) + C \big[  H(k_1\!-\!1,\dots,k_n\!-\!1) - H(0) \big]$. Moreover, 
 $\rho \mapsto h_\downarrow(\rho) - H(0)$ is positively homogeneous. Thus, any set of $\rho$'s in $\prod_{i=1}^n \rb^{k_i-1}_\downarrow$ may be transformed linearly to $\prod_{i=1}^n [0,1]^{k_i-1}_\downarrow$ by subtracting the global minimal value and normalizing by the global range of all $\rho's$. Since $h_\downarrow(\rho)$ coincides with the convex function $h(\mu)$ where $\mu_i$ is the probability distribution associated (in a linear way) to $\rho_i$, we obtain the desired result.
 \end{proof}

\paragraph{Greedy algorithm.}
We now provide a simple algorithm to compute $h_\downarrow(\rho)$ in \eq{hrho}, that extends the greedy algorithm for submodular set-functions.
We thus now assume that we are given $\rho \in \prod_{i=1}^n \rb^{k_i-1}_\downarrow$, and we compute $h_\downarrow(\rho)$ without sampling. 

We first order all $r = \sum_{i=1}^n k_i - n$ values of $\rho_i(x_i)$ for all $x_i \in \{1,\dots,k_i-1\}$, in decreasing order, breaking ties randomly, except to ensure that all values for $\rho_i(x_i)$ for a given $i$ are in the correct order (such ties may occur when one associated $\mu_i(x_i)$ is equal to zero).  See \myfig{greedy} for an example.

\begin{figure}

\begin{center}
\includegraphics[scale=.6]{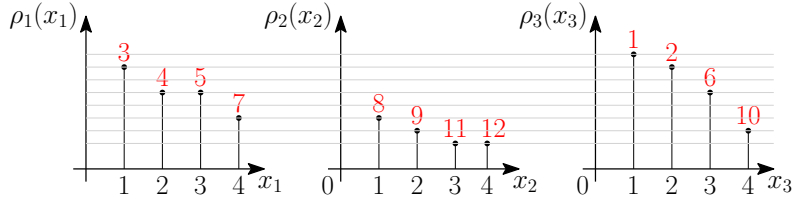}
\end{center}

\vspace*{-.5cm}

\caption{Examples of joint ordering (in red)  of all values of $\rho_i(x_i)$ for the greedy algorithm ($n=3$ and $k_i = 5$ for all $i$).}
\label{fig:greedy}
\end{figure}

We assume that the $s$-th value is equal to $t(s)$ and corresponds to $\rho_{i(s)}(j(s))$. We have $t(1) = \max\{\rho\}$ and $t(r) = \min \{\rho\}$. For $s \in \{1,\dots,r-1\}$, we define the vector $y(s) \in \X$ so that $y(s)_i$ will be the value of $\theta(\rho_i,t) $ on the open interval (potentially empty) $(t(s+1), t(s))$; note that what happens at break points is still irrelevant. By convention, we 
define $y(0) = (0,\dots,0)$ and $y(r) = (k_1\!-\!1,\dots,k_n\!-\!1)$. We then go from $y(s-1)$ to $y(s)$ by increasing the $i(s)$-th component by one. Note that because we have assumed that $(\rho_i(x_i))_{x_i \in \X_i}$ are well-ordered, we always have $y(s)_{i(s)} = j(s)$ and $y(s) = y(s-1) + e_{i(s)}$, where $e_i \in \rb^n$ is the $i$-the canonical basis vector.

 We thus get, by cutting the integral from \eq{hrho} on $[\min\{\rho\},\max\{\rho\}] = [t(r),t(1)]$ into pieces:
\BEAS
h_\downarrow(\rho) & = &  \min\{\rho\} H(k_1\!-\!1,\dots,k_n\!-\!1) + ( 1 - \max\{\rho\}) H(0) 
+ \sum_{s=1}^{r-1}  \int_{t(s+1)}^{t(s)} H\big[ y(s)\big]   dt  \\
& = & H(0) +  t(r) H\big[ y(r)\big] - t(1)H\big[ y(0)\big] + \sum_{s=1}^{r-1}\big[ t(s) - t(s+1) \big] \cdot H\big[ y(s)\big] \\
& = & H(0) + \sum_{s=1}^r t(s) \Big( 
H\big[ y(s)\big] - H\big[ y(s-1)\big]
\Big).
\EEAS
Since we have ordered all $r = \sum_{i=1}^n k_i - n$ values of $t(s) = \rho_{i(s)}(j(s))$, each $ \rho_i(x_i)$ appears exactly once in the sum above. 
Therefore, $h_\downarrow(\rho) $ is of the form $h_\downarrow(\rho) = H(0) + \sum_{i=1}^n \sum_{x_i = 1}^{k_i-1} \rho_i(x_i) w_i(x_i)$, where $w_i(x_i)$ is a difference of two function values
of $H$ at arguments that differ from a single canonical basis vector. Moreover, we have $\sum_{i=1}^n \sum_{x_i=1}^{k_i-1} w_i(x_i) = H[y(r)] - H(0) = H(k_1\!-\!1,\dots,k_n\!-\!1) - H(0)$.  We refer to this $w$ as the output of the greedy algorithm (associated with a specific ordering of the values of $\rho$).

Note that for any $\rho$, several orderings may be chosen, all leading to the same value $h_\downarrow(\rho)$ but different values for $w_i(x_i)$. The number of such ordering is $n!$ for set-functions (i.e., $k_i=2$ for all $i$) and is in general equal to
the multinomial coefficient $  \frac{ \big( \sum_{i=1}^n (k_i - 1) \big)!}{ \prod_{i=1}^n (k_i-1)!} $. If all $k_i$, $i=1,\dots,n$, are equal to $k$, then we get $  \frac{  ( n(k-1)  )!}{ (k_i-1)!^n} $, which grows very rapidly when $n$ and $k$ grow.

Note that for $k_i=2$ for all $i \in \{1,\dots,n\}$, then we exactly obtain the greedy algorithm for submodular set-functions. The complexity is $O( r \log r)$ for sorting and $r$ function evaluations, with $r = \sum_{i=1}^n k_i - n$.

\subsection{Base polyhedron}
\label{sec:base}
We may now provide an extension of the base polyhedron which is usually defined for submodular set-functions. 
As opposed to set-functions, there are two natural polyhedra $\mathcal{W}(H)$ and $\mathcal{B}(H)$, one defined by linear inequalities, one defined as the convex hull of the outputs of the greedy algorithm~\cite{fujishige2005submodular,fot_submod}. They are equal for set-functions, but not in the general case. The key difference is the monotonicity constraint on each $\rho_i$, which is only active when $k_i>2$.

We consider the set $\mathcal{W}(H) \subset \prod_{i=1}^n \rb^{k_i-1}$ defined through the inequalities:
 \BEAS
 \forall (x_1,\dots,x_n) \in \X, & &  \sum_{i=1}^n \sum_{y_i = 1}^{x_i} w_i(y_i) \leqslant H(x_1,\dots,x_n) - H(0) \\
& &  \sum_{i=1}^n \sum_{y_i = 1}^{k_i-1} w_i(y_i) = H(k_1\!-\!1,\dots,k_n\!-\!1) - H(0).
\EEAS
Note that when some $x_i=0$, then the sum $\sum_{y_i = 1}^{x_i} w_i(y_i)$ is equal to zero by convention (alternatively, we can use the convention that $w_i(0)=0$).  When $k_i=2$ for all $i \in \{1,\dots,n\}$, we obtain exactly the usual base polyhedron (which happens to be bounded)~\cite{fujishige2005submodular}. The following proposition shows that it behaves in a similar way (except that it is not bounded).

\begin{proposition}[Support function for $\mathcal{W}(H)$]
\label{prop:WH}
Assume $H$ is submodular and $\X_i = \{0,\dots,k_i-1\}$ for $i \in \{1,\dots,n\}$. Then, for any $\rho \in \prod_{i=1}^n \rb^{k_i-1}$, 
$$\sup_{w \in \mathcal{W}(H)} \sum_{i=1}^n \sum_{x_i=1}^{k_i-1} w_i(x_i) \rho_i(x_i)$$ is equal to $+\infty$ if $\rho \notin  \prod_{i=1}^n \rb^{k_i-1}_\downarrow$, and to $h_\downarrow(\rho) -H(0)$ otherwise, with an optimal $w \in \mathcal{W}(H)$ obtained from the greedy algorithm for any compatible order.
\end{proposition}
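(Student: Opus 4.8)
The plan is to treat the supremum as a linear program over the polyhedron $\mathcal{W}(H)$ and to establish the claimed value by combining weak duality (an upper bound valid for every $w \in \mathcal{W}(H)$) with the exhibition of an optimal point (the greedy output). I would first analyze the recession cone of $\mathcal{W}(H)$, since it controls both the finiteness of the supremum and the blow-up case. Writing $T_i(x_i) = \sum_{y_i=1}^{x_i} d_i(y_i)$ for a direction $d$, the homogeneous form of the defining inequalities forces, by taking all but one coordinate equal to $0$, that $d$ lies in the recession cone if and only if $T_i(x_i) \leqslant 0$ for all $i,x_i$ together with $T_i(k_i-1)=0$.

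For the first case, suppose $\rho \notin \prod_i \rb^{k_i-1}_\downarrow$, so that $\rho_i(x_i) < \rho_i(x_i+1)$ for some $i$ and some $x_i$. I would take the explicit recession direction $d$ with $d_i(x_i) = -1$, $d_i(x_i+1)=+1$ and all other entries zero; its partial sums satisfy $T_i(x_i) = -1 \leqslant 0$ and $T_i(k_i-1)=0$, so $w + Md$ stays in $\mathcal{W}(H)$ for all $M \geqslant 0$ (the equality constraint, being constant along $d$, is preserved, and the inequalities only become slacker). Along this ray the objective increases by $M(\rho_i(x_i+1) - \rho_i(x_i)) > 0$, hence the supremum is $+\infty$. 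Nonemptiness of $\mathcal{W}(H)$, needed to anchor the ray, follows from the membership argument below.

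Now assume $\rho \in \prod_i \rb^{k_i-1}_\downarrow$ and fix any compatible greedy order $t(1) \geqslant \cdots \geqslant t(r)$ with $t(s) = \rho_{i(s)}(j(s))$ and associated path $y(0),\dots,y(r)$. Reindexing the objective by the greedy steps gives $\sum_i \sum_{x_i} w_i(x_i)\rho_i(x_i) = \sum_s t(s)\, w_{i(s)}(j(s))$, and because the order respects the internal order of each coordinate, the partial sums $W(s) = \sum_{s' \leqslant s} w_{i(s')}(j(s'))$ equal $\sum_i \sum_{y_i=1}^{y(s)_i} w_i(y_i)$, which the defining inequalities bound by $H[y(s)] - H(0)$, with equality at $s=r$. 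An Abel summation rewrites the objective as $t(r)W(r) + \sum_{s<r} W(s)\,(t(s)-t(s+1))$; since the increments $t(s)-t(s+1)$ are nonnegative, replacing each $W(s)$ by its bound $H[y(s)]-H(0)$ and using equality at $s=r$ yields, after reversing the summation by parts and invoking the greedy formula $h_\downarrow(\rho) = H(0) + \sum_s t(s)\big(H[y(s)]-H[y(s-1)]\big)$ from \mysec{greedy}, exactly $h_\downarrow(\rho) - H(0)$. This is the weak-duality upper bound.

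Finally, I would show the greedy output attains this bound and lies in $\mathcal{W}(H)$. Attainment is immediate from the same greedy formula. The equality constraint holds by telescoping $\sum_s \big(H[y(s)] - H[y(s-1)]\big) = H(k_1\!-\!1,\dots,k_n\!-\!1) - H(0)$. The inequalities are the crux, and this is where submodularity enters: for $x \in \X$, comparing the greedy path $y(\cdot)$ with the truncated path $z(s) = \min\{y(s),x\}$, the marginals appearing in the partial sum $\sum_i \sum_{y_i=1}^{x_i} w_i(y_i)$ are exactly those steps at which $z$ increments, and at each such step I would apply the diminishing-returns form of \eq{diff} --- the marginal gain of a single coordinate increment is nonincreasing in the other coordinates --- to get $H[y(s)] - H[y(s-1)] \leqslant H[z(s)] - H[z(s-1)]$, since $z(s-1) \leqslant y(s-1)$ with equality in the incremented coordinate. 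Summing and telescoping the right-hand side down to $H[x] - H(0)$ gives the required inequality. The main obstacle is precisely this last step: one must check carefully that the truncation $z = \min\{y,x\}$ increments exactly on the steps contributing to the partial sum, and that \eq{diff} can be chained one coordinate at a time to compare the marginals at $z$ and at $y$.
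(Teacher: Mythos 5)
Your proof is correct, and it diverges from the paper's in one substantive place. The two halves that coincide: your feasibility argument for the greedy output (comparing the greedy path $y(s)$ with the truncated path $z(s)=\min\{y(s),x\}$, applying the diminishing-returns form of submodularity at each incrementing step, and telescoping to $H(x)-H(0)$) is exactly the paper's computation, written with truncated paths instead of delta functions; and your recession-cone treatment of the case $\rho \notin \prod_i \rb^{k_i-1}_\downarrow$ is the paper's argument (there, one lets the cumulative variable $v_i(x_i)$ tend to $-\infty$, which is precisely motion along your direction $d$) in different clothing. The genuine difference is the upper bound. The paper first reduces to $\rho_i(x_i)\in[0,1]$ by homogeneity and translation so that $\rho$ encodes a probability measure $\mu$, passes to the $(v,\mu)$ parameterization, writes a Lagrangian whose dual is the multi-marginal optimal transport problem with marginals $\mu_i$, and certifies optimality by exhibiting a primal/dual pair: the greedy $w$ and the monotone transport plan, whose value is $h(\mu)-H(0)$ by the earlier equivalence results. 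You instead prove the bound directly for every $w \in \mathcal{W}(H)$: reindex the objective along a compatible greedy order, note that the partial sums $W(s)$ are exactly the left-hand sides of the defining inequalities at the points $y(s)$, and conclude by Abel summation using $t(s)\geqslant t(s+1)$ --- with the equality constraint at $s=r$ correctly absorbing the fact that $t(r)$ may be negative, a sign subtlety you handled properly. Your route is more elementary and self-contained (no strong duality, no optimal transport, no reduction to probability measures; it works directly on all of $\prod_i \rb^{k_i-1}_\downarrow$, in the spirit of the classical proof of Edmonds' greedy theorem for polymatroids). What the paper's route buys is the explicit identification of the dual problem as an optimal transport problem and of the monotone plan as its optimizer, which fits the paper's overarching theme and feeds directly into the dual formulations in \eq{dualW} and \eq{dualB} used by the algorithms.
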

\begin{proof}
In this proof, we are going to use a reparameterization of $w$ as follows;
 we  define a vector $v_i \in \rb^{\{1,\dots,k_i-1\}}$ as the cumulative sum of $w_i$, that is, such that
 $v_i(x_i) =  \sum_{ y_i=1}^{x_i} w_i(y_i)$ (this is a bijection from $w_i$ to $v_i$). We then have the constraint that $\sum_{i=1}^n   v_i(x_i)     \leqslant H(x_1,\dots,x_n) -H(0) $, for all $x \in \X$,  with the convention that $v_i(0)=0$. The extra constraint is that $\sum_{i=1}^n v_i(k_i\!-\!1) = H(k_1\!-\!1,\dots,k_n\!-\!1)-H(0)$.
 Moreover, we have, with $\mu_i(x_i) =   \rho_i(x_i) - \rho_i(x_i\!+\!1) $ for $x_i \in \{1,\dots,k_i-2\}$ and $\mu_i(k_i-1) = \rho_i(k_i-1)$, an expression for the linear function we aim to maximize, that is, by Abel's summation formula:
 \BEAS
 \sum_{x_i=1}^{k_i-1} w_i(x_i) \rho_i(x_i) 
 & = & \sum_{x_i=2}^{k_i-1} \big[ v_i(x_i) - v_i(x_i\!-\!1) \big] \rho_i(x_i) + v_i(1) \rho_i(1) \\
 & = & \sum_{x_i=1}^{k_i-2} v_i(x_i) \big[\rho_i(x_i) - \rho_i(x_i\!+\!1) \big] + v_i(k_i\!-\!1) \rho_i(k_1 \!-\! 1) =  \sum_{x_i=1}^{k_i-1} v_i(x_i)  \mu_i(x_i).
 \EEAS

We first assume that each $\rho_i$ is non-decreasing.
We are going to follow the same proof than for set-functions, based on convex duality.   First, given the piecewise-linearity of $h_\downarrow - H(0)$ (and hence the homogeneity), and the fact that $h_\downarrow(\rho + C) = h_\downarrow(\rho) + C \big[ H(k_1-1,\dots,k_n-1) - H(0) \big] $, we only need to show the result for $\rho_i(x_i) \in [0,1]$ for all $i$, and $x_i \in \X_i$. Each vector $\rho_i$ is then uniquely associated to a probability measure (with non-negative values because $\rho_i$ is non-increasing) $\mu_i$ on $\X_i = \{0,\dots, k_i-1\}$, and, from Lemma~\ref{lemma:hrho}, $h_\downarrow(\rho) = h(\mu)$.

 Using the parameterization in terms of $v$ and $\mu$, we can now consider the Lagrangian, with dual values $\gamma \in \rb_+^\X$:
 $$
 \mathcal{L}(v,\gamma) = \sum_{i=1}^n \sum_{x_i=1}^{k_i-1} v_i(x_i)  \mu_i(x_i)
 + \sum_{x \in \X} \gamma(x) \bigg( H(x) -H(0) - \sum_{i=1}^n v_i(x_i) \delta(x_i>0)\bigg).
 $$
 By maximizing with respect to the primal variable $v$, the dual problem thus becomes the one of minimizing $\sum_{x \in \X} \gamma(x) \big[H (x)- H(0) \big]$ such that 
 \BEAS
  \forall i \in \{1,\dots,n\}, \ \forall x_i \in \{1,\dots,k_i-1\}, & &  \gamma_i(x_i) = \mu_i(x_i) \\
  \forall x \in \X \backslash \{(k_1\!-\!1,\dots,k_n\!-\!1)\}, & & \gamma(x) \geqslant 0,
 \EEAS
 where $\gamma_i$ is the marginal of $\gamma$ on the $i$-th variable, that is, $\gamma_i(x_i) = \sum_{ x_j  \in \X_j, j \neq i} \gamma(x_1,\dots,x_n)$.
 
  We now exhibit primal/dual pairs with equal objective values. For the dual variable $\gamma$, we consider the solution of the usual optimal transport problem (which happens to satisfy extra constraints, that is, nonnegativity also for $x=0$ and summing to one), and the dual value is exactly the extension $h(\mu)- H(0)$. For the primal variable, we consider the $w$ parameterization (which is in bijection with~$v$). From the greedy algorithm described in~\mysec{greedy}, the vector $w$ satisfies the sum constraint $\sum_{i=1}^n \sum_{y_i = 1}^{k_i-1} w_i(y_i) = H(k_1\!-\!1,\dots,k_n\!-\!1) -H(0)$. We simply need to show that for all $x \in \X$,
 $ \sum_{i=1}^n \sum_{z_i = 1}^{x_i} w_i(z_i) \leqslant H(x) - H(0)$. We have, using the notation of the greedy algorithm from \mysec{greedy}:
 \BEAS
 \sum_{i=1}^n \sum_{z_i = 1}^{x_i} w_i(z_i) 
 & = & \sum_{s=1}^r \big( H[y(s)] - H[y(s-1)] \big) \delta(y(s)_{i(s)} \leqslant x_{i(s)} ) \mbox{ by definition of } i(s) \mbox{ and } y(s),\\
 & = & \sum_{a=1}^n \sum_{s, i(s) = a}  \big( H[y(s)] - H[y(s-1)] \big) \delta(y(s)_{a} \leqslant x_{a} ) \mbox{ by splitting the values of } i(s),\\
 & = & \sum_{a=1}^n \sum_{s, i(s) = a}  \big( H[y(s-1) + e_a] - H[y(s-1)] \big) \delta(y(s)_{a}  \leqslant x_{a} ) \\
  & & \mbox{ because we go from } y(s-1) \mbox{ to } y(s) \mbox{ by incrementing the component } i(s)=a, \\
 & \leqslant & \sum_{a=1}^n \sum_{s, i(s) = a}  \big( H[\min\{y(s-1) + e_a,x\} ] - H[\min \{y(s-1), x\} ] \big) \delta(y(s-1)_a  + 1 \leqslant x_{a} ) \\
 & & \hspace*{.5cm} \mbox{ by submodularity and because } y(s)_{i(s)} = y(s-1)_{i(s)} + 1,\\
  & =  & \sum_{a=1}^n \sum_{s, i(s) = a}  \big( H[\min\{y(s-1) + e_a,x\} ] - H[\min \{y(s-1), x\} ] \big) \\
  & & \hspace*{.5cm} \mbox{ because the difference in values of } H \mbox{ is equal to zero for } y(s-1)_a  + 1 > x_{a}, \\
   & =  & \sum_{s=1}^r \big( H[\min\{y(s) ,x\} ] - H[\min \{y(s-1), x\} ] \big) \\ 
  & = & H[\min\{x,y(r)\}] - H[\min\{x,y(0)\}] = H(x)-H(0) .
 \EEAS
 Thus, $w$ is feasible. By construction the primal value is equal to $h(\mu)-H(0)$. We thus have a primal/dual optimal pair and the result is proved for non-decreasing $\rho_i$'s. This notably shows that the polyhedron $\mathcal{W}(H)$ is non-empty.
 
We can now show that if one $\rho_i$ is not non-decreasing, then the supremum is equal to infinity. In such a case, then there exists $x_i \in \{1,\dots,k_i-2\}$ such that $\mu_i(x_i) < 0 $. We may then let the corresponding $v_i(x_i)$ tend to $-\infty$.
 \end{proof}

Given the representation of $h_\downarrow(\rho)-H(0)$ as a maximum of linear functions (with an infinite value if some $\rho_i$ 
does not have non-increasing components), the convex problem of minimizing $H(x)$, which is equivalent to minimizing $h_\downarrow(\rho)-H(0)$ with respect to
$ \rho \in \prod_{i=1}^n [0,1]^{k_i-1}_{\downarrow}   $ has the following dual problem:
\BEA
\nonumber \min_{ \rho \in \prod_{i=1}^n [0,1]^{k_i-1}_{\downarrow}} 
h_\downarrow(\rho)-H(0)
& = & 
 \min_{ \rho \in \prod_{i=1}^n [0,1]^{k_i-1}}   \max_{w \in \mathcal{W}(H)}  \sum_{i=1}^n \sum_{x_i = 1}^{k_i-1} w_i(x_i) \rho_i(x_i)  \\
\nonumber & = & 
\max_{w \in \mathcal{W}(H)}  \sum_{i=1}^n  \min_{\rho_i \in [0,1]^{k_i-1} } \sum_{x_i = 1}^{k_i-1} w_i(x_i) \rho_i(x_i)  \\
\label{eq:dualW}
 & = & 
\max_{w \in \mathcal{W}(H)} 
\sum_{i=1}^n   \sum_{x_i=1}^{k_i-1} \min\{ w_i(x_i), 0\}.
\EEA
While $\mathcal{W}(H)$ share some properties with the base polytope of a submodular set-function, it is not bounded in general and is not the convex hull of all outputs of the greedy algorithm from \mysec{greedy}.

We now define the base polytope $\mathcal{B}(H)$ as the convex hull of all outputs of the greedy algorithm, when going over all allowed orderings of $\sum_{i=1}^n k_i -n $ elements of $\rho$ that respect the ordering of the individual $\rho_i$'s. In the submodular set-function case, we have $\mathcal{B}(H) =\mathcal{W}(H)$. However, in the general case we only have an inclusion.

\begin{proposition}[Properties of the base polytope $\mathcal{B}(H)$]
Assume $H$ is submodular. Then:
\BIT
\item[(a)] $\mathcal{B}(H) \subset \mathcal{W}(H)$.
\item[(b)] For any $\displaystyle\rho \in \prod_{i=1}^n \rb^{k_i-1}_\downarrow$, $\displaystyle\max_{ w \in \mathcal{B}(H) } \langle w, \rho \rangle = \max_{w \in \mathcal{W}(H)} \langle w, \rho \rangle = h_\downarrow(\rho)-H(0)$, with a joint maximizer obtained as the output of the greedy algorithm with any particular order.
\item[(c)] $\displaystyle\mathcal{W}(H) = \mathcal{B}(H) + \mathcal{K}$ with
$$\!\!\!
\mathcal{K} = \bigg\{ w \in \prod_{i=1}^n \rb^{k_i-1}, \ \forall i \in \{1,\dots,n\}, \ \forall x_i \in \{1,\dots,k_i-2\}, \ \sum_{y_i=1}^{x_i} w_i(y_i) \leqslant 0 \mbox{ and } \sum_{y_i=1}^{k_i-1} w_i(y_i) = 0 \bigg\}.
$$
\EIT
\end{proposition}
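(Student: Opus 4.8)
My plan is to take the three parts in order, leaning on Proposition~\ref{prop:WH} and its proof, which already did most of the computational work; the only genuinely new content is in (c).

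For part (a): the set $\mathcal{W}(H)$ is a convex polyhedron (finitely many linear inequalities plus one linear equality), so to prove $\mathcal{B}(H) \subseteq \mathcal{W}(H)$ it suffices to show that every generator of $\mathcal{B}(H)$---that is, every output $w$ of the greedy algorithm for a compatible order---already lies in $\mathcal{W}(H)$, and then pass to the convex hull. But this membership is exactly the feasibility computation carried out inside the proof of Proposition~\ref{prop:WH}: the greedy $w$ satisfies the top-corner equality $\sum_{i=1}^n \sum_{y_i=1}^{k_i-1} w_i(y_i) = H(k_1\!-\!1,\dots,k_n\!-\!1) - H(0)$ and the chain of inequalities $\sum_{i=1}^n \sum_{z_i=1}^{x_i} w_i(z_i) \leqslant H(x)-H(0)$ for all $x \in \X$. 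For part (b), the right-hand equality $\max_{w \in \mathcal{W}(H)} \langle w,\rho\rangle = h_\downarrow(\rho)-H(0)$ is precisely Proposition~\ref{prop:WH}; the inclusion (a) gives ``$\leqslant$'' for the $\mathcal{B}(H)$-maximum, and the reverse follows because the greedy output $w$ for any order compatible with $\rho$ both lies in $\mathcal{B}(H)$ and satisfies $\langle w,\rho\rangle = h_\downarrow(\rho)-H(0)$, as computed in \mysec{greedy}. Hence all three quantities coincide with the greedy $w$ as joint maximizer.

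The substance is part (c). First I would identify $\mathcal{K}$ as the recession cone of $\mathcal{W}(H)$. Using the cumulative reparameterization $v_i(x_i)=\sum_{y_i=1}^{x_i} w_i(y_i)$ (with $v_i(0)=0$) from the proof of Proposition~\ref{prop:WH}, the defining relations of $\mathcal{W}(H)$ read $\sum_{i=1}^n v_i(x_i) \leqslant H(x)-H(0)$ with equality at the top corner, so the recession directions are those $v$ with $\sum_{i=1}^n v_i(x_i)\leqslant 0$ for all $x$ and $\sum_{i=1}^n v_i(k_i\!-\!1)=0$. Testing with $x$ supported on a single coordinate $i$ (all other coordinates $0$, so all other $v_j(0)=0$) forces $v_i(x_i)\leqslant 0$ for every $i$ and every $x_i$; the top-corner equality, being a sum of nonpositive terms equal to zero, then forces $v_i(k_i\!-\!1)=0$; conversely these two conditions give $\sum_i v_i(x_i)\leqslant 0$ for all $x$. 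This is exactly $\mathcal{K}$ written in the $v$-coordinates. Consequently $\mathcal{B}(H)+\mathcal{K}\subseteq \mathcal{W}(H)+\mathcal{K}=\mathcal{W}(H)$, since adding recession directions to a closed convex set returns the set; this is the ``$\supseteq$'' direction of (c).

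For the reverse inclusion I would argue by support functions, since both sets are closed and convex---$\mathcal{B}(H)+\mathcal{K}$ is the sum of the compact polytope $\mathcal{B}(H)$ and the closed polyhedral cone $\mathcal{K}$, hence closed---so equality of support functions forces set equality. Using $\sigma_{\mathcal{B}(H)+\mathcal{K}}=\sigma_{\mathcal{B}(H)}+\sigma_{\mathcal{K}}$, I compute $\sigma_{\mathcal{K}}$ via the Abel identity $\langle w,\rho\rangle=\sum_{i=1}^n\sum_{x_i=1}^{k_i-1} v_i(x_i)\mu_i(x_i)$ with $\mu_i(x_i)=\rho_i(x_i)-\rho_i(x_i\!+\!1)$ and $\mu_i(k_i\!-\!1)=\rho_i(k_i\!-\!1)$: on $\mathcal{K}$ the term $v_i(k_i\!-\!1)=0$ drops out and the remaining $v_i(x_i)\leqslant 0$ are otherwise free, so the supremum is $0$ when every $\mu_i(x_i)\geqslant 0$ for $x_i\leqslant k_i\!-\!2$ (i.e. $\rho\in\prod_{i=1}^n\rb^{k_i-1}_\downarrow$) and $+\infty$ otherwise. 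Then on the downward cone, part (b) gives $\sigma_{\mathcal{B}(H)}(\rho)=h_\downarrow(\rho)-H(0)$ and $\sigma_{\mathcal{K}}(\rho)=0$, matching $\sigma_{\mathcal{W}(H)}$; off the downward cone, $\sigma_{\mathcal{B}(H)}$ is finite because $\mathcal{B}(H)$ is bounded while $\sigma_{\mathcal{K}}=+\infty$, again matching the value $+\infty$ of $\sigma_{\mathcal{W}(H)}$ from Proposition~\ref{prop:WH}. Equal support functions then yield (c). I expect the main obstacle to be precisely this support-function bookkeeping---correctly evaluating $\sigma_{\mathcal{K}}$ through the cumulative reparameterization and recognizing it as the convex indicator of the non-increasing cone, together with confirming closedness of $\mathcal{B}(H)+\mathcal{K}$ so the criterion applies; parts (a) and (b) are near-immediate corollaries of Proposition~\ref{prop:WH}.
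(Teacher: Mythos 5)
Your proof is correct and follows essentially the same route as the paper: parts (a) and (b) are read off from the feasibility and optimality computations in the proof of Proposition~\ref{prop:WH}, and part (c) is established by equating support functions of the two closed convex sets, with Abel's summation showing that $\sigma_{\mathcal{K}}$ is zero on the non-increasing cone and $+\infty$ off it. Your extra touches---identifying $\mathcal{K}$ as the recession cone of $\mathcal{W}(H)$ to get the inclusion $\mathcal{B}(H)+\mathcal{K} \subseteq \mathcal{W}(H)$ directly, and verifying closedness of $\mathcal{B}(H)+\mathcal{K}$ so that support-function equality implies set equality---are refinements of steps the paper leaves implicit, not a different argument.
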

\begin{proof}
In the statements above, $\langle w, \rho \rangle$ stands for $\sum_{i=1}^n \sum_{x_i=1}^{k_i-1} w_i(x_i) \rho_i(x_i)$.
The  statements (a) and (b)  were shown in the proof of Prop.~\ref{prop:WH}. Statement (c) is a simple consequence of the fact that the polar cone to $\prod_{i=1}^n \rb^{k_i-1}_\downarrow$ is what needs to be added to $\mathcal{B}(H)$ to get to $\mathcal{W}(H)$. In other words, (c) is the equality of two convex sets and is thus equivalent to the equality of their support functions; and we have, for any $\rho_i \in   \rb^{k_i-1}$, using Abel's summation formula:
$$\langle w_i, \rho_i \rangle =  \sum_{x_i=1}^{k_i-1} w_i(x_i) \rho_i(x_i) 
 = \sum_{x_i=1}^{k_i-2} \bigg( \sum_{y_i=1}^{x_i} w_i(y_i) \bigg) \big[\rho_i(x_i) - \rho_i(x_i\!+\!1) \big] +
 \bigg( \sum_{y_i=1}^{x_i-1} w_i(y_i) \bigg) \rho_i(k_1 \!-\! 1), $$
 from which we see that the supremum of $\langle w, \rho \rangle $ with respect to $\rho \in  \prod_{i=1}^n  \rb^{k_i-1}_\downarrow$ is equal to zero
  if $  w \in \mathcal{K}$ and $+\infty$ otherwise. By convex duality, this implies that the supremum of 
  $\langle w, \rho \rangle$ with respect to $w \in \mathcal{K}$ is equal to zero if $\rho$ has non-increasing components and zero otherwise. We thus have, for any $\rho \in \prod_{i=1}^n \rb^{k_i-1}$,
  $\sup_{ w \in \mathcal{B}(H) + \mathcal{K} } \langle w, \rho \rangle
  = h_\downarrow(\rho)$ if $\rho \in \prod_{i=1}^n \rb^{k_i-1}_\downarrow$ and $+\infty$ otherwise. Given Prop.~\ref{prop:WH}, this leads to the desired result.
\end{proof}
The key difference between $\mathcal{W}(H)$ and $\mathcal{B}(H)$ is that the support function of $\mathcal{B}(H)$ does not include the constraint that the argument should be composed of non-increasing vectors. However, $\mathcal{B}(H)$ is a polytope (i.e., bounded as a convex hull of finitely many points), while $\mathcal{W}(H)$ is not.

To obtain dual problems, we may either choose $\mathcal{W}(H)$ or $\mathcal{B}(H)$ (and then take into account the monotonicity contraints explicitly). For our algorithms in \mysec{algo}, we will consider $\mathcal{B}(H)$, while in the section below, we will use $\mathcal{W}(H)$ for the analysis and comparison of several optimization problems. The dual using $\mathcal{W}(H)$ is given in \eq{dualW}. When using $\mathcal{B}(H)$, the dual problem of minimizing
$h_\downarrow(\rho)-H(0)$ with respect to
$ \rho \in \prod_{i=1}^n [0,1]^{k_i-1}_{\downarrow}   $ has the following form:
\BEA
& & \nonumber\min_{ \rho \in \prod_{i=1}^n [0,1]^{k_i-1}_{\downarrow} } h_\downarrow(\rho)-H(0) \\
\nonumber & =  & \min_{ \rho \in \prod_{i=1}^n [0,1]^{k_i-1}_{\downarrow} } \max_{w \in \mathcal{B}(H)}  \sum_{x_i = 1}^{k_i-1} w_i(x_i) \rho_i(x_i) 
=
\max_{w \in \mathcal{B}(H)}  \sum_{i=1}^n  \min_{\rho_i \in [0,1]_\downarrow^{k_i-1} } \sum_{x_i = 1}^{k_i-1} w_i(x_i) \rho_i(x_i)  \\
\nonumber 
&  = & 
\max_{w \in \mathcal{B}(H)}  \sum_{i=1}^n  \min_{\rho_i \in [0,1]_\downarrow^{k_i-1} } \sum_{x_i=1}^{k_i-2} \bigg( \sum_{y_i=1}^{x_i} w_i(y_i) \bigg) \big[\rho_i(x_i) - \rho_i(x_i\!+\!1) \big] +
 \bigg( \sum_{y_i=1}^{x_i-1} w_i(y_i) \bigg) \rho_i(k_1 \!-\! 1)\\
\nonumber &  = & 
\max_{w \in \mathcal{B}(H)}  \sum_{i=1}^n  \min_{ \mu_i \in \mathcal{P}(\X_i)  } \sum_{x_i=0}^{k_i-1}  \mu_i(x_i) 
\bigg( \sum_{y_i=1}^{x_i-1} w_i(y_i) \bigg)   \\
\label{eq:dualB} &
= &
\max_{w \in \mathcal{B}(H)} 
\sum_{i=1}^n   \min_{x_i \in \{0,\dots,k_i-1\} } \sum_{y_i=1}^{x_i} w_i(y_i).
\EEA
There is thus two dual problems for the submodular function minimization problem, \eq{dualW} and \eq{dualB}. In practice, checking feasibility in the larger set $\mathcal{W}(H)$ is difficult, while checking feasibility in $\mathcal{B}(H)$ will be done by taking convex combinations of outputs of the greedy algorithm.

\subsection{Strongly-convex separable submodular function minimization}
 \label{sec:sepd}
 
In this section, we consider the separable optimization problem described in \mysec{separable}, now  in the discrete case, where we will be able to use the polyhedra $\mathcal{W}(H)$ and $\mathcal{B}(H)$ defined above.
 We thus consider   functions $a_{i x_i}: \rho_i(x_i) \mapsto a_{i x_i} \big[ \rho_i(x_i) \big]$, for $i \in \{1,\dots,n\}$ and $x_i \in \{1,\dots,k_i-1\}$, which are  convex on $\rb$. For simplicity, we follow the same assumptions than in~\cite[Section 8]{fot_submod}, that is, they are all strictly convex, continuously differentiable and such that the images of their derivatives goes from $-\infty$ to $+\infty$. Their Fenchel conjugates $a_{i x_i}^\ast$ then have full domains and are differentiable.

 We consider the pair of primal/dual problems:
 \BEA
\label{eq:proxd}  & & 
 \min_{\rho} h_\downarrow(\rho) + \sum_{i=1}^n \sum_{x_i=1}^{k_i-1} a_{i x_i}\big[\rho_i(x_i) \big] \mbox{ such that }
 \rho \in \prod_{i=1}^n \rb^{k_i-1}_{\downarrow}  
\\
\nonumber & = & \min_{\rho \in \prod_{i=1}^n\rb^{k_i-1}} \max_{ w \in \mathcal{W}(H) }
\sum_{i=1}^n \sum_{x_i=1}^{k_i-1} \Big\{ w_i(x_i) \rho_i(x_i) + a_{i x_i}\big[\rho_i(x_i) \big]  \Big\}
\mbox{ by Prop.~\ref{prop:WH}}, \\
  &=& \max_{ w \in \mathcal{W}(H) } \sum_{i=1}^n  
\sum_{x_i=1}^{k_i-1} -a_{i x_i}^\ast \big(- w_i(x_i) \big)
.
 \EEA
 Note that we have used $\mathcal{W}(H)$ instead of $\mathcal{B}(H)$ to include automatically the constraints of monotonicity of $\rho$. If using $\mathcal{B}(H)$, we would get a formulation which is more adapted to optimization algorithms (see \mysec{algo}), but not for the theorem below.
 
 The following theorem relates this optimization problem to the following family of submodular minimization problems, for $t \in \rb$:
 \BEQ
 \label{eq:sfmd} \min_{x \in \X} \ H(x) +  \sum_{i=1}^n \sum_{ y_i  = 1}^{x_i}  a_{i y_i}' (t),
 \EEQ
 which is the minimization of the sum of $H$ and a modular function.
 This is a discrete version of Theorem~\ref{theo:sepc}, which directly extends the corresponding results from submodular set-functions.
 \begin{theorem}[Separable optimization -- discrete domains]
 \label{theo:sepdis}
Assume $H$ is submodular. Then:
\BIT
\item[(a)] If $x$ and $x'$ are minimizers of \eq{sfmd} for $t> t'$, then $x \leqslant x'$, i.e., the solutions of \eq{sfmd} are non-increasing in $t \in  \rb$.
\item[(b)] Given a primal candidate $\rho \in \prod_{i=1}^n \rb^{k_i-1}_\downarrow$ and a dual candidate $w \in \mathcal{W}(H)$ for \eq{proxd}, then the duality gap for the problem in \eq{proxd} is the integral from $t=-\infty$ to $t=+\infty$ of the gaps for the problem in \eq{sfmd} for the same dual candidate and the primal candidate $\theta(\rho,t) \in \X$.
\item[(c)] Given the unique solution $\rho$ of \eq{proxd}, for all $t \in \rb$, $\theta(\rho,t) \in \X$ is a solution of \eq{sfmd}. 
\item[(d)] Given solutions $x^t \in \X$ for all problems in \eq{sfmd}, we may define $\rho$ through
$\rho_i(x_i) = \sup \big\{
t \in \rb, \ x_i^t \geqslant x_i
\big\} = \inf \big\{
t \in \rb, \ x_i^t \leqslant x_i
\big\}$, for all $i$ and $x_i$, so that $\rho$ is the optimal solution of \eq{proxd}.
\EIT
\end{theorem}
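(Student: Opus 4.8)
The plan is to mirror the proof of Theorem~\ref{theo:sepc} step by step, replacing the one-dimensional integrals over inverse cumulative functions by the piecewise structure furnished by $\theta(\rho,\cdot)$ and the greedy construction of \mysec{greedy}. For (a), I would observe that the objective of \eq{sfmd} is submodular in $x$ (it is $H$ plus a separable, hence modular, term) and has \emph{strictly} increasing differences in each pair $(x_i,t)$: incrementing $x_i$ by one changes the separable part by $a'_{i(x_i+1)}(t)$, whose $t$-derivative $a''_{i(x_i+1)}(t)$ is strictly positive by the assumed strict convexity of $a_{i(x_i+1)}$. By the classical monotone comparative statics argument \cite[Section 2.8]{topkis2011supermodularity}, the minimizing $x$ is then non-increasing in $t$, and strictness upgrades this to comparability of arbitrary minimizers, giving $x \leqslant x'$ whenever $t>t'$.

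The heart is (b), which I would prove by writing both the primal and the dual value of \eq{proxd} as integrals over $t\in\rb$ plus the \emph{same} additive constant, so that the gap equals the $t$-integral of the gaps. On the primal side, $h_\downarrow(\rho)$ is given by \eq{hrho}, while the separable term admits, by the same Abel-summation-and-Fubini computation as in Prop.~\ref{prop:cumuc}, the representation $\sum_{i,x_i} a_{ix_i}(\rho_i(x_i)) = \int_\rb \sum_i \sum_{y_i=1}^{\theta(\rho_i,t)} a'_{iy_i}(t)\,dt$ up to a constant; together these identify the primal of \eq{proxd} with the $t$-integral of the primal of \eq{sfmd} evaluated at $\theta(\rho,t)$. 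On the dual side, for each fixed $t$ the shift $\tilde w_i(x_i) = w_i(x_i) + a'_{ix_i}(t)$ is a bijection from $\mathcal{W}(H)$ onto the base polyhedron of the perturbed function, so by \eq{dualW} the dual of \eq{sfmd} equals $\sum_{i,x_i}\min\{w_i(x_i)+a'_{ix_i}(t),0\}$; integrating in $t$ and using $-a^*_{ix_i}(-w_i(x_i)) = \inf_s\{w_i(x_i)s + a_{ix_i}(s)\}$, whose optimum solves $a'_{ix_i}(s) = -w_i(x_i)$, recovers the dual of \eq{proxd} up to the same constant. Subtracting the two identities makes the constants cancel, establishing the gap decomposition.

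Parts (c) and (d) are then short. For (c), at the unique optimum $\rho$ of \eq{proxd} with an optimal $w$ the total gap vanishes, so the pointwise nonnegative integrand produced in (b) is zero for almost every $t$; since the minimizers of \eq{sfmd} are monotone in $t$ by (a) and $\theta(\rho,\cdot)$ is exactly the corresponding non-increasing selection, $\theta(\rho,t)$ must solve \eq{sfmd} for \emph{every} $t$, not merely almost every $t$. For (d) I would reverse the construction: given minimizers $x^t$, monotonicity (a) guarantees that $\rho_i(x_i) = \sup\{t : x^t_i \geqslant x_i\}$ has non-increasing components in $x_i$ and that the two displayed formulas for $\rho_i(x_i)$ agree off the measure-zero jump set; feeding this $\rho$ with the associated greedy output $w$ into the decomposition of (b) yields a zero gap, hence optimality, exactly as in the set-function argument \cite[Prop.~8.3]{fot_submod}.

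I expect the main obstacle to lie in the bookkeeping for (b): the integration range is now all of $\rb$, so the integrated primal and integrated dual each diverge individually and only their difference is finite. The delicate point is to pair the divergent tails correctly, i.e.\ to verify that the additive constant arising when rewriting $\sum a_{ix_i}(\rho_i(x_i))$ as a $t$-integral coincides with the one arising when rewriting $-\sum a^*_{ix_i}(-w_i(x_i))$, and to confirm that the shift $\tilde w = w + a'(\cdot\,,t)$ genuinely lands in the base polyhedron of the perturbed submodular function for every $t$. Everything else is a routine discretization of the continuous argument already carried out for Theorem~\ref{theo:sepc}.
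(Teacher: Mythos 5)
Your parts (a), (b), and (c) follow essentially the paper's own route: (a) is the Topkis comparative-statics argument (strict convexity of the $a_{ix_i}$ giving comparability of arbitrary minimizers); (b) is the same decomposition the paper carries out by integrating over a truncated range $[-M,M]$, and the ``divergent tails'' issue you flag is resolved exactly as you suspect --- the $M$-dependent constants on the primal and dual sides cancel, crucially because every $w \in \mathcal{W}(H)$ satisfies the equality constraint $\sum_{i}\sum_{x_i} w_i(x_i) = H(k_1\!-\!1,\dots,k_n\!-\!1)-H(0)$; and (c) is the a.e.-vanishing-gap argument, at the same level of detail as the paper.

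Part (d), however, contains a genuine error: the greedy output $w$ associated with $\rho$ does \emph{not} in general yield a zero gap in the decomposition of (b). A zero gap for \eq{proxd} requires two conditions simultaneously: tightness, $\langle w,\rho\rangle = h_\downarrow(\rho)-H(0)$, which the greedy output does provide, \emph{and} the Fenchel--Young equalities $w_i(x_i) = -a_{ix_i}'\big(\rho_i(x_i)\big)$ for all $i, x_i$, which it does not. Concretely, take $n=1$, $\X_1=\{0,1,2\}$, $H(0)=0$, $H(1)=1$, $H(2)=0$, $a_{1x_1}(s)=\tfrac{1}{2}s^2$: the minimizers $x^t$ of \eq{sfmd} are $x^t=2$ for $t<0$ and $x^t=0$ for $t>0$, so your recipe correctly produces $\rho=(0,0)$, whose certificate is $w^*=(0,0)\in\mathcal{W}(H)$ with gap $0$; but the greedy output for $\rho$ is $w=(1,-1)$, which gives dual value $-1$ against primal value $0$, i.e., a gap of $1$. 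The argument that actually works (and is what the set-function proof in \cite[Prop.~8.3]{fot_submod} does) avoids a dual certificate altogether: for any monotone $\rho'$, the primal objective of \eq{proxd} equals a constant plus $\int_\rb \big(\mbox{objective of \eq{sfmd} at } \theta(\rho',t)\big)\, dt$, which is bounded below by the same constant plus $\int_\rb \big(\mbox{optimal value of \eq{sfmd}}\big)\, dt$, and this lower bound is attained by your $\rho$ since $\theta(\rho,t)=x^t$ for almost every $t$; alternatively, one exhibits the specific dual point $w_i(x_i)=-a_{ix_i}'(\rho_i(x_i))$ and verifies it lies in $\mathcal{W}(H)$ and is tight for $\rho$.
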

\begin{proof}
As for Theorem~\ref{theo:sepc},
the first statement (a)  is a consequence of submodularity~\cite{topkis1978minimizing}. The main idea is that when we go from $t$ to $t'<t$, then the function difference  is increasing by convexity, hence the minimizer has to decrease.

For the second statement (b), we provide expressions for all elements of the gap, following the proof in~\cite[Prop.~8.5]{fot_submod}.
 For $M>0$ large enough, we have from \eq{hrho}:
 $$
 h_\downarrow(\rho) - H(0)  = \int_{-M}^{+M} H\big[ \theta(\rho,t) \big] dt - M H(k_1\!-\!1,\dots,k_n\!-\!1) - M H(0).
 $$
 Moreover, the integral of the $i$-th ``non-$H$-dependent'' part of the primal objective   for the submodular minimization problem in \eq{sfmd} is equal to, for $i \in \{1,\dots,n\}$ and the primal candidate $x_i = \theta(\rho_i,t)$:
\BEAS
& & \int_{-M}^M \sum_{y_i=1}^{x_i} a'_{i y_i}(t) dt
\\
& = & \int_{-M}^{+ M} \bigg\{
\sum_{x_i=1}^{k_i-1} \delta( \theta(\rho_i,t)= x_i) \sum_{y_i=1}^{x_i}a_{i y_i}' (t)
\bigg\}
dt \\
& = & \int_{-M}^{\rho_i(k_i-1)}\sum_{y_i=1}^{k_i-1}a_{i y_i}' (t) dt
+ \sum_{s_i=2}^{k_i-1} \int_{\rho_i(s_i)}^{\rho_i(s_i-1)}  \sum_{y_i=1}^{s_i-1}a_{i y_i}' (t)  dt
\mbox{ by definition of } \theta,
\\
& = &   \sum_{y_i=1}^{k_i-1}\big\{  a_{i y_i}( \rho_i(k\!-\!1))  -  a_{i y_i}(-M) \big\} +  \sum_{s_i=2}^{k_i-1} 
 \sum_{y_i=1}^{s_i-1}\bigg\{ a_{i y_i} ( \rho_i(s_i\!-\!1) )- a_{i y_i} ( \rho_i(s_i) ) \bigg\} \\
& = & - \sum_{y_i=1}^{k_i-1} a_{i y_i}(-M) + \sum_{x_i=1}^{k_i-1} a_{i x_i}(\rho_i(x_i)).
\EEAS
We may now compute for any $i \in \{1,\dots,n\}$ and $x_i \in \{0,\dots,k_i-1\}$, the necessary pieces for the  integral of the dual objective values from \eq{dualW}:
\BEAS
 & & \int_{-M}^M \min\{ w_i(x_i) + a'_{i x_i}(t), 0\} dt \\
& =
&  \int_{-M}^{(a_{ix_i}^\ast)'(-w_i(x_i))} \big( w_i(x_i) + a'_{i x_i}(t) \big) dt 
\mbox{ because }  w_i(x_i) + a'_{i x_i}(t) \leqslant 0 \Leftrightarrow t \leqslant (a_{i x_i}^\ast)'(-w_i(x_i)),\\
& = &  a_{i x_i}\big(
(a_{ix_i}^\ast)'(-w_i(x_i))
\big) - a_{i x_i}(-M) + w_i(x_i) \big[ M + (a_{ix_i}^\ast)'(-w_i(x_i))\big] \\
& = & 
  (\psi_{i y_{i}}^\ast)' ( - w_i(x_i))  ( - w_i(x_i)) - 
 \psi_{i y_{i}}^\ast ( - w_i(x_i))- a_{i y_i} (-M)  + w_i(x_i) \big[ M + (a_{ix_i}^\ast)'(-w_i(x_i))\big] \\
 &  = & -a_{i x_{i}}^\ast ( - w_i(x_i)) - a_{i x_i} (-M)  + w_i(x_i)  M.
\EEAS
By putting all pieces together, we obtain the desired result, that is,
$$
 h_\downarrow(\rho) - H(0) + \sum_{x_i=1}^{k_i-1} a_{i x_i}(\rho_i(x_i))
+ \sum_{i=1}^n \sum_{x_i=1}^{k_i-1} a_{i x_{i}}^\ast ( - w_i(x_i))
 $$
 is exactly the integral from $-M$ to $M$ of 
 $$
 H\big[ \theta(\rho,t) \big] + \sum_{i=1}^n \sum_{x_i=1}^{k_i-1} \delta( \theta(\rho_i,t)= x_i) \sum_{y_i=1}^{x_i}a_{i y_i}' (t)
 - H(0) -  \sum_{i=1}^n \min\{ w_i(x_i) + a'_{i x_i}(t), 0\}.
 $$
The proofs of statements (c) and (d) follow the same argument as for 
Theorem~\ref{theo:sepc}.  
\end{proof}
In \mysec{42}, we will use this result for the functions $\varphi_{ix_i}(t) = \frac{1}{2} t^2$, and   from thresholding at $t=0$ we obtain a solution for the minimization of $H$, thus directly extending the submodular set-function situation~\cite{fujishige2005submodular}.

\subsection{Relationship with submodular set-functions and ring families}

For finite sets, it is known that one may reformulate the submodular minimization problem in terms of a submodular function minimization problem with added constraints~\cite{schrijver2000combinatorial}. Given the submodular function $H$, defined on $\X = \prod_{i=1}^n \{0,\dots,k_i-1\}$ we consider a submodular set-function defined on a \emph{ring family} of the set
$$V =\big\{ (i,x_i), \ i \in \{1,\dots,n\}, \ x_i \in \{1,\dots,k_i-1\} \big\}
\subset \{1,\dots,n\} \times \rb
,$$
that is, a family of subsets of $V$ which is invariant by intersection and union. In our context, a member of the ring family is such that  if $(i,x_i)$ is in the set, then all $(i,y_i)$, for $y_i \leqslant x_i$ are in the set as well. The cardinality of $V$ is $\sum_{i=1}^n k_i - n$; moreover, any member of the ring family is characterized for each $i$ by the largest $x_i \geqslant 1$ such that $(i,x_i) \in V$ (if no $(i,x_i)$ is in $V$, then we take $x_i=0$ by convention). This creates a bijection from the ring family to $\X$.

We can identify subsets of $V$ as elements of $\prod_{i=1}^n \{0,1\}^{k_i-1}$. It turns out that elements of the ring family as the non-increasing vectors, i.e., $z \in \prod_{i=1}^n \{0,1\}^{k_i-1}_\downarrow$. Any such element $z$ is also associated uniquely to an element $x \in \X = \prod_{i=1}^n \{0,\dots,k_i-1\}$, by taking $x_i$ as the largest $y_i \in \{1,\dots,k_i \!-\!1\}$ such that $z_i(y_i)=1$, and with value zero, if $z_i=0$. We can thus define a function $H_{\rm ring}(z)$ equal to $H(x)$ for this uniquely associated $x$. Then for any two $z,z' \in \prod_{i=1}^n \{0,1\}^{k_i-1}_\downarrow$ of the ring family with corresponding $x,x' \in \X$, $\min\{z,z'\}$ and $\max\{z,z'\}$ correspond to  $\min\{x,x'\}$ and 
 $\max\{x,x'\}$, which implies that $H_{\rm ring}(z) + H_{\rm ring}(z') \geqslant 
H_{\rm ring}(\max\{z,z'\}) + H_{\rm ring}(\min\{z,z'\})$, i.e., $H_{\rm ring}$ is submodular on the ring family, and minimizing $H(x)$ for $x \in \X$, is equivalent to minimizing $H_{\rm ring}$ on the ring family.

There is then a classical reduction to the minimization of a submodular function on $\prod_{i=1}^n \{0,1\}^{k_i-1}$ (without monotonicity constraints)~\cite{schrijver2000combinatorial,ishikawa2003exact,SchlesingerD06}, which is a regular submodular set-function minimization problem on a set of size $\sum_{i=1}^n k_i-n$ which we now describe (adapted from~\cite[Section 49.3]{schrijver2003combinatorial}). For a certain $B_i > 0$, $i=1,\dots,n$, define the function
$$
H_{\rm ring}^{\rm ext}(z) = H_{\rm ring}(z^\downarrow) + \sum_{i=1}^n B_i \| z_i^\downarrow - z_i\|_1
=    H_{\rm ring}(z^\downarrow) + \sum_{i=1}^n B_i 1_{k_i-1}^\top \big( z_i^\downarrow - z_i \big),
$$
where $1_{k_i-1} \in \rb^{k_i-1}$ is the vector of all ones, and  
where for $z \in \prod_{i=1}^n \{0,1\}^{k_i-1}$, $z^\downarrow$ denotes the smallest element of the ring family containing $z$; in other words $z_i^\downarrow(x_i) = 1$ for all $x_i$ such that there exists $y_i \geqslant x_i$ with $z_i(y_i)=1$, and zero otherwise. We choose $B_i>0$ such that for all $x \leqslant y$, $|H(y)-H(x)|
\leqslant \sum_{i=1}^n B_i \| x_i - y_i\|_1$, so that $H_{\rm ring}^{\rm ext}$ is submodular. Indeed, for any $z,z' \in \prod_{i=1}^n \{0,1\}^{k_i-1}$, we have: 
\BEAS
& & H_{\rm ring}^{\rm ext}(z)  + H_{\rm ring}^{\rm ext}(z')  \\
& = &  H_{\rm ring} (z^\downarrow) + H_{\rm ring} ((z')^\downarrow) + \sum_{i=1}^n B_i \| z_i^\downarrow - z_i\|_1
+ \sum_{i=1}^n \| (z')_i^\downarrow - z'_i\|_1 \\
& \geqslant &  H_{\rm ring} (\max\{ z^\downarrow, (z')^\downarrow \} ) +   H_{\rm ring} (\min\{ z^\downarrow, (z')^\downarrow \} )  + \sum_{i=1}^n B_i \| z_i^\downarrow - z_i\|_1
+ \sum_{i=1}^n \| (z')_i^\downarrow - z'_i\|_1 \\
& & \hspace*{9cm} \mbox{ by submodularity of  } H_{\rm ring}, \\
& = &  H_{\rm ring} (\max\{ z^\downarrow, (z')^\downarrow \} ) +   H_{\rm ring} (\min\{ z^\downarrow, (z')^\downarrow \} )  \\
& & \hspace*{.5cm}+ \sum_{i=1}^n B_i \| \max\{ z_i^\downarrow, (z')_i^\downarrow \} - \max\{ z_i,  z_i'  \}\|_1
+ \sum_{i=1}^n B_i \| \min\{ z_i^\downarrow, (z')_i^\downarrow \} - \min\{ z_i ,  z_i'  \}\|_1 \\
& &  \hspace*{9cm} \mbox{ because } z^\downarrow \geqslant z,\\
& \geqslant &  H_{\rm ring} (\max\{ z, z' \}^\downarrow ) +   H_{\rm ring}  (\min\{ z, z' \}^\downarrow )  \\
& & \hspace*{.5cm} + \sum_{i=1}^n \|  \max\{ z, z' \}_i^\downarrow  - \max\{ z ,  z'  \}_i\|_1
+ \sum_{i=1}^n B_i \| \min\{ z, z' \}_i^\downarrow - \min\{ z ,  z'  \}_i\|_1 \\
& & \hspace*{8cm} \mbox{ because of our choice for } B_i, \\
& = &
 H_{\rm ring}^{\rm ext}(\min\{ z, z' \})  + H_{\rm ring}^{\rm ext}(\max\{ z, z' \}),
\EEAS
which shows submodularity. Moreover, for any strictly positive $B_i$'s,   any minimizer of $H_{\rm ring}^{\rm ext}$ belongs to the ring family, and hence leads to a minimizer of $H$. Therefore, we have reduced the minimization of $H$ to the minimization of a submodular set-function.

The Lov\'asz extension of $H_{\rm ring}^{\rm ext}$ happens to be equal to, for $\nu \in \prod_{i=1}^n [0,1]^{k_i-1}$, 
$$h_\downarrow(\nu^\downarrow) +   \sum_{i=1}^n B_i \sum_{x_i=1}^{k_i-2} \big( \nu_i(x_{i+1}) - \nu_i(x_i) \big)_+,$$
where $\nu_i^\downarrow$ is the smallest non-increasing vector greater or equal to $\nu_i$.
When the $B_i$'s tend to $+\infty$, we recover our convex relaxation from a different point of view. Note that in practice, unless we are in special situations like min-cut/max-flow problems~\cite{ishikawa2003exact} (where we can take $B_i = +\infty$), this strategy adds extra (often unknown) parameters $B_i$ and does not lead to our new interpretations, convex relaxations, and duality certificates, in particular for continuous sets $\X_i$. In particular, when using preconditioned subgradient descent as described in \mysec{41} to solve the submodular set-function minimization problem, for cases where all $B_i$ are equal and all $k_i$ are equal, we get a complexity bound of  $\frac{1}{\sqrt{t}} nk^2 B$ instead of $\frac{1}{\sqrt{t}} nkB$, hence with a worse scaling in the number $k$ of elements of the finite sets, which is due larger Lipschitz-continuity constants.

\section{Optimization for discrete sets}
\label{sec:algorithms}
\label{sec:algo}

In this section, we assume that we are given a submodular function $H$ on $\prod_{i=1}^n \{0,\dots,k_i-1\}$, which we can query through function values (and usually through the greedy algorithm defined in \mysec{greedy}). We assume that for all $x \in \X$, when $x_i < k_i-1$, $|H(x+e_i) - H(x)|$ is bounded by $B_i$ (i.e., Lipschitz-continuity).

We present algorithms to minimize $H$. These can be used in continuous domains by discretizing each $\X_i$. The key is that the overall complexity remains polynomial in $n$, and the dependence in $k_i$ is weak enough to easily reach high precisions.
See the complexity for optimizing functions in continuous domains in the next section.

\subsection{Optimizing on measures}
\label{sec:41}
We have the first equivalent formulations from \mysec{discrete}:
$$
\min_{\mu \in \prod_{i=1}^n \mathcal{P}(\{0,\dots,k_i-1\})} h(\mu)
\ \ \Leftrightarrow  \ \ 
\min_{\rho \in \prod_{i=1}^n [0,1]^{k_i-1}_{\downarrow}} h_\downarrow(\rho) .
$$
Once we get an approximately optimal solution $\rho$, we compute $\theta(\rho, t)$ for all $t \in [0,1]$, and select the one with minimal value (this is a by-product of the greedy algorithm, i.e., with the notation of \mysec{greedy}, $\min_s H\big[ y(s)\big]$). If we have a dual candidate $w \in \mathcal{B}(H)$, we can use \eq{dualB} to obtain a certificate of optimality.

We consider the \emph{projected subgradient method} on $\rho$. We may compute a subgradient of $h_\downarrow$ in $\mathcal{B}(H)$  by the greedy algorithm, and then use $n$ isotonic regressions to perform  the $n$ independent orthogonal projection of $\rho - \gamma h_\downarrow'(\rho)$, using the pool-adjacent-violator algorithm~\cite{best1990active}. Each iteration has thus complexity
$
\Big(\sum_{i=1}^n k_i \Big) \log \Big(\sum_{i=1}^n k_i \Big)
$, which corresponds to the greedy algorithm (which is here the bottleneck).

In terms of convergence rates,  each element 
$w_i(x_i)$ of a subgradient is in $[-B_i,B_i]$. Moreover, for any two elements $\rho, \rho'$ of $\prod_{i=1}^n [0,1]^{k_i-1}_{\downarrow}$, we have $\| \rho_i - \rho'_i \|_\infty \leqslant 1$. Thus, in $\ell_2$-norm, the diameter of the  optimization 
domain is less than $\sqrt{ \sum_{i=1}^n k_i }$, and the norm of subgradients less than
$\sqrt{\sum_{i=1}^n k_i B_i^2}$. Thus, the distance to optimum (measured in function values) after $t$ steps is less 
than~\cite{shor2013nondifferentiable}
$$\frac{1}{\sqrt{t}} \sqrt{    \Big(\sum_{i=1}^n k_i\Big)  \Big(\sum_{i=1}^n k_i B_i^2\Big)}.$$
Note that by using diagonal preconditioning~(see, e.g.,~\cite{juditsky2011first}), we can replace the bound above by
$\frac{1}{\sqrt{t}} \sum_{i=1}^n k_i B_i $. In both cases, if all $B_i$ and $k_i$ are equal, we get a complexity of $O( nk \log(nk))$ per iteration and a convergence rate of
$O( nkB / \sqrt{t} )$. In practice, we choose the Polyak rule for the step-size, since we have candidates for the dual problem; that is, we use $\gamma = \frac{h_\downarrow(\rho) - {D}}{ \|w \|_2^2}$, where $D$ is the best dual value so far, which we can obtain with dual candidates which are the averages of all elements of $\mathcal{B}(H)$ seen so far~\cite{nedic2009}.

Once we have a primal candidate $\rho \in \prod_{i=1}^n [0,1]^{k_i-1}_\downarrow$ and a dual candidate $w \in \mathcal{B}(H)$, we may compute the minimum value of $H$ along the greedy algorithm, as a primal value, and
$$
\min_{\rho \in \prod_{i=1}^n [0,1]^{k_i-1}_\downarrow}\langle w, \rho \rangle
= \sum_{i=1}^n \min_{x_i \in \{0,\dots,k_i-1\} } \sum_{y_i=1}^{x_i} w_i(y_i),
$$
as the dual value. Note that as for submodular set-functions, the only simple way to certify that $w \in \mathcal{B}(H)$ is to make it a convex combination of outputs of the greedy algorithm.

Finally, note that the projected subgradient descent directly applies when a noisy oracle for $H$ is available, using the usual arguments for stochastic extensions~\cite{shor2013nondifferentiable}, with a similar convergence rate.

\paragraph{Minimizing submodular Lipschitz-continuous functions.}
We consider a submodular function $H$ defined on $[0,B]^n$ which is $G$-Lipschitz-continuous with respect to the $\ell_\infty$-norm, that is $|H(x)-H(x')| \leqslant G\| x - x \|_\infty$. By discretizing $[0,B]$ into $k$ values $\frac{ i B}{k}$ for $i \in \{0,\dots,k-1\}$ and minimizing the corresponding submodular discrete function, then we will make an error of at most $\frac{GB}{k}$ on top the bound above, which is equal to $\frac{1}{\sqrt{t}}\sqrt{  (  n k) ( n k B^2 G^2 / k^2 )  } = \frac{BG n}{\sqrt{t}}$ after $t$ iterations of cost  $O(nk \log(nk))$. Thus to reach a global optimization suboptimality of $\varepsilon$, we may take $\frac{GB}{k} = \frac{\varepsilon}{2}$ and $\frac{BG n}{\sqrt{t}}= \frac{\varepsilon}{2}$, that is,
$ k = \frac{2 GB}{\varepsilon}$ and $t = \big(\frac{2 GB n}{\varepsilon}
\big)^2$, leading to an overall complexity of $O\big(
\big(\frac{2 GB n}{\varepsilon} \big)^3 \log \big(\frac{2 GB n}{\varepsilon} \big)
\big)$.

\subsection{Smooth extension and Frank-Wolfe techniques}
\label{sec:42}
We consider the minimization of $h_\downarrow(\rho) + \frac{1}{2} \| \rho \|^2$, with $\| \rho \|^2 = \sum_{i=1}^n \sum_{x_i = 1}^{k_i-1} | \rho_i(x_i) |^2$. Given an approximate $\rho$, we compute $\theta(\rho,t)$ for all $t \in \rb$ (which can be done by ordering all values of $\rho_i(x_i)$, like in the greedy algorithm).

We have by convex duality:
\BEAS
\min_{\rho \in \prod_{i=1}^n \rb^{k_i-1}_\downarrow}
h_\downarrow(\rho) + \frac{1}{2} \| \rho \|^2
& = & \min_{\rho \in \prod_{i=1}^n \rb^{k_i-1}_\downarrow}
\max_{ w \in \mathcal{B}(H)} \langle \rho, w\rangle+ \frac{1}{2} \| \rho \|^2 \\
& = & \max_{ w \in \mathcal{B}(H)}  \bigg\{ \min_{\rho \in \prod_{i=1}^n \rb^{k_i-1}_\downarrow} 
 \langle \rho, w\rangle+ \frac{1}{2} \| \rho \|^2 \bigg\}.
\EEAS 
We thus need to maximize with respect to $w$ in a compact set of diameter less than 
$2 \sqrt{\sum_{i=1}^n k_i B_i^2}$, a $1$-smooth function. Thus, we have that after $t$ steps, the distance to optimum is less
than $ \frac{2}{t} \sum_{i=1}^n k_i B_i^2 $ using Frank-Wolfe techniques, with either line-search of fixed step-sizes~\cite{frank2006algorithm,jaggi,bach2015duality}. Note that primal candidates may also be obtained with a similar convergence rate; the running-time complexity is the same as for subgradient descent in \mysec{41}.

Note that when using a form of line-search, we can use warm-restarts, which could be useful when solving a sequence of related problems, e.g., when discretizing a continuous problem. Finally, more refined versions of Frank-Wolfe algorithms can be used (see \cite{lacoste2015global} and references therein).  

Given an approximate minimizer $\rho$, we can compute all values of $x_i=\theta(\rho_i,t)$ for all $t \in \rb$ from the greedy algorithm to get an approximate minimizer of the original submodular function. Given  an approximate solution with error $\varepsilon$ for the strongly-convex problem, using the exact same argument than for set-functions~\cite[Prop.~10.5]{fot_submod},
we get a bound of $2 \sqrt{\varepsilon \sum_{i=1}^n k_i}$ for the submodular function minimization problem. Combined with the $O(1/t)$ convergence rate described above for the Frank-Wolfe algorithm, there is no gain in the complexity bounds compared to \mysec{41} (note that a similar diagonal pre-conditioning can be used); however the empirical performance is significantly better (see \mysec{experiments}).

\section{Experiments}
\label{sec:experiments}

In this section, we consider a basic experiment to illustrate our results. We consider the function defined on $[-1,1]^n$,
$$H(x) = \frac{1}{2} \sum_{i=1}^n ( x_i - z_i)^2 + \lambda \sum_{i=1}^n |x_i|^\alpha + \mu \sum_{i=1}^{n-1} ( x_i - x_{i+1})^2.$$
This corresponds to denoising a one-dimensional signal $z$ with the contraint that $z$ is smooth and sparse. See an illustration in \myfig{signals}.
The smoothness prior is obtained from the quadratic form (which is submodular and convex), while the sparse prior is only submodular, and not convex. Thus, this is not a convex optimization problem when $\alpha<1$; however, it can be solved globally to arbitrary precision for any $\alpha>0$ using the algorithms presented in \mysec{algorithms}. Note that the use of a non-convex sparse prior (i.e., $\alpha$ significantly less than one, e.g., $1/8$ in our experiments) leads to fewer biasing artefacts than the usual $\ell_1$-norm~\cite{fan2001variable}.

In \myfig{gaps}, we show certified duality gaps for the two algorithms from \mysec{algorithms} on a discretization with $50$ grid elements for each of the $n=50$ variables. We can see that the Frank-Wolfe-based optimization performs better than projected subgradient descent, in particular the ``pairwise-Frank-Wolfe'' method of~\cite{lacoste2015global}.

Finally, in \myfig{rhos}, we show the estimated values for the vectors $\rho$, showing that the solution is almost a threshold function for the non-smooth dual problem used in \mysec{41} (left), while it provides more information in the smooth dual case  used in \mysec{42} (right), as it solved a series of submodular function minimization problems.

\section{Discussion}

In this paper, we have shown that a large family of non-convex problems can be solved by a well-defined convex relaxation and efficient algorithms based on simple oracles. This is based on replacing each of the variables $x_i$ by a probability measure and optimizing over the cumulative distributions of these measures. Our algorithms apply to all submodular functions defined on products of subsets of $\rb^n$, and hence include continuous domains as well as finite domains. 
This thus defines a new class of functions that can be minimized in polynomial time.

\begin{figure} 
\begin{center}
\includegraphics[scale=.5]{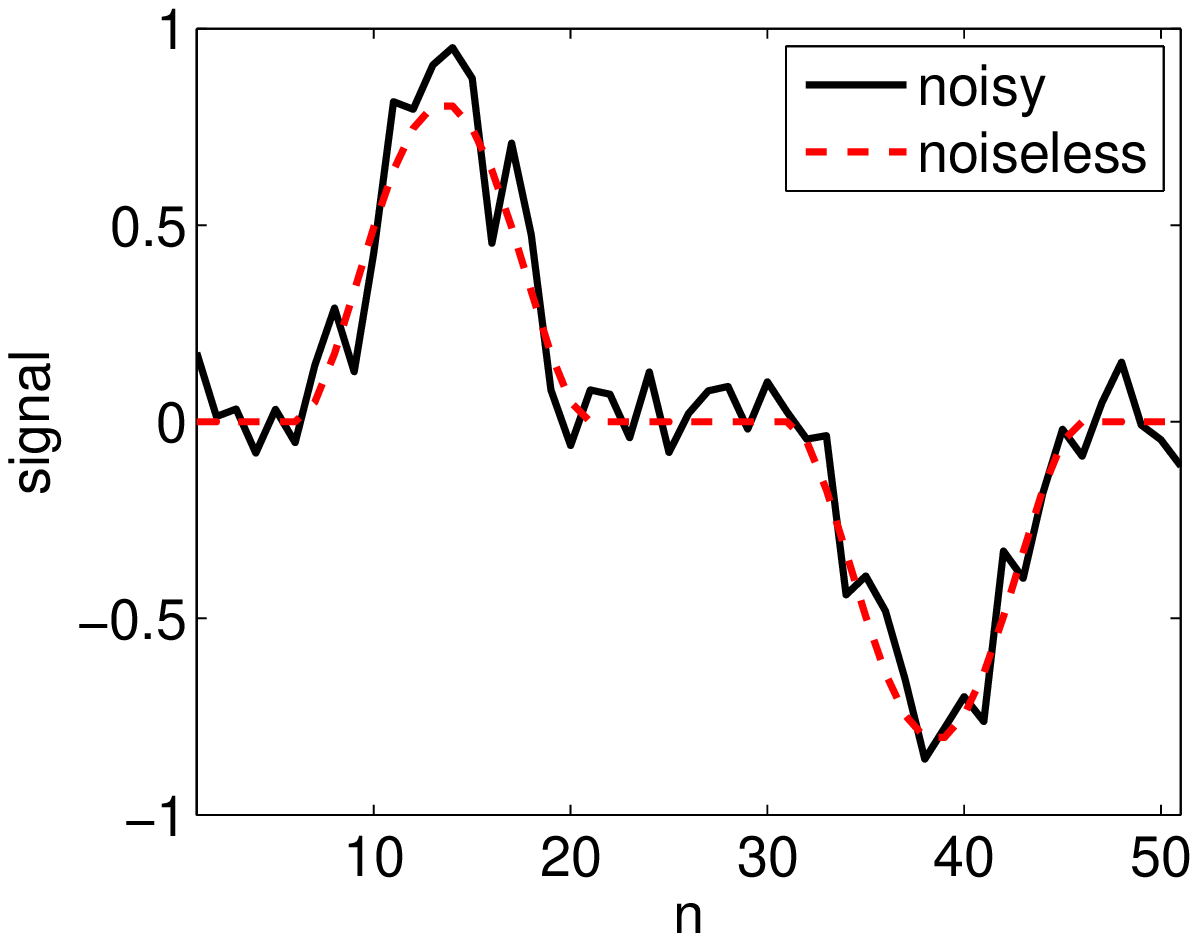} \hspace*{1cm}
\includegraphics[scale=.5]{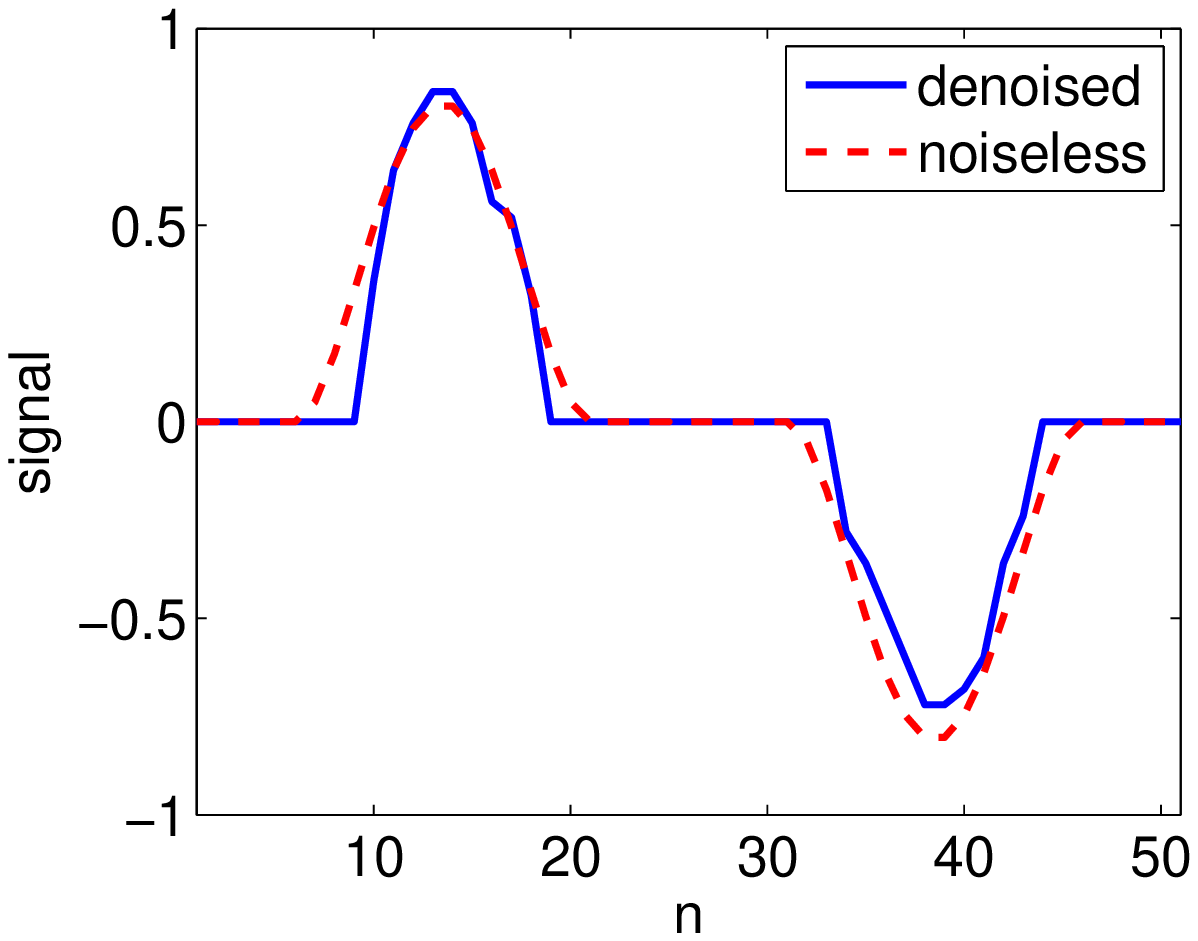}
\end{center}

\vspace*{-.5cm}

\caption{One-dimensional signals: noisy input obtained by adding Gaussian noise to a noiseless signal (left); denoised signal (right).}
\label{fig:signals}
\end{figure}

\begin{figure} 
\begin{center}
\includegraphics[scale=.5]{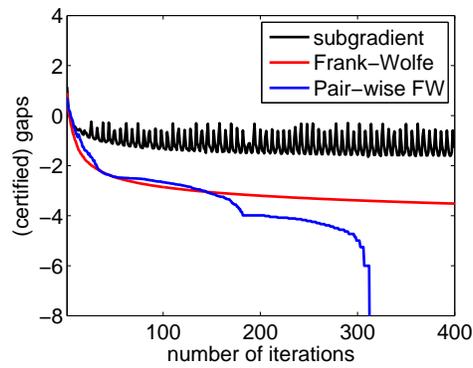}
\end{center}

\vspace*{-.5cm}

\caption{Certified duality gaps: noisy input obtained by adding Gaussian noise to a noiseless signal (left); denoised signal (right).}
\label{fig:gaps}
\end{figure}

\begin{figure} 
\begin{center}

\includegraphics[scale=.5]{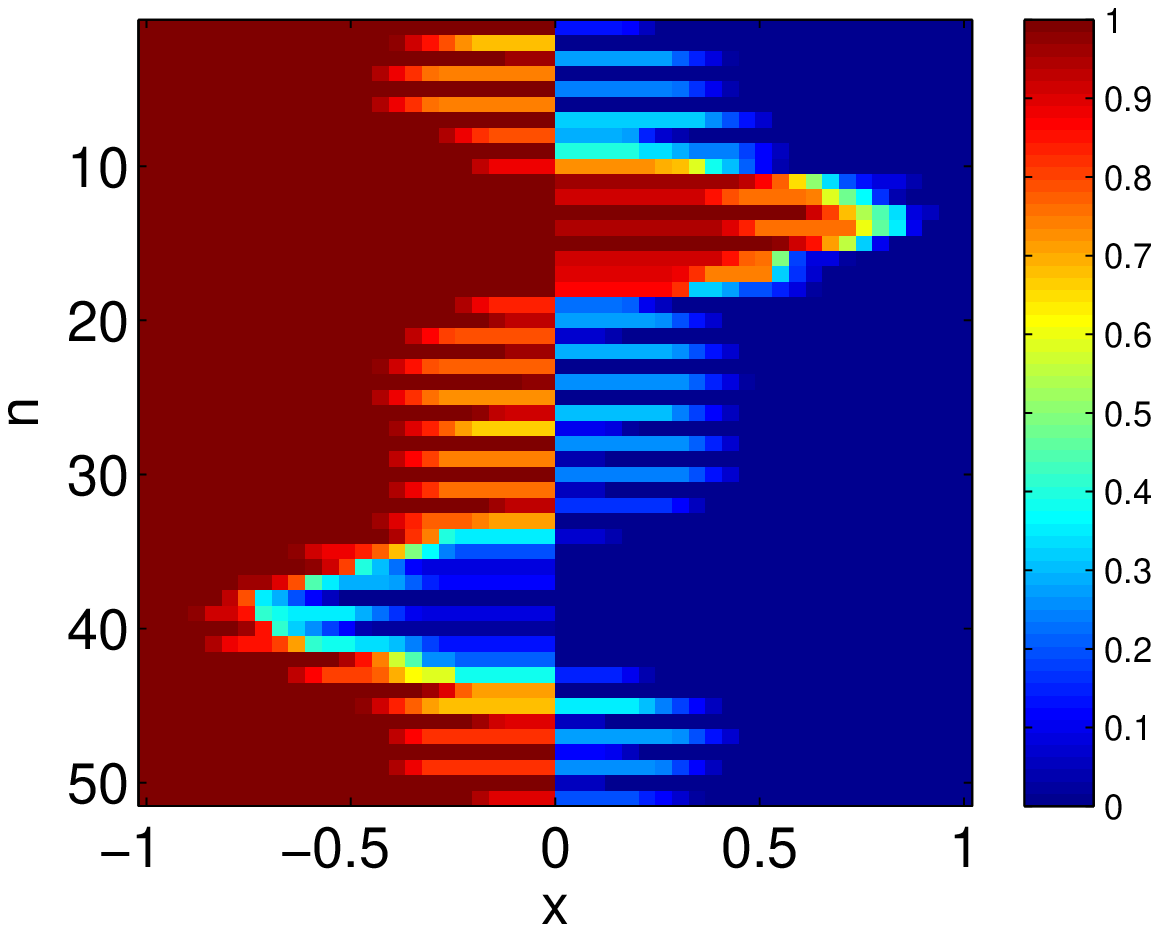}  \hspace*{1cm}
\includegraphics[scale=.5]{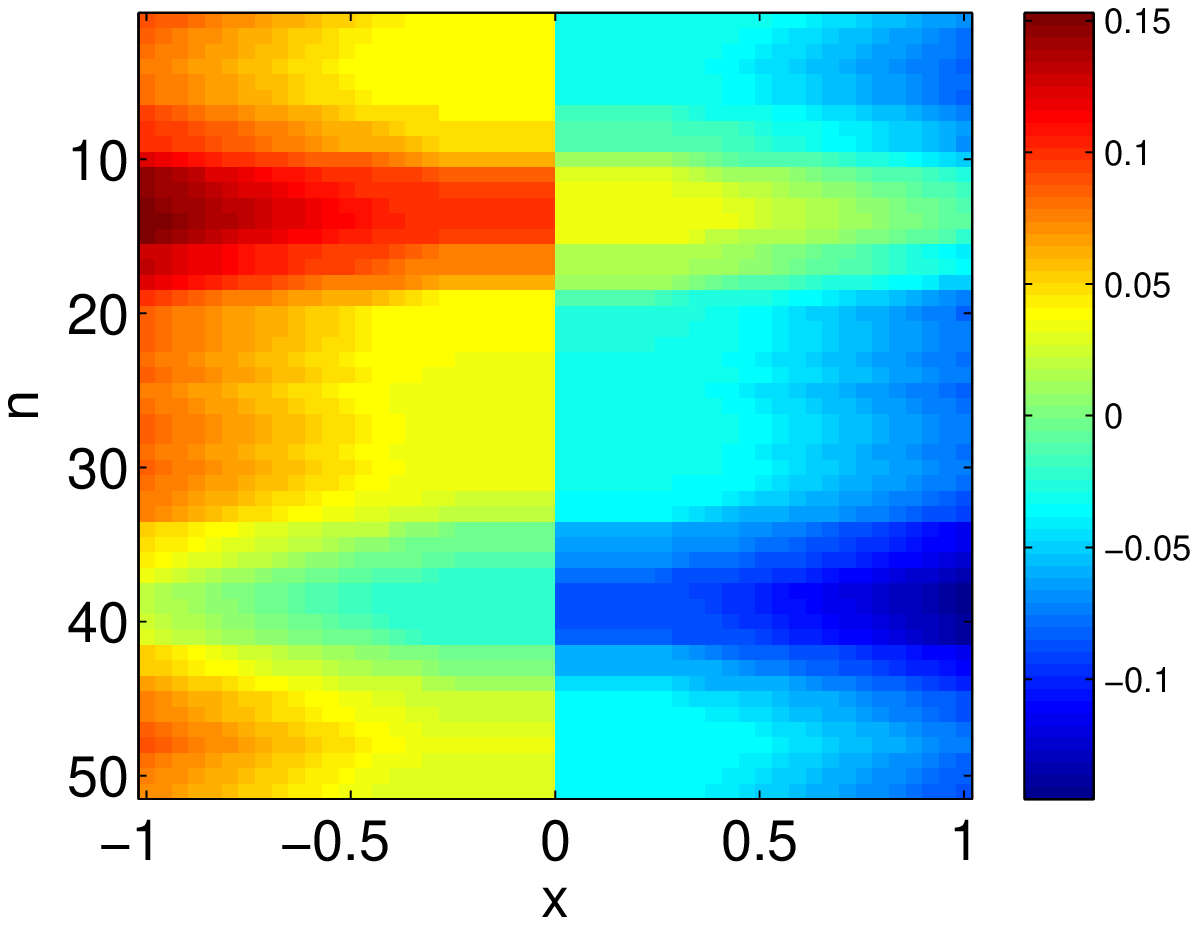}
\end{center}

\vspace*{-.5cm}

\caption{Optimal values of $\rho$: non-smooth dual problem (left); smooth dual problem (right).  }
\label{fig:rhos}
\end{figure}

Several extensions are worth considering:

\BIT
\item[--] \textbf{Relationship with convexity for continuous domains}: for functions defined on a product of sub-intervals, there are two notions of ``simple'' functions, convex and submodular. These two notions are usually disjoint (see \mysec{examples} for examples). We study two interesting relationships: (a) the convex closure we define in \mysec{closure} for convex functions, and (b) the minimization of the sum of a submodular function and a convex function.

Given a convex function $G$ defined on a product $\X$ of intervals, its convex closure defined on $\mathcal{P}^\otimes (\X)$ is defined as:
$$g_{\rm closure}(\mu) = \inf_{ \gamma \in \Pi(\mu) } \int_\X G(x) d \gamma(x).$$
Given Jensen's inequality, it satisfies:
$$
g_{\rm closure}(\mu) \geqslant   \inf_{ \gamma \in \Pi(\mu) }  G\bigg( \int_\X x  d \gamma(x) \bigg)
= G\bigg( \int_\X x  \prod_{i=1}^n d\mu_i(x_i) \bigg),
$$
that is one may obtain a lower bound on the convex closure (note that if perform the convex closure directly on $\X$ and not on $\mathcal{P}\otimes(\X)$, we would obtain $G$). This lower bound on the convex closure may be tight in some situations, for example, when minimizing a   convex function  $G  $ from $\X$ to $\rb$. Indeed, we have:
\BEAS
\inf_{x \in \X}   G(x) 
& = &  \inf_{ \mu \in \mathcal{P}^\otimes(\X) }G_{\rm closure}(\mu) \\
& \geqslant &  \inf_{ \mu \in \mathcal{P}^\otimes(\X) } G\bigg( \int_\X x  \prod_{i=1}^n d\mu_i(x_i) \bigg) = \inf_{x \in \X}   G(x),
\EEAS
and thus the convex closure and its relaxation allow an exact minimization, which allows to extend the result of~\cite{wald2014tightness,ruozzi} from graphical model-based functions to all convex functions.

Moreover, it is worth considering the two notions of submodularity and convexity \emph{simultaneously}, as many common objective functions are the sums of a convex function and a submodular function (e.g., the negative log-likelihood of a Gaussian vector, where the covariance matrix is parameterized as a linear combination of positive definite matrices~\cite{murphy2012machine}). We can thus  consider the minimization of $H(x) + G(x)$, where $H: \X \to \rb$ is submodular and $G: \X \to \rb$ is convex. From the same reasoning as above, we have a natural convex relaxation on our set of measures:
\BEQ
\label{eq:relaxcvx}
\min_{ \mu \in \mathcal{P}^\otimes(\X)} h(\mu) + G \bigg(
\int_\X x \prod_{i=1}^n d\mu_i(x_i)
\bigg).
\EEQ
Another relaxation is to replace $H$ by its convex envelope on $\X$ (which is computable, as one can miminizer $H$ plus linear functions), which we can get by bi-conjugation. We have for $z \in \rb^n$:
\BEAS
H^\ast(z) & = & \sup_{ x \in \X} x^\top z - H(x) = \sup_{ \mu \in \mathcal{P}^\otimes(\X)} z^\top \int_\X x \prod_{i=1}^n d\mu_i(x_i) - h(\mu).
\EEAS
This implies that for any $x \in \X$:
\BEAS
H^{\ast \ast}(x) & = & \sup_{ z \in \rb^n} \inf_{ \mu \in \mathcal{P}^\otimes(\X)}
z^\top x - z^\top \int_\X x \prod_{i=1}^n d\mu_i(x_i) + h(\mu) \\
& = & \inf_{ \mu \in \mathcal{P}^\otimes(\X)}  \sup_{ z \in \rb^n} z^\top \bigg(
x - \int_\X x \prod_{i=1}^n d\mu_i(x_i)  \bigg)
 + h(\mu)\\
& = & \inf_{\mu \in \mathcal{P}^\otimes(\X)} h(\mu) \mbox{ such that } \int_\X y \prod_{i=1}^n d\mu_i(y_i) = x.
\EEAS
This implies that the relaxation in \eq{relaxcvx} is equivalent to the minimization of $H^{\ast\ast}(x)+ G(x)$, which is another natural convex relaxation.
The main added benefit of submodularity is that the convex envelope can be computed when~$H$ is submodular, whereas typically, it is not possible.

\item[--] \textbf{Submodular relaxations}: it is possible to  write most functions as the difference of two submodular functions $H$ and $G$, leading to an exact reformulation in terms of minimizing $h(\mu) - g(\mu)$ with respect to the product of measures $\mu$. This problem is however  non-convex anymore in general, and we could use majorization-minimization procedures~\cite{lange2000optimization}. Alternatively, if a function is a sum of simple functions, we may consider ``submodular relaxations'' of each of the component (a situation comparable with replacing functions by their convex envelopes). Like in the set-function case, a notion of submodular envelope similar to the convex envelope is not available. However, for any function $H$, one can always define a convex extension $\tilde{h}(\mu)$ through an optimal transport problem as $ {h}_{\rm closure} (\mu) = \inf_{ \gamma \in \Pi(\mu) } \int_{\X} H(x) d\gamma(x)$. For functions of two variables, this can often be computed in closed form, and by taking the sum of these relaxations, we exactly get the usual linear programming relaxation (see~\cite{zwp14:mit} and references therein).

\item[--] \textbf{Submodular function maximization}: While this paper has focused primarily on minimization, it is worth exploring if algorithms for the maximization of submodular set-functions can be extended to the general case~\cite{nemhauser1978analysis,feige2007maximizing}, to obtain theoretical guarantees.

\item[--] \textbf{Divide-and-conquer algorithm}: For submodular set-functions, the separable optimization problem defined in \mysec{separable} can be exactly solved by a sequence of at most $n$ submodular optimization problem by a divide-and-conquer procedure~\cite{groenevelt1991two}. It turns out that a similar procedure extends to general submodular functions on discrete domains (see Appendix~\ref{app:dc}).

\item[--] \textbf{Minimizing sums of simple submodular functions}: Many submodular functions turn out to be decomposable as the sum of ``simple'' functions, that is functions for which minimization is particularly simple (see~\cite{komodakis2011mrf} for examples from computer vision). For submodular set-functions, decomposability has been used to derive efficient combinatorial~\cite{Kolmogorov-DAM-2012} or convex-optimization-based~\cite{jegelka2013reflection} algorithms. These could probably be extended to  general submodular functions.

\item[--] \textbf{Active-set methods}: For submodular set-functions, active-set techniques such as the minimum-norm-point algorithm have the potential to find the exact minimizer in finitely many iterations~\cite{wolfe1976finding,fujishige2005submodular}; they are based on the separable optimization problem from \mysec{separable}. Given that our extension simply adds inequality constraints, such an approach could easily be extended.

\item[--] \textbf{Adaptive discretization schemes}: Faced with functions defined on continuous domains, currently the only strategy is to discretize the domains; it would be interesting to study adaptive discretization strategies based on duality gap criteria.

\EIT

 \subsection*{Acknowledgements}
 
 The author thanks Marco Cuturi, Stefanie Jegelka, Gabriel Peyr\'e, Suvrit Sra, and Tomas Werner, for helpful discussions and feedback.  The main part of this work was carried while visiting the Institut des Hautes Etudes Scientifiques (IHES) in Bures-sur-Yvette, supported by the Schlumberger Chair for mathematical sciences.

\bibliography{transport}

\appendix

\section{Proof of miscellaneous results}

\subsection{Submodularity of the Lov\'asz extension of a submodular set-function}

\label{app:subsub}

We consider a submodular set-function $G: \{0,1\}^n \to \rb$ and its Lov\'asz extension $g: \rb^n \to \rb$. In order to show the submodularity of $g$, we simply apply the definition and consider $x \in \rb^n$ and two distinct basis vectors $e_i$ and $e_j$ with infinitesimal positive displacements $a_i$ and $a_j$. If $(i,j)$ belongs to two different level sets of $x$, then, for $a_i$ and $a_j$ small enough,   $g(x+a_i e_i) + g(x+a_j e_j) - g(x)  - g(x+a_i e_i + a_j e_j)$ is equal to zero. If $(i,j)$ belongs to the same level sets, then, by explicitly computing the quantity for $a_i>a_j$ and $a_i<a_j$, it is non-negative (as a consequence of submodularity).

Note that then, the extension has another expression when $\X = [0,1]^n$, as, if $H = g$,
\BEAS
h(\mu) & = & \int_0^1 g( F_{\mu_1}^{-1}(t),\dots,F_{\mu_n}^{-1}(t)) dt \\
& = & \int_0^1 \int_0^1  G(  \{ i, F_{\mu_i}^{-1}(t) \geqslant z \} ) dt  dz
= \int_0^1 \int_0^1  G(  \{ i, F_{\mu_i}(z) \geqslant t \} ) dt  dz \\
&  = &  \int_0^1 g( F_{\mu_1}(z),\dots,F_{\mu_n}(z)) dz,
\EEAS
which provides another proof of submodularity since $g$ is convex.

\subsection{Submodularity of the multi-linear extension of a submodular set-function}
We consider a submodular set-function $G: \{0,1\}^n \to \rb$ and its multi-linear extension~\cite{vondrak2008optimal}, defined as a function $\tilde{g}:[0,1]^n \to \rb$ with
$$\tilde{g}(x_1,\dots,x_n)  = \E_{y_i\sim {\rm Bernoulli}(x_i), i \in \{1,\dots,n\}} G(y),$$
where all Bernoulli random variables $y_i$ are independent. In order to show submodularity, we only need to consider the case $n=2$, for which the function $\tilde{g}$ is quadratic in $x$ and the cross-term is non-positive because of the submodularity of $G$. See more details in~\cite{vondrak2008optimal}. The extension on a product of measures on $[0,1]^n$ does not seem to have the same simple interpretation as for the Lov\'asz extension in Appendix~\ref{app:subsub}.

\label{app:subsubmulti}

\section{Divide-and-conquer algorithm for separable optimization}
\label{app:dc}

We consider the optimization problem studied in \mysec{sepd}:
$$
 \min_{\rho} h_\downarrow(\rho) + \sum_{i=1}^n \sum_{x_i=1}^{k_i-1} a_{i x_i}\big[\rho_i(x_i) \big] \mbox{ such that }
 \rho \in \prod_{i=1}^n \rb^{k_i-1}_{\downarrow} ,
 $$
 whose dual problem is the one of  maximizing $  \sum_{i=1}^n \sum_{x_i=1}^{k_i-1} -a_{i x_i}^\ast(-w_i(x_i))$ such that $w \in \mathcal{W}(H)$. We assume that all functions $a_{ix_i}$ are strictly convex and differentiable with a Fenchel-conjugate with full domain.
 From Theorem~\ref{theo:sepdis}, we know that it is equivalent to a sequence of submodular minimization problems of the form:
$$
\min_{x \in \X} H(x) +  \sum_{i=1}^n \sum_{ y_i  = 1}^{x_i}  a_{i y_i}' (t),
$$
i.e., minimizing $H$ plus a modular function.
We now show that by solving a sequence of such problems (with added restrictions on the domains of the variables), we recover exactly the solution $\rho$ of the original problem. This algorithm directly extends the one from~\cite{groenevelt1991two}, and we follow the exposition from~\cite[Section 9.1]{fot_submod}.  The recursive algorithm is as follows:
\BIT
\item[(1)] Find the unique global maximizer of $\displaystyle - \sum_{i=1}^n \sum_{x_i=1}^{k_i-1} a_{ix_i}^\ast(-w_i(x_i))$
with the single constraint that
$\displaystyle\sum_{i=1}^n \sum_{x_i=1}^{k_i-1} w_i(x_i) = H(k_1,\dots,k_n) - H(0)$. This can be typically obtained in closed form, or through the following one-dimensional problem (obtained by convex duality),
$\displaystyle 
\min_{ t \in \rb} \big[ H(k_1-1,\dots,k_n-1) - H(0) \big] t  + \sum_{i=1}^n \sum_{x_i=1}^{k_i-1} a_{ix_i}(t)
$, with $w_i(x_i)$ then obtained as $w_i(x_i) = - a_{i x_i}'(t)$.

\item[(2)] Find an element $y \in \X$ that minimizes $\displaystyle H(y) - \sum_{i=1}^n \sum_{z_i=1}^{y_i} w_i(z_i)$. This is a submodular function minimization problem.

\item[(3)] If $\displaystyle H(y) - \sum_{i=1}^n \sum_{z_i=1}^{y_i} w_i(z_i) = H(0)$, then exit ($w$ is optimal).
\item[(4)] Maximize $  \sum_{i=1}^n \sum_{x_i=1}^{y_i} -a_{i x_i}^\ast(-w_i(x_i))$ over $w \in \mathcal{W}(H_{x \leqslant y })$, where $H_{x \leqslant y}$ is the restriction of $H$ to $\{x \leqslant y\}$, to obtain $w^{\{x \leqslant y\}}$.
\item[(5)] Maximize $  \sum_{i=1}^n \sum_{x_i=y_i+1}^{k_i-1} -a_{i x_i}^\ast(-w_i(x_i))$ over $w \in \mathcal{W}(H_{x \geqslant y+1 })$, where $H_{x \geqslant y+1}$ is the restriction of $H$ to $\{x \geqslant y+1\}$, to obtain $w^{\{x \geqslant y+1\}}$. 
\item[(6)] Concatenate the two vectors $w^{\{x \geqslant y+1\}}$ and $w^{\{x \leqslant y\}}$ into $w$, which is the optimal solution.
\EIT
 
The proof of correctness is the same as for set-functions~\cite[Section 9.1]{fot_submod}: if the algorithm stops at step (3), then we indeed have the optimal solution because the optimum on a wider set happens to be in $\mathcal{W}(H)$. 
Since  the optimal $w$ in (1) is such that $w_i(x_i) = - a_{i x_i}'(t)$ for all $i$ and $x_i$ and a single real number $t$, the problem solved in (2) corresponds to one of the submodular function minimization problems from Theorem~\ref{theo:sepdis}.
From the statement (d) of that theorem, we know that the optimal primal solution $\rho$ will be such that $\rho_i(x_i) \geqslant t$ for $x_i \leqslant y_i$ and $\rho_i(x_i) \leqslant t$ for $x_i \geqslant y_i+1$. Given the expression of   $h_\downarrow(\rho)$ obtained from the greedy algorithm, if we impose that $\min_{ x_i \leqslant y_i} \rho_i(x_i) \geqslant  \max_{ x_i \geqslant y_i+1} \rho_i(x_i) $, the function $h_\downarrow(\rho)$ is the sum of two independent terms and the minimization of the primal problem can be done into two separate pieces, which correspond exactly to $h_{x \leqslant y }^\downarrow(\rho^{x \leqslant y })$
and $h_{x \geqslant y+1 }^\downarrow( \rho^{x \geqslant y+1 })$. If we minimize the two problems separately, like done in steps (4) and (5), we thus only need to check that the decoupled solutions indeed have the correct ordering, that is
$\min_{ x_i \leqslant y_i} \rho_i(x_i) \geqslant  \max_{ x_i \geqslant y_i+1} \rho_i(x_i) $, which is true because of 
Theorem~\ref{theo:sepdis} applied to the two decoupled problems with the same value of $t$.

\end{document}